\documentclass{article}

\usepackage{iclr2026_conference,times}

\newcommand{\R}{\mathbb{R}}
\newcommand{\N}{\mathbb{N}}

\newcommand{\C}{\mathbb{C}}

\newcommand{\eps}{\varepsilon}
\newcommand{\eqdef}{ := }

\newcommand{\Tr}{\mathsf{Tr}} 

\newcommand{\dd}{\mathrm{d}}

\newcommand{\cB}{\mathcal{B}}
\newcommand{\cC}{\mathcal{C}}
\newcommand{\cD}{\mathcal{D}}

\newcommand{\F}{\mathcal{F}}

\newcommand{\cH}{\mathcal{H}}

\newcommand{\cK}{\mathcal{K}}
\newcommand{\cL}{\mathcal{L}}

\newcommand{\cN}{\mathcal{N}}
\newcommand{\cO}{\mathcal{O}}

\newcommand{\cR}{\mathcal{R}}
\newcommand{\cS}{\mathcal{S}}

\newcommand{\cU}{\mathcal{U}}
\newcommand{\cV}{\mathcal{V}}

\newcommand{\cY}{\mathcal{Y}}

\newcommand{\rt}{\tau}

\newcommand{\E}{\mathbb{E}}

\renewcommand{\P}{\mathbb{P}}

\newtheorem{definition}{Definition}
\newtheorem{theorem}{Theorem}
\newtheorem{lemma}{Lemma}

\newenvironment{proof}{\par\noindent\textit{Proof.}\ }{\hfill$\square$\par}

\usepackage[utf8]{inputenc} 
\usepackage[T1]{fontenc}    
\usepackage{hyperref}       
\usepackage{url}            
\usepackage{booktabs}       
\usepackage{amsfonts}       
\usepackage{nicefrac}       
\usepackage{microtype}      
\usepackage[dvipsnames]{xcolor}         
\usepackage{enumitem} 
\usepackage{algorithm}
\usepackage{algorithmic}
\usepackage{amsmath}
\usepackage{graphicx}
\usepackage{multirow}
\usepackage{wrapfig}
\usepackage{cleveref}

\newtheorem{remark}{Remark}

\title{A Schrödinger Eigenfunction Method for Long-Horizon Stochastic Optimal Control}

\author{%
    Louis Claeys \\
	ETH Zürich, \\
    Department of Mathematics \\
	\texttt{louis.claeys@live.be}
    \And
	Artur Goldman \\
	ETH Zürich, ETH AI Center\\
    Institute for Machine Learning \\
	\texttt{artur.goldman@ai.ethz.ch} $\ \ $
    \And
	Zebang Shen \\
	ETH Zürich, \\
    Institute for Machine Learning\\
	\texttt{zebang.shen@inf.ethz.ch}
    \And
	Niao He \\
	ETH Zürich, \\
    Institute for Machine Learning, \\
	\texttt{niao.he@inf.ethz.ch}
}

\iclrfinalcopy 

\begin{document}

	\maketitle
	
	\begin{abstract}
	   High-dimensional stochastic optimal control (SOC) becomes harder with longer planning horizons: existing methods scale linearly in the horizon $T$, with performance often deteriorating exponentially. We overcome these limitations for a subclass of linearly-solvable SOC problems—those whose uncontrolled drift is the gradient of a potential. In this setting, the Hamilton-Jacobi-Bellman equation reduces to a linear PDE governed by an operator $\mathcal{L}$. We prove that, under the gradient drift assumption, $\cL$ is unitarily equivalent to a Schrödinger operator $\mathcal{S} = -\Delta + \mathcal{V}$ with purely discrete spectrum, allowing the long-horizon control to be efficiently described via the eigensystem of $\mathcal{L}$. This connection provides two key results: first, for a symmetric linear-quadratic regulator (LQR), $\mathcal{S}$ matches the Hamiltonian of a quantum harmonic oscillator, whose closed-form eigensystem yields an analytic solution to the symmetric LQR with \emph{arbitrary} terminal cost. Second, in a more general setting, we learn the eigensystem of $\mathcal{L}$ using neural networks. We identify implicit reweighting issues with existing eigenfunction learning losses that degrade performance in control tasks, and propose a novel loss function to mitigate this. We evaluate our method on several long-horizon benchmarks, achieving an order-of-magnitude improvement in control accuracy compared to state-of-the-art methods, while reducing memory usage and runtime complexity from $\mathcal{O}(Td)$ to $\mathcal{O}(d)$.
	\end{abstract}
	\section{Introduction}
	Stochastic optimal control (SOC) concerns the problem of directing a stochastic system, typically modeled by a stochastic differential equation (SDE), to minimize an expected total cost. 
    SOC has found applications in various domains, e.g. stochastic filtering \citep{mitterFilteringStochasticControl1996}, rare event simulation in molecular dynamics \citep{hartmannEfficientRareEvent2012,hartmannCharacterizationRareEvents2014}, robotics \citep{gorodetsky2018high} and finance \citep{phamContinuoustimeStochasticControl2009}.

    Built on the principle of dynamic programming, the global optimality condition of SOC can be expressed by the Hamilton-Jacobi-Bellman (HJB) equation. 
    In this paper, we focus on the \emph{affine control} setting commonly considered in the literature \citep{flemingDeterministicStochasticOptimal1975,kappenLinearTheoryControl2005, fleming2006controlled, yongStochasticControls1999, domingo-enrichStochasticOptimalControl2024c, nuskenSolvingHighdimensionalHamilton2021, carius2022constrained, holdijkStochasticOptimalControl2023}, where the control affects the state of the system linearly. 
    This setting is of interest since the optimal control will exactly match the gradient of the value function of SOC problem and hence the corresponding HJB equation can be drastically simplified.
    
    A canonical special case of this affine‐control framework is the linear–quadratic regulator (LQR), in which the uncontrolled dynamics follow an Ornstein–Uhlenbeck linear SDE and both the running cost and the terminal cost are quadratic.
    One can show that in the LQR setting the value function retains a quadratic form—indeed, it is quadratic at terminal time because the terminal cost is quadratic—and that the optimal feedback control is linear w.r.t. the state.  Consequently, the associated SOC problem admits an explicit solution via the finite‐dimensional matrix Riccati differential equation \citep{lecturenotessoc}

    To obtain the optimal control in more general settings requires numerical procedures. For low-dimensional problems, classical grid-based PDE solvers may be used, but these suffer from the curse of dimensionality. This has led to several works proposing the use of neural networks (NN) to solve the HJB equation in more complex high-dimensional settings, either through a forward-backward stochastic differential equation (FBSDE) approach \citep{hanSolvingHighdimensionalPartial2018, jiSolvingStochasticOptimal2022, anderssonConvergenceRobustDeep2023, beckMachineLearningApproximation2019} or so-called iterative diffusion optimization (IDO) methods, which sample controlled trajectories through simulation and update the NN parameter using stochastic gradients from automatic differentiation \citep{nuskenSolvingHighdimensionalHamilton2021, domingo-enrichStochasticOptimalControl2024c, domingo-enrichAdjointMatchingFinetuning2024}. A more comprehensive review on existing methods for short-horizon SOC can be found in \autoref{app:lossfuncs}. 
    
    \begin{wrapfigure}{r}{0.38\textwidth}
      \vspace{-12pt}
      
      \centering
      \includegraphics[width=0.35\textwidth]{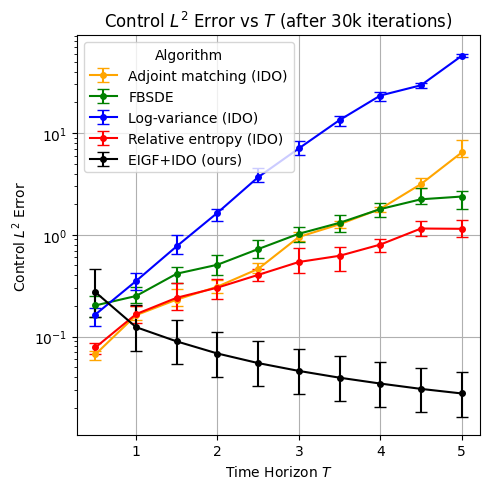}
      \caption{Performance degradation as time horizon \(T\) increases for different methods (see \autoref{app:experiments} for details).}
      \label{fig:timehorizon}
      \vspace{-10pt}
    \end{wrapfigure}
    While these methods have proven successful, their performance suffers as the time horizon $T$ grows. Both the memory requirement and per-iteration runtime increase at least linearly in $T$. Additionally, it holds that error estimates for the deep FBSDE method worsen as $T$ increases \citep[Theorem 4]{hanConvergenceDeepBSDE2020}, and for IDO methods using importance sampling the weight variance may increase exponentially in $T$ \citep{liuBreakingCurseHorizon2018}. These limitations were observed empirically in \cite{assabumrungratErrorAnalysisOption2024, domingo-enrichStochasticOptimalControl2024c}, and were reproduced in our experiments (see \autoref{fig:timehorizon}).  
    \paragraph{Linearly-solvable HJB.} The HJB equation is in general nonlinear.  
    However, in the special case where the system’s diffusion coefficient matches the affine‐control mapping, the Cole–Hopf transform can eliminate the nonlinearity \citep{evans2022partial}. 
    Specifically, let $V(x, t)$ denote the value function of the SOC problem and define a new function $\psi := \exp(-V)$. 
    Under this transformation, the HJB is equivalently rewritten as the following \emph{linear PDE}
    \vspace{-1mm}
    \begin{equation}
		\label{eq:abstractpde}
		\partial_t \psi(x, t) =  \cL \psi(x, t), \quad \psi(x, T) = \psi_T(x)
	\end{equation} for some \emph{linear} operator $\cL$.
    Moreover, the optimal control $u$, which exactly matches the gradient of the value function $-\partial_x V$, can be obtained as $u^* = \partial_x \log \psi$ \citep{kappenLinearTheoryControl2005}.

    \begin{wrapfigure}{r}{0.39\textwidth}
      \vspace{-14pt}
      \centering
      \includegraphics[width=0.37\textwidth]{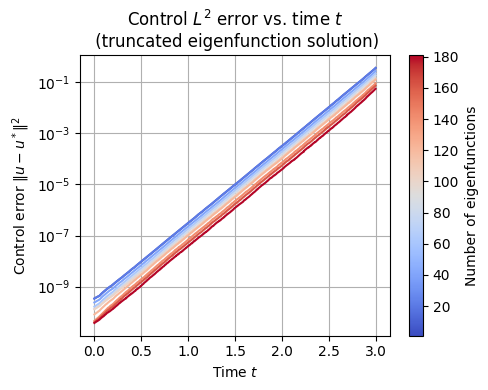}
      \caption{Diminishing returns from increasing the number of eigenfunctions for an LQR in $d=20$ dimensions.}
      \label{fig:exact_sol}
      \vspace{-10pt}
    \end{wrapfigure}

    Working with a linear PDE brings several clear benefits over a nonlinear one, such as a simplified analysis --- questions of well‑posedness and solution regularity of the PDE become much more tractable --- but more importantly algorithmic insight: One can borrow ideas from the finite‑dimensional linear ODE $\dot u(t) + A\,u(t) = 0,$ where $A$ is a real symmetric matrix.  This ODE has a closed‑form solution $u(t) = e^{-At}\,u(0),$ and because the matrix exponential $e^{-At}$ acts simply on $A$’s eigenvalues, expanding $u(0)$ in the corresponding eigenvectors yields an efficient numerical scheme.
    
    While the domain of the operator $\cL$ is infinite‑dimensional (acting on functions rather than finite vectors), the same exponential‑integrator principle applies (\autoref{thm:eigfsolution}), and we can write $\psi_t = e^{(t-T) \cL}\psi_T$, where $\left(e^{s\cL}\right)_{s\geq 0}$ is a semigroup \citep[Chapter 12]{IntroductionPartialDifferential2004}. Of course, carrying it out in an infinite‑dimensional setting introduces additional technical challenges that must be carefully addressed. In particular, expanding $\psi_T$ as a series of eigenfunctions requires $\cL$ to possess a \emph{discrete spectrum}, i.e. one can find an orthonormal basis of eigenfunctions $(\phi_i)_{i\in\N}$ and corresponding eigenvalues $\lambda_0 < \lambda_1\leq \cdots$ such that $\cL \phi_i = \lambda_i \phi_i$ for all $i \in \N$.
    Under this assumption, 
    the optimal control can be informally written as 
     \vspace{-2mm}
    \begin{equation}
        \label{eq:simple_eigfcontrol}
        u^*(x,t) = \partial_x \log \phi_0(x) + \cO\left(e^{-(\lambda_1-\lambda_0)(T-t)}\right) \quad \text{ for any fixed } t \text{ as } T \to \infty,
    \end{equation}
    
    \vspace{-3mm}where $\cO$ hides eigenfunctions $\phi_i$ with $i\geq 1$. A precise statement can be found in \autoref{thm:eigfcontrol}.
    To turn the above formula into a practical, long-horizon SOC algorithm, we need (1) \textbf{Spectral verification.} Prove that $\cL$ indeed has a discrete spectrum; (2) \textbf{Eigenfunction identification.} Compute the spatial derivative of the principal eigenfunction $\phi_0$. 

    \paragraph{Our approach: Reduction to Schrödinger operator.} 
    In this paper, to guarantee the spectral verification, we assume that the drift of the dynamics is \emph{the gradient of a potential}. Such problems of controlling a diffusion process with gradient drift show up in overdamped molecular dynamics \citep{schutteOptimalControlMolecular2012},  mean-field games \citep{bakshi2019mean, groverMeanfieldGameModel2018}, the control of particles with interaction potentials \citep{carrilloMeanFieldOptimal2020, totzeckSpaceMappingbasedReceding2020}, as well as social models for opinion formation \citep{castellanoStatisticalPhysicsSocial2009, albiMeanFieldControl2017}. 

    
    In this setting, the operator $\mathcal{L}$ in \eqref{eq:abstractpde} is unitarily equivalent to the Schrödinger operator $\tilde{\mathcal{L}} = -\Delta + \cV$, where $\Delta$ is the Laplacian and $\cV$ is an effective potential determined by the original drift and running cost.  Because $\tilde{\mathcal{L}}$ is known to have a purely discrete spectrum on $L^2$ and unitary equivalence preserves the spectral properties, the operator $\mathcal{L}$ in \eqref{eq:abstractpde} likewise enjoys a discrete spectrum.
    
    This result forms the basis of our new framework for long-horizon SOC, where the problem is reformulated as learning the eigensystem of a Schrödinger operator. Indeed, \eqref{eq:simple_eigfcontrol} shows that the top eigenfunction $\phi_0$ determines the long-term control, with corrections decaying exponentially with rate $\lambda_1-\lambda_0$. 
    We address the problem of eigenfunction identification in the following two scenarios:
    \begin{itemize}[leftmargin=*]
        \item \textbf{Closed-form solution for LQR with non-quadratic terminal cost.} 
        When the drift is linear with a \emph{symmetric} coefficient matrix and the running cost is quadratic in the state, the resulting Schrödinger operator $\tilde{\mathcal{L}}$ coincides with the Hamiltonian of the quantum harmonic oscillator.
        Its eigenvalues and eigenfunctions are known explicitly (see Lemma \autoref{lemma:dharm} or \cite{griffithsIntroductionQuantumMechanics2018}), i.e. $\{\lambda_i, \phi_i\}_{i\in\mathbb{N}}$ are available in closed form.  Consequently, our framework yields a fully explicit expression for the corresponding SOC. This removes the requirement of quadratic terminal cost in the the classical LQR solution.
        \item \textbf{Neural network-based approach for general gradient drift.}
        For general gradient–drift dynamics, we introduce a hybrid neural-network method to approximate the optimal control efficiently: Rather than attempting to learn the full spectrum of $\mathcal{L}$—which is prohibitively expensive and yields rapidly diminishing returns (\cref{fig:exact_sol})—we exploit the exponential decay of the higher modes w.r.t. $T - t$ in \cref{eq:simple_eigfcontrol}. Concretely, whenever $T - t$ exceeds a modest threshold (in our experiments $T - t \ge 1$), it suffices to approximate the control using only the top eigenfunction $\phi_0$. For the remaining period ($t$ very close to $T$), we switch to established FBSDE/IDO solvers to handle the short-horizon SOC.
        
        We propose a novel deep learning strategy for the task of learning the eigenfunction, tailored to SOC. While such a task has been extensively studied in the literature, previous approaches either optimize a variational Ritz objective \citep{eDeepRitzMethod2018, zhangSolvingEigenvaluePDEs2022, cabannesGalerkinMethodBeats2024} or minimize the residual norm $\|\mathcal{L}\psi - \lambda\psi\|^2$ \citep{jin2022physics, zhangDeepLearningMethod2024, nuskenInterpolatingBSDEsPINNs2023}. However, these losses implicitly reweight spatial regions, causing them to fail to learn the control in regions where the value function $V$ is large---the most crucial areas. To eliminate this bias, we introduce a \emph{relative} eigenfunction loss, $\|\cL \psi / \psi - \lambda\|^2$, which removes the undesired weighting and robustly recovers the dominant eigenpair needed for control synthesis.

        \end{itemize}
   
    Our contributions are summarized as follows:
	\begin{itemize}[leftmargin=*]
		\item We provide a new perspective on finite-horizon gradient-drift SOC problems, linking their solution to the eigensystem of a Schrödinger operator (\autoref{thm:eigfcontrol}). This yields a previously unreported closed-form solution to the symmetric LQR with arbitrary terminal cost (\autoref{thm:symmetriclqr}). 
		\item With this framework, we introduce a new procedure for solving SOC problems over long horizons by learning the operator's eigenfunctions with neural networks. We show that existing eigenfunction solvers can be ill-suited for this task due to an implicit reweighting in the used loss, and propose a new loss function to remedy this.
		\item We perform experiments in different settings to evaluate the proposed method against state-of-the-art SOC solvers, showing roughly an order of magnitude improvement in control $L^2$ error on several high-dimensional $(d=20)$ long-horizon problems.
	\end{itemize}

\section{Preliminaries} \label{sec:framework}
	\subsection{Stochastic optimal control}
	Fix a filtered probability space $(\Omega, \F, (\F_t)_{t\geq 0}, \P)$, and denote $(W_t)_{t\geq 0}$ a Brownian motion in this space. Let $(X_t^u)_{t\geq 0}$ denote the random variable taking values in $\R^d$ defined through the SDE
	\begin{equation}
		\label{eq:socsde}
		\dd X_t^u = \left(b(X_t^u) + \sigma u(X_t^u,t)\right)\dd t + \sqrt{\beta^{-1}}\sigma \dd W_t, \qquad X_0^u \sim p_0
	\end{equation}
	where $u: \R^d \times [0,T] \to \R^d$ is the control, $b: \R^d \to \R^d$ is the base drift, $\sigma \in \R^{d\times d}$ is the diffusion coefficient (assumed invertible) and $\beta \in \R^+_0$ is an inverse temperature characterizing the noise level. Note that we assume the drift and noise to be time-independent, in contrast to \cite{nuskenSolvingHighdimensionalHamilton2021,domingo-enrichStochasticOptimalControl2024c}. Under some regularity conditions on the coefficients and control $u$ described in \autoref{app:technical}, the SDE \eqref{eq:socsde} has a unique strong solution. In stochastic optimal control, we view the dynamics $(b,\sigma,\beta)$ as given and consider the problem of finding a control $u$ which minimizes the cost functional
	\begin{equation}
		\label{eq:soccost}
		 J(u; x,t) = \E\left[\int_t^T \left(\frac{1}{2}\|u(X_t^u,t)\|^2 + f(X_t^u)\right)\dd t + g(X_T^u)\ \vline\ X_t = x\right]
	\end{equation}
	where $f: \R^d \to \R$ denotes the running cost and $g: \R^d \to \R$ denotes the terminal cost. We denote this optimal control as $u^*(x,t) = {\arg\min}_{u\in\cU} J(u;x,t)$, with $\cU$ the set of admissible controls. To analyze this problem, one defines the \emph{value function} $V$, which is defined as the minimum achievable cost when starting from $x$ at time $t$,
	\begin{equation}
		\label{eq:valuefunction}
		V(x,t) := \inf_{u\in \cU} J(u;x,t).
	\end{equation}
	In this case, the optimal control $u^*$ that minimizes the objective \eqref{eq:soccost} is obtained from the value function through the relation $u^* = -\sigma^T\nabla V$, as described in \cite[Theorem 2.2]{nuskenSolvingHighdimensionalHamilton2021}.
	\paragraph{Hamilton-Jacobi-Bellman equation.} A well-known fundamental result is that when the value function $V$ is sufficiently regular, it satisfies the following partial differential equation, called the Hamilton-Jacobi-Bellman (HJB) equation \citep{flemingDeterministicStochasticOptimal1975}:
	\begin{align}
		\label{eq:hjb}
		\partial_t V + \cK V &= 0\quad \text{ in } \R^d \times [0,T], \quad V(\cdot,T) = g\quad \text{ on } \R^d,\\
        \label{eq:hjb_operator}
        \text{ where } \cK V &= \frac{1}{2\beta}\Tr(\sigma\sigma^T \nabla^2V) + b^T\nabla V - \frac{1}{2}\|\sigma^T\nabla V\|^2 + f.
	\end{align}
	The so-called \emph{verification theorem} states (in some sense) the converse: if a function $V$ satisfying the above PDE is sufficiently regular, it coincides with the value function \eqref{eq:valuefunction} corresponding to \eqref{eq:socsde}-\eqref{eq:soccost}, see \cite[Section VI.4]{flemingDeterministicStochasticOptimal1975} and \cite[Sec. 2.3] {pavliotisStochasticProcessesApplications2014}.
	\paragraph{A linear PDE reformulation} Although the HJB equation \eqref{eq:hjb} is nonlinear in general, it was shown in \cite{kappenLinearTheoryControl2005} that for a specific class of optimal control problems (which includes the formulation \eqref{eq:socsde}-\eqref{eq:soccost}), a suitable transformation allows for a linear reformulation of \eqref{eq:hjb}. More specifically, when parametrizing $V(x,t) = -\beta^{-1} \log \psi(x,\frac{1}{2\beta}(T-t))$, the nonlinear terms cancel, and \eqref{eq:hjb} becomes
	\begin{equation}
		\label{eq:linearhjb}
		\begin{cases} \partial_{\rt}\psi + \cL\psi = 0,\\
			\psi(\cdot, 0) = \psi_0,\end{cases}\text{ where } \cL\psi = -  \Tr(\sigma\sigma^T\nabla^2\psi) - 2\beta b^T \nabla\psi + 2\beta^2f\cdot\psi, \quad \psi_0 = \exp(-\beta g), 
	\end{equation}
	and we have introduced the variable $\rt = (2\beta)^{-1}(T-t)$. This is precisely the abstract form \eqref{eq:abstractpde}, but with a time reversal. For more details on this result, we refer to \autoref{app:derivations}. To simplify the presentation, we will often assume w.l.o.g. that $\sigma = I$ (see \autoref{app:technical}), so that $\Tr(\sigma\sigma^T\nabla^2) = \Delta$.

	\subsection{Eigenfunction solutions}
	Consider a Hilbert space $\cH$ with inner product $\langle \cdot, \cdot \rangle$, and a linear operator $\cL: D(\cL) \to \cH$ defined on a dense subspace $D(\cL) \subset \cH$. Then we have the following standard definition:
	\begin{definition}
		An element $\phi \in \cH$ with $\phi \neq 0$ is an \emph{eigenfunction} of $\cL$ if there exists a $\lambda \in \C$ such that $\cL \phi = \lambda \phi$.
		The value $\lambda$ is called an \emph{eigenvalue} of $\cL$ (corresponding to $\phi$), and the dimension of the null space of $\cL -\lambda I$ is called the \emph{multiplicity} of $\lambda$.
	\end{definition}
    In a finite-dimensional setting, the study of a linear operator $A$ is drastically simplified when we have access to an orthonormal basis of eigenvectors. Similarly, some operators admit a countable set of eigenfunctions $(\phi_i)_{i\in\N}$ which forms an orthonormal basis of the Hilbert space $\cH$. 
 When such an eigensystem exists, the following theorem proven in \autoref{app:derivations} gives a solution to the PDE \eqref{eq:linearhjb} in terms of the eigensystem. A rigorous connection between this solution to \eqref{eq:linearhjb} and solutions to \eqref{eq:hjb} is explored in \autoref{app:derivations}.
    \begin{theorem}[Restatement of Theorem VIII.7 in \citep{reed1980methods}] \label{thm:eigfsolution}
        Let $\cL$ be an essentially self-adjoint\footnote{An operator is called essentially self-adjoint if its closure is self-adjoint. See \cite{reed1980methods} and \cite{reed1975ii} for more details.}, densely defined operator on $\cH$ which admits an orthonormal basis of eigenfunctions $(\phi_i, \lambda_i)_{i\in\N}$. Assume further that the $\lambda_i$ are bounded from below (write $\lambda_0 \leq \lambda_1 \leq \ldots$) and do not have a finite accumulation point. Then a solution to the abstract evolution problem in \eqref{eq:linearhjb} is given by
        \begin{equation}
            \label{eq:eigfsol}
            \psi(\rt) = \sum_{i\in\N} e^{-\lambda_i \rt} \langle \phi_i, \psi_0\rangle \phi_i.
        \end{equation}
    \end{theorem}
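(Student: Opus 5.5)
We will prove the statement by working with the closure $\bar\cL$, which is self-adjoint by assumption, and checking directly that the series \eqref{eq:eigfsol} defines a function $\rt \mapsto \psi(\rt)$ with the required properties. These are: $\psi(\rt)\in D(\bar\cL)$ for $\rt>0$, $\psi$ is differentiable in $\cH$, $\partial_\rt\psi + \bar\cL\psi = 0$, and $\psi(\rt)\to\psi_0$ as $\rt\downarrow 0$.

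The first step is to identify $\bar\cL$ in the eigenbasis. Write $c_i = \langle \phi_i,\psi_0\rangle$, so that $\sum_i |c_i|^2 = \|\psi_0\|^2<\infty$ by Parseval. Define $A$ on
\[
D(A)=\Bigl\{\sum_i a_i\phi_i : \sum_i \lambda_i^2|a_i|^2<\infty\Bigr\}, \qquad A\sum_i a_i\phi_i = \sum_i\lambda_i a_i\phi_i .
\]
This $A$ is self-adjoint; it is the multiplication operator by $(\lambda_i)$ on $\ell^2$. We must show $\bar\cL = A$. Each $\phi_i$ lies in the domain of the self-adjoint $\bar\cL$ with $\bar\cL\phi_i=\lambda_i\phi_i$. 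For $v\in D(\bar\cL)$, self-adjointness gives $\langle \phi_i,\bar\cL v\rangle = \lambda_i\langle\phi_i,v\rangle$. Expanding $\bar\cL v$ in the basis then yields $\sum\lambda_i^2|\langle\phi_i,v\rangle|^2 = \|\bar\cL v\|^2<\infty$ and $\bar\cL v = Av$. Hence $\bar\cL\subset A$. A self-adjoint operator has no proper self-adjoint extension, so $\bar\cL = A$. I expect this domain identification to be the main obstacle, since the rest is bookkeeping. The key point is using self-adjointness (maximality) rather than mere symmetry. I would present it via $\bar\cL\subset A$ together with $A=A^*\subset \bar\cL^*=\bar\cL$.

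The second step handles convergence and the domain. The $\lambda_i$ are bounded below by $\lambda_0$ and increase without bound, since there is no finite accumulation point. Hence $|e^{-\lambda_i\rt}c_i|\le e^{-\lambda_0\rt}|c_i|$, so the series converges in $\cH$ for all $\rt\ge0$ and $\psi(0)=\psi_0$. For $\rt>0$ we have $\sup_i |\lambda_i| e^{-\lambda_i\rt}<\infty$, because $\lambda e^{-\lambda\rt}$ is bounded on $[\lambda_0,\infty)$. Therefore $\sum_i\lambda_i^2 e^{-2\lambda_i\rt}|c_i|^2<\infty$, so $\psi(\rt)\in D(A)$ and $A\psi(\rt)=\sum_i\lambda_i e^{-\lambda_i\rt}c_i\phi_i$.

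The third step establishes the equation and the initial condition. Fix $\rt>0$ and a small $h$ with $\rt+h>0$. Then
\[
\Bigl\|\tfrac{\psi(\rt+h)-\psi(\rt)}{h} + A\psi(\rt)\Bigr\|^2 = \sum_i \Bigl|\tfrac{e^{-\lambda_i h}-1}{h}+\lambda_i\Bigr|^2 e^{-2\lambda_i\rt}|c_i|^2 .
\]
Each term tends to zero as $h\to0$. For $|h|\le\rt/2$ the terms are dominated by $C\,|c_i|^2$ uniformly, using the mean value theorem and the boundedness of $\lambda^2 e^{-\lambda\rt/2}$ on $[\lambda_0,\infty)$. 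Dominated convergence on $\ell^2$ then gives $\partial_\rt\psi=-A\psi=-\bar\cL\psi$. Continuity at $\rt=0$ follows the same way: $\|\psi(\rt)-\psi_0\|^2=\sum_i|e^{-\lambda_i\rt}-1|^2|c_i|^2\to0$, again by dominated convergence with bound $(e^{|\lambda_0|}+1)^2|c_i|^2$ for $\rt\le1$. Finally, we remark that $\psi(\rt)=e^{-\rt\bar\cL}\psi_0$ in the sense of the functional calculus, which matches Theorem VIII.7 of \citep{reed1980methods}.
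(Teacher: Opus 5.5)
Your proof is correct, but it takes a more elementary route than the paper. The paper never works in coordinates. It invokes the functional calculus (Reed--Simon VIII.5) to define $T(\rt)=e^{-\rt\cL}$ and asserts that, for an operator with an orthonormal eigenbasis, this is given by the eigen-series. It then cites Theorem VIII.7 ($T(0)=I$ and $\frac{d}{dh}T(h)\psi|_{h=0}=-\cL\psi$ for $\psi\in D(\cL)$) and combines these via the semigroup property to obtain $\frac{d}{d\rt}\psi(\rt)=-\cL\psi(\rt)$; the passage from $\bar\cL$ back to $\cL$ is a one-line remark.

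You bypass the spectral theorem entirely. Your Step 1 ($\bar\cL\subset A$ by symmetry, then $A=A^*\subset\bar\cL^*=\bar\cL$) is exactly the justification for the paper's unproved claim that the functional calculus acts diagonally on the eigenbasis. Your Steps 2--3 establish by hand two things the paper's argument uses only implicitly:
\begin{enumerate}
\item $\psi(\rt)\in D(\bar\cL)$ for $\rt>0$, which is needed to apply VIII.7(b) at the vector $T(\rt)\psi_0$;
\item the initial condition, in the sense of strong continuity at $0$, which is the right notion since $\psi_0=e^{-\beta g}$ need not lie in the domain.
\end{enumerate}
You also correctly state the equation for $\bar\cL$ rather than $\cL$. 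Since $\psi(\rt)$ is an infinite combination of eigenfunctions, it generally lies only in $D(\bar\cL)$, a point the paper's closing remark passes over.

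The paper's route buys brevity and places the solution in standard semigroup theory; the semigroup argument itself does not need a pure point spectrum. Yours is self-contained and makes the domain issues explicit. Both arguments use only the lower bound on the $\lambda_i$, not the absence of a finite accumulation point.
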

    While we originally aimed to solve the HJB equation (\ref{eq:hjb}), in this work, we solve the linear PDE (\ref{eq:linearhjb}) as surrogate.
    In \Cref{appendix_semigroup_viscosity_solution}, we show that under standard growth and regularity assumptions, the above semigroup solution coincides with the unique viscosity solution of the original HJB equation, up to an inverse Hope-Cole transformation. See \Cref{thm_equivalence_semigroup_viscosity} for a formal statement.

    \section{Our framework}
	\subsection{Spectral properties of the Schrödinger operator}
	In order to apply \autoref{thm:eigfsolution}, we must establish conditions under which the operator $\cL$ in \eqref{eq:linearhjb} satisfies the required properties. In order for $\cL$ to be symmetric, we assume
	\begin{enumerate}[label=\textbf{(A\arabic*)}]
		\item \label{ass:gradient_drift} The drift $b$ in \eqref{eq:socsde} is described by a gradient field: $b(x) = -\nabla E(x)$.
	\end{enumerate}
    
    Define the measure $\mu$ on $\mathbb{R}^d$ with density $\mu(x) = \exp\left(-2\beta E(x)\right)$, and consider the weighted Lebesgue space $L^2(\mu)$. Note that we do \emph{not} require $\mu$ to be a finite measure. Under the assumption \ref{ass:gradient_drift}, the operator appearing in \eqref{eq:linearhjb} becomes the following operator on $L^2(\mu)$:
	\begin{equation}
		\label{eq:operator}
        \cL: D(\cL) \subset L^2(\mu) \to L^2(\mu): \psi \mapsto \cL\psi =  -\Delta \psi+ 2\beta\langle\nabla E, \nabla \psi\rangle + 2\beta^2f\psi.
	\end{equation}
	Under mild regularity conditions, we can further show that $\cL$ is essentially self-adjoint (\autoref{app:derivations}). Furthermore, it can be shown (\autoref{app:derivations}) that
	\begin{equation}
		\label{eq:schrodinger}
		U\cL U^{-1} = -\Delta + \beta^2 \|\nabla E\|^2 - \beta\Delta E+2\beta^2 f
	\end{equation}
    where $U: L^2(\mu) \to L^2(\R^d): \psi \mapsto e^{-\beta E}\psi$ is a unitary operator, so that $\cL$ is unitarily equivalent to the well-known Schrödinger operator $\cS = -\Delta + \cV$ on $L^2(\R^d)$, which forms the cornerstone of the mathematical formulation of quantum physics. Its properties have been studied to great extent \citep{reed1978iv}, allowing us to invoke well-known results on the properties of the Schrödinger operator to study the behavior of $\cL$. In particular, the following assumption on $E$ and $f$ is enough to guarantee that $\cL$ satisfies all the desired properties (see \autoref{app:derivations}).
	\begin{enumerate}[label=\textbf{(A2)}]
		\item \label{ass:reg_Ef}For the energy $E$ and running cost $f$, $\cV :=\beta \|\nabla E\|^2 - \Delta E + 2\beta f$ satisfies $\cV \in L^2_{loc}(\R^d)$, $
		        \exists C\in\R, \forall\ x\in\R^d: C \leq \cV(x)$, and $\cV(x) \to \infty \text{ as } \|x\|\to\infty.$
	\end{enumerate}
    \begin{theorem}[Restatement of \citep{reed1978iv}, Theorem XIII.67, XIII.64, XIII.47] \label{thm:schrodinger}
        Suppose \ref{ass:reg_Ef} is satisfied. Then the operator $\cL$ in \eqref{eq:operator} is densely defined and essentially self-adjoint. Moreover, it admits a countable, orthonormal basis of eigenfunctions. In addition, the eigenvalues are bounded from below and do not have a finite accumulation point, the lowest eigenvalue $\lambda_0$ has multiplicity one, and the associated eigenfunction (called the \emph{ground state}, or in our context the \emph{top eigenfunction}) can be chosen to be strictly positive.
    \end{theorem}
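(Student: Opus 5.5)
The strategy is to transfer every claimed property from the Schrödinger operator $\cS = -\Delta + \beta^2\|\nabla E\|^2 - \beta\Delta E + 2\beta^2 f = -\Delta + \beta\cV$ on $L^2(\R^d)$ to $\cL$ on $L^2(\mu)$ through the unitary $U$ of \eqref{eq:schrodinger}. Unitary equivalence preserves dense domains, (essential) self-adjointness, eigenvalues with their multiplicities, and orthonormal bases. It also preserves positivity, since $U$ is multiplication by the strictly positive function $e^{-\beta E}$. It therefore suffices to prove everything for $\cS$.

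\textbf{Step 1 (conjugation).} Take the domain $D(\cS) = C_c^\infty(\R^d)$, which is dense in $L^2(\R^d)$, and set $D(\cL) = U^{-1}C_c^\infty = \{e^{\beta E}\varphi : \varphi\in C_c^\infty\}$. For this to sit in a nice function class, we need $E$ regular enough, e.g.\ $E\in C^2$; we assume this implicitly. A direct product-rule computation with $\psi = e^{\beta E}\varphi$ verifies \eqref{eq:schrodinger}: the cross terms $2\beta\langle\nabla E,\nabla\varphi\rangle$ cancel against the drift term, which leaves $\beta^2\|\nabla E\|^2-\beta\Delta E$. Isometry of $U$ follows from $\int |e^{-\beta E}\psi|^2\,dx = \int|\psi|^2 e^{-2\beta E}\,dx$.

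\textbf{Step 2 (self-adjointness and discreteness for $\cS$).} Under \ref{ass:reg_Ef}, the potential $\beta\cV$ lies in $L^2_{loc}$ and is bounded below by $\beta C$. Theorem X.28 in Reed--Simon (Kato's inequality: $V\in L^2_{loc}$, $V\ge 0$) then gives essential self-adjointness of $-\Delta+\beta\cV-\beta C$ on $C_c^\infty$, and a constant shift is harmless. Lower boundedness of the spectrum is immediate from $\langle\varphi,\cS\varphi\rangle \ge \beta C\|\varphi\|^2$. For discreteness I will invoke Theorem XIII.67: because $\beta\cV\to\infty$ at infinity, the closure has compact resolvent. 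A self-adjoint operator with compact resolvent has an orthonormal eigenbasis whose eigenvalues have finite multiplicity and accumulate only at $+\infty$. If I wanted a self-contained argument, I would show that the form domain $\{\varphi: \int|\nabla\varphi|^2+\cV|\varphi|^2<\infty\}$ embeds compactly in $L^2$, using Rellich on balls and the bound $\int_{|x|>R}|\varphi|^2 \le \|\varphi\|_{form}^2/\inf_{|x|>R}\cV$.

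\textbf{Step 3 (simple ground state, positive eigenfunction).} This is the main obstacle, and I would handle it via positivity improvement (Theorems XIII.44/XIII.47). The semigroup $e^{-t\cS}$ is positivity improving, which I would establish via the Trotter product formula or Feynman--Kac with the heat kernel, after truncating $\cV$ to bounded potentials and passing to the limit in strong resolvent sense; this is the route of Theorem XIII.47, which requires $\cV\in L^2_{loc}$ and bounded below, exactly as in \ref{ass:reg_Ef}. The Perron--Frobenius-type Theorem XIII.44 then says that $\lambda_0 = \inf\sigma(\cS)$, whenever it is an eigenvalue (which Step 2 guarantees), is simple with a strictly positive eigenvector. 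Finally, $\phi_0 = U^{-1}\tilde\phi_0 = e^{\beta E}\tilde\phi_0$ remains strictly positive, and $\cL$ inherits essential self-adjointness, the orthonormal eigenbasis $U^{-1}\tilde\phi_i$, and the same eigenvalues. Applying the transferred eigenvalues to $\cL$ requires rescaling $\cS$ by the constant factor relating $\beta\cV$ and $\beta^2\|\nabla E\|^2+\dots$, which affects none of the properties.
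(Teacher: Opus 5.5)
Your proposal is correct and follows essentially the same route as the paper. You conjugate $\cL$ by the multiplication unitary $U$ to the Schrödinger operator, then invoke Reed--Simon X.28 for essential self-adjointness and XIII.67/XIII.64/XIII.47 for compact resolvent, discrete spectrum and the simple positive ground state; your explicit constant shift before applying X.28, and your observation that $U$ preserves positivity, fill in details the paper leaves implicit. Your closing remark about rescaling is unnecessary, since $\beta\cV$ is already exactly the potential in \eqref{eq:schrodinger} and \ref{ass:reg_Ef} is invariant under multiplying $\cV$ by $\beta>0$.
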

    \begin{remark}
        Previous studies have linked the Schrödinger operator to optimal control in contexts distinct from ours: for example, \cite{schutteOptimalControlMolecular2012} and \cite{bakshi2020schrodinger} analyze the stationary HJB equation
$
\mathcal{K}V_\infty = \lambda
$
(see Remark \autoref{remark:infinite_horizon}), whereas \cite{kaliseSpectralApproachOptimal2025} explores distribution‐level control of the Fokker–Planck equation.

    \end{remark}
    \subsection{Eigenfunction control}
    From the previous discussion, we obtain the following result, which links the eigenfunctions of $\cL$ with the optimal control problem.
    \begin{theorem}
        \label{thm:eigfcontrol}
        Suppose \ref{ass:gradient_drift}-\ref{ass:reg_Ef} are satisfied. Then $\cL$ satisfies all assumptions in \autoref{thm:eigfsolution}, hence the solution of the optimal control problem \eqref{eq:socsde}-\eqref{eq:soccost} is given by
        \begin{align}
		\label{eq:eigfsoc}
		u^*(x,t) &= \beta^{-1}\left(\nabla \log \phi_0(x) + \nabla \log \left(1+\sum_{i>0} \frac{\langle e^{-\beta g},\phi_i\rangle_\mu}{\langle e^{-\beta g},\phi_0\rangle_\mu}e^{-\frac{1}{2\beta}(\lambda_i-\lambda_0)\left(T-t\right)}\frac{\phi_i(x)}{\phi_0(x)}\right)\right).
	\end{align}
    where $(\phi_i,\lambda_i)_{i\in\N}$ is the orthonormal eigensystem of $\cL$ defined in \eqref{eq:operator}, and $\lambda_0 < \lambda_1 \leq \ldots$.    \end{theorem}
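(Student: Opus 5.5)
The argument chains \autoref{thm:schrodinger}, \autoref{thm:eigfsolution} and the Cole--Hopf relation from \eqref{eq:linearhjb}. Throughout I take $\sigma = I$, as the paper does without loss of generality.

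\emph{Step 1: $\cL$ meets the hypotheses of \autoref{thm:eigfsolution}.} Under \ref{ass:gradient_drift}, the operator in \eqref{eq:linearhjb} is exactly \eqref{eq:operator} on $L^2(\mu)$. By \eqref{eq:schrodinger} it is unitarily equivalent, via $U\psi = e^{-\beta E}\psi$, to $\cS = -\Delta + \beta\cV$. Assumption \ref{ass:reg_Ef} is stated for $\cV$, and multiplying by $\beta>0$ preserves local $L^2$-integrability, the lower bound and confinement. So \autoref{thm:schrodinger} applies to $\cS$ and transfers through $U$, which preserves self-adjointness of closures, orthonormal bases and spectra. This gives that $\cL$ is essentially self-adjoint with an orthonormal eigenbasis $(\phi_i)$ of $L^2(\mu)$. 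The eigenvalues are bounded below with no finite accumulation point. Moreover $\lambda_0$ is simple, so $\lambda_0<\lambda_1$, and $\phi_0 = U^{-1}(\text{ground state of }\cS)>0$.

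\emph{Step 2: the semigroup formula.} Set $\psi_0 = e^{-\beta g}$, which I assume lies in $L^2(\mu)$ (implicit in the statement). \autoref{thm:eigfsolution} then gives
\[
\psi(\rt) = \sum_i e^{-\lambda_i\rt}\langle \phi_i, e^{-\beta g}\rangle_\mu\,\phi_i .
\]
Pulling out the $i=0$ term, with $\rt = (T-t)/(2\beta)$, yields
\[
\psi = e^{-\lambda_0\rt}\langle e^{-\beta g},\phi_0\rangle_\mu\,\phi_0\Bigl(1+\sum_{i>0}\tfrac{\langle e^{-\beta g},\phi_i\rangle_\mu}{\langle e^{-\beta g},\phi_0\rangle_\mu}e^{-(\lambda_i-\lambda_0)\rt}\tfrac{\phi_i}{\phi_0}\Bigr).
\]
Dividing by $\langle e^{-\beta g},\phi_0\rangle_\mu$ is legitimate because this quantity is strictly positive, being the integral of a positive function against the positive $\phi_0\mu$.

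\emph{Step 3: recovering the control.} From $V(x,t) = -\beta^{-1}\log\psi(x,\rt)$ and $u^* = -\nabla V$, we get $u^* = \beta^{-1}\nabla\log\psi$. The prefactor $e^{-\lambda_0\rt}\langle e^{-\beta g},\phi_0\rangle_\mu$ does not depend on $x$, so the gradient kills it. The logarithm of the product then splits into $\nabla\log\phi_0$ plus $\nabla\log$ of the bracket, which is exactly \eqref{eq:eigfsoc}.

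\emph{Main obstacle.} The delicate part is justifying that this abstract $L^2(\mu)$ solution is regular enough to play the role of $V$. We need $\psi>0$ pointwise so the logarithm makes sense, and enough smoothness to take gradients termwise and invoke the verification theorem. For $\rt>0$, positivity follows from positivity-improving of the semigroup $e^{-\rt\cS}$, which is standard for Schrödinger operators with potentials bounded below and is the same fact behind the positive ground state. For regularity I would appeal to the equivalence with the viscosity solution (\Cref{thm_equivalence_semigroup_viscosity}) together with elliptic regularity of eigenfunctions. I would also use convergence of the series in $C^1_{loc}$ for $\rt>0$, obtained from the exponential factors $e^{-\lambda_i\rt}$ and local bounds on the $\phi_i$. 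Given the earlier results, I would state these points as relying on those results rather than reprove them.
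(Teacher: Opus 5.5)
Your proposal follows the paper's own route. The paper derives the theorem "from the previous discussion": it applies \autoref{thm:schrodinger} through the unitary equivalence \eqref{eq:schrodinger}, then \autoref{thm:eigfsolution}, factors out the $i=0$ term and uses $u^* = \beta^{-1}\nabla\log\psi$. Your points about the factor $\beta$ in the Schrödinger potential and the positivity of $\langle e^{-\beta g},\phi_0\rangle_\mu$ are correct refinements. One caveat: \Cref{thm_equivalence_semigroup_viscosity}, to which you defer the regularity and verification step, requires growth hypotheses beyond \ref{ass:gradient_drift}--\ref{ass:reg_Ef} and identifies $V$ only on $[\tau,T]$. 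That step is therefore exactly as informal in your argument as in the paper.
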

    Thus, the long-term optimal control ($t\ll T$) is given by $\beta^{-1} \nabla \log \phi_0$, and the corrections decay exponentially, motivating truncation of the series in \eqref{eq:eigfsoc}.
	\subsection{Closed-form solution for the symmetric LQR}
	When $E$ and $f$ are quadratic, the Schrödinger operator associated with the optimal control system is the Hamiltonian for the harmonic oscillator, which has an exact solution (\autoref{app:derivations}):
	\begin{theorem}
	\label{thm:symmetriclqr}
	Suppose $b(x) = -Ax$ for some symmetric matrix $A \in \R^{d\times d}$, and $f(x) = x^TPx$ for some matrix $P\in \R^{d\times d}$ such that $A^TA + 2P$ is positive definite, and has diagonalization $U^T\Lambda U$. Then the orthonormal eigensystem of the operator $\cL$ given in \eqref{eq:operator} is described through
	\begin{align}
		\label{eq:eigensystem1}
		\phi_{\alpha}(x) &= \frac{\exp\left(-\frac{\beta}{2}x^T\left(-A+U^T\Lambda^{1/2} U\right)x \right)}{(\lambda\pi)^{d/4}} \prod_{i=1}^d \frac{\Lambda_{ii}^{1/8}}{\sqrt{2^{\alpha_i}(\alpha_i!)}} H_{\alpha_i}\left(\sqrt{\beta}(\Lambda^{1/4}Ux)_i\right), \\ \lambda_\alpha &= \beta\left(-\Tr(A) + \sum_{i=1}^d \Lambda_{ii}^{1/2}(2\alpha_i+1)\right).\label{eq:eigensystem2}
	\end{align}
	where $\alpha \in \N^d$ and $H_i$ denotes the $i$th physicist's Hermite polynomial. We can bijectively map $\alpha \in \N^d$ to $i \in\N$ by ordering the eigenvalues \eqref{eq:eigensystem2}, yielding the same representation as before.
\end{theorem}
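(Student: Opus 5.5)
We reduce the claim, via the unitary equivalence \eqref{eq:schrodinger}, to the spectral problem for a harmonic oscillator. With $b(x) = -Ax$ and $A$ symmetric, assumption \ref{ass:gradient_drift} holds with $E(x) = \tfrac12 x^TAx$. Then $\nabla E = Ax$, $\|\nabla E\|^2 = x^TA^TAx$ and $\Delta E = \Tr(A)$, so
\begin{equation*}
U\cL U^{-1} = -\Delta + \beta^2 x^T(A^TA + 2P)x - \beta\Tr(A),
\end{equation*}
where we may replace $P$ by its symmetric part without changing $f$. Since $A^TA+2P = U^T\Lambda U$ is positive definite, the potential is a positive definite quadratic form. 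Hence \ref{ass:reg_Ef} holds, and \autoref{thm:schrodinger} guarantees a discrete orthonormal eigenbasis. It therefore suffices to diagonalize $\cS = -\Delta + \beta^2 x^TU^T\Lambda Ux - \beta\Tr(A)$ on $L^2(\R^d)$ and pull the result back by $U^{-1}$, that is, multiply eigenfunctions by $e^{\beta E}$.

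\textbf{Step 1 (separation of variables).} Rotate coordinates by $y = Ux$ with $U$ orthogonal. The Laplacian is invariant and the potential becomes $\beta^2\sum_i \Lambda_{ii} y_i^2$, so $\cS$ is a sum of commuting one-dimensional operators $-\partial_{y_i}^2 + \beta^2\Lambda_{ii}y_i^2$ plus the constant $-\beta\Tr(A)$.

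\textbf{Step 2 (one-dimensional oscillator).} For $-\partial_y^2 + \omega^2 y^2$ with $\omega = \beta\Lambda_{ii}^{1/2}$, substitute $z = \sqrt{\omega}\,y$. The operator becomes $\omega(-\partial_z^2+z^2)$, whose normalized eigenfunctions are $(2^n n!\sqrt\pi)^{-1/2}H_n(z)e^{-z^2/2}$ with eigenvalues $2n+1$. We take this as the standard result (\autoref{lemma:dharm}, \cite{griffithsIntroductionQuantumMechanics2018}). Transforming back, the eigenvalues are $\omega(2n+1)$, and the $L^2(dy)$-normalized eigenfunctions pick up a factor $\omega^{1/4}$. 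Here $z = \sqrt{\beta}\,\Lambda_{ii}^{1/4}y_i$, which matches the Hermite argument $\sqrt\beta(\Lambda^{1/4}Ux)_i$. Taking products gives the eigenvalues $\beta\big(-\Tr(A)+\sum_i\Lambda_{ii}^{1/2}(2\alpha_i+1)\big)$, which is \eqref{eq:eigensystem2}. The product eigenfunctions are complete in $L^2(\R^d)$ because tensor products of one-dimensional orthonormal bases form an orthonormal basis.

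\textbf{Step 3 (pull back).} The Gaussian factor is $\prod_i e^{-z_i^2/2} = \exp(-\tfrac{\beta}{2}x^TU^T\Lambda^{1/2}Ux)$. Multiplying by $U^{-1} = e^{\beta E} = \exp(\tfrac{\beta}{2}x^TAx)$ gives $\exp\big(-\tfrac\beta2 x^T(-A+U^T\Lambda^{1/2}U)x\big)$. By unitarity of $U$, orthonormality and completeness transfer to $L^2(\mu)$, and the eigenvalues are unchanged.

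\textbf{Step 4 (constants).} Collect the normalization constants: $\pi^{-d/4}$, a factor $\beta^{d/4}$, and $\prod_i\Lambda_{ii}^{1/8}$. The last step is checking the normalization prefactor against the stated $(\lambda\pi)^{d/4}$. We expect this bookkeeping, in particular how the $\beta^{1/4}$ per dimension and the $\Lambda^{1/8}$ factors combine, to be the main source of error rather than any conceptual difficulty. We would verify it by computing $\|\phi_0\|_{L^2(\mu)}^2$ directly as a Gaussian integral. We read the symbol $\lambda$ in the prefactor as a typo for $\beta^{-1}$.
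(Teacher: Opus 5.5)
Your proposal is correct and follows essentially the same route as the paper. You use the unitary equivalence to $-\Delta + \beta^2 x^T(A^TA+2P)x - \beta\Tr(A)$, decouple it into rescaled one-dimensional harmonic oscillators (the paper does this via \autoref{lemma:dharm}), and pull back by multiplying by $e^{\beta E}$. Your per-dimension factor $\omega^{1/4}=\beta^{1/4}\Lambda_{ii}^{1/8}$ already yields the prefactor $(\beta^{-1}\pi)^{-d/4}$, so reading $\lambda$ as $\beta^{-1}$ is right and matches the paper's convention $\lambda=\beta^{-1}$ elsewhere.
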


Combined with \autoref{thm:eigfcontrol}, this yields a closed-form solution for the optimal control problem with symmetric linear drift, quadratic running cost and arbitrary terminal reward.
\section{Numerical methods}
    We propose a hybrid method with two components: Far from the terminal time $T$, we learn the top eigenfunction $\phi_0$ and simply use $\partial_x \log \phi_0$ as the control (c.f. \cref{eq:simple_eigfcontrol}); Close to the terminal time, e.g. $t\geq T-1$, we use an existing solver to learn an additive short-horizon correction to the control. 
    \subsection{Learning eigenfunctions}
	In absence of a closed-form solution, a wide range of numerical techniques exist for solving the eigenvalue problem for the operator $\cL$ in \eqref{eq:operator}. Classically, the eigenfunction equation is projected onto a finite-dimensional subspace to yield a Galerkin/finite element method, see \cite{chatelinSpectralApproximationLinear1983}. In high dimensions, these methods often perform poorly, motivating deep learning approaches which differ from each other mainly in the loss function used. We will only discuss methods for learning a single eigenfunction, referring to \autoref{app:lossfuncs} for extensions to multiple eigenfunctions. An overview of the deep leraning algorithm for learning eigenfunctions is given in Algorithm \autoref{algo:eigf} in \autoref{app:experiments}.
	\paragraph{PINN loss} Based on the success of physics-informed neural networks (PINNs) \citep{raissi2019pinn}, one idea is to design a loss function that attempts to enforce the equation $\cL \phi = \lambda\phi$ via an $L^2$ loss, as done in \cite{jin2022physics}. The idea is to consider some density $\rho$ on $\mathbb{R}^d$, and define the loss function
	\begin{equation}
		\label{eq:pinnloss}
		\cR_{\mathrm{PINN}}^\rho(\phi) = \left\|\cL\left[\phi\right]-\lambda\phi\right\|^2_\rho + \alpha \cR_{reg}^\rho(\phi), \quad \cR_{reg}^\rho(\phi) = (\|\phi\|_\rho^2-1)^2 
	\end{equation}
	where $\alpha > 0$ is a regularizer to avoid the trivial solution $\phi = 0$. The eigenvalue $\lambda$ is typically also modeled as a trainable parameter of the model, or obtained through other estimation procedures.
	\paragraph{Variational loss} A second class of loss functions is based on the variational characterization of the eigensystem of $\cL$. Since $\cL$ is essentially self-adjoint with orthogonal eigenbasis in a subset of $L^2(\mu)$ ($\mu$ is defined below \ref{ass:gradient_drift}), it holds that (see \cite[Theorem XIII.1]{reed1978iv}) 
	\begin{equation}
		\label{eq:rayleigh}
		\lambda_0 = \inf_{\psi \in L^2(\mu)} \frac{\langle\psi, \cL\psi\rangle_\mu}{\langle\psi,\psi\rangle_\mu}
	\end{equation}
	where the infimum is obtained when $\cL \psi = \lambda_0\psi$. This motivates the loss functions
	\begin{align}
		\label{eq:varloss}
		\cR_{\mathrm{Var},1}(\phi) = \frac{\langle\phi, \cL \phi\rangle_\mu}{\langle\phi,\phi\rangle_\mu} + \alpha\cR_{reg}^\mu(\phi), \quad \cR_{\mathrm{Var}, 2}(\phi) = \langle \phi, \cL \phi\rangle_\mu + \alpha\cR_{reg}^\mu(\phi).
	\end{align}
	The first of these, sometimes called the \emph{deep Ritz loss}, was introduced in \cite{eDeepRitzMethod2018}, and the second was described in \cite{cabannesGalerkinMethodBeats2024,zhangSolvingEigenvaluePDEs2022}. These loss functions do not require prior knowledge of the eigenvalue $\lambda_0$.
    \subsubsection{Implicit reweighting in previous approaches} 
    The exponential decay of the correction term in \cref{eq:simple_eigfcontrol} suggests that the optimal control $u^*$ can be approximated by the spatial derivative of \emph{logarithm} of the top eigenfunction.
    We therefore parameterize in our implementation $\phi = \exp(-\beta V_0)$, where $V_0$ is some neural network and $\beta$ is the temperature constant.
    This choice also enforces the \emph{strict positivity} of $\phi$, which matches the same property of $\phi_0$ established in \Cref{thm:schrodinger}.

    Adopting such a parameterization, for the PINN and variational losses, it holds that
    \begin{align}
        \label{eq:reweighting1}
        \cR^\rho_{\mathrm{PINN}}(e^{-\beta V_0}) &= 4\beta^4 \left\| e^{-\beta V_0}\left(
        \cK V_0 -\frac{\lambda_0}{2\beta^2}\right)\right\|_\rho^2 + \alpha \cR_{reg}^\rho(e^{-\beta V_0})\\
        \label{eq:reweighting2}
        \cR_{\mathrm{Var},2}(e^{-\beta V_0}) &= 2\beta^2 \int e^{-2\beta V_0} \cK V_0\  \dd \mu + \alpha \cR_{reg}^\mu(e^{-\beta V_0})
    \end{align}
    where $\cK$ is the HJB operator \eqref{eq:hjb_operator}. 
    Because both \eqref{eq:reweighting1} and \eqref{eq:reweighting2} incorporate an exponential factor that vanishes where $V_0$ is large, these losses become effectively blind to errors in high-$V_0$ regions and are only able to learn where $V_0$ is small.  This pathology is illustrated in \Cref{fig:ring_viz}: Consider a 2D \textsc{ring} energy landscape $E$, whose minimizers lie on a circle, and a cost $f$ that grows linearly with the $x$-coordinate (see the full setup in \Cref{app:experiments}). 
    The true optimal control remains tangential to the circle.  In contrast, the controls obtained via the PINN and variational eigenfunction losses collapse in regions of large $V_0$, deviating sharply from the expected direction.

    \subsubsection{Our approach: Removing implicit reweighting via relative loss}
    \begin{figure}
        \centering
        \includegraphics[width=\linewidth]{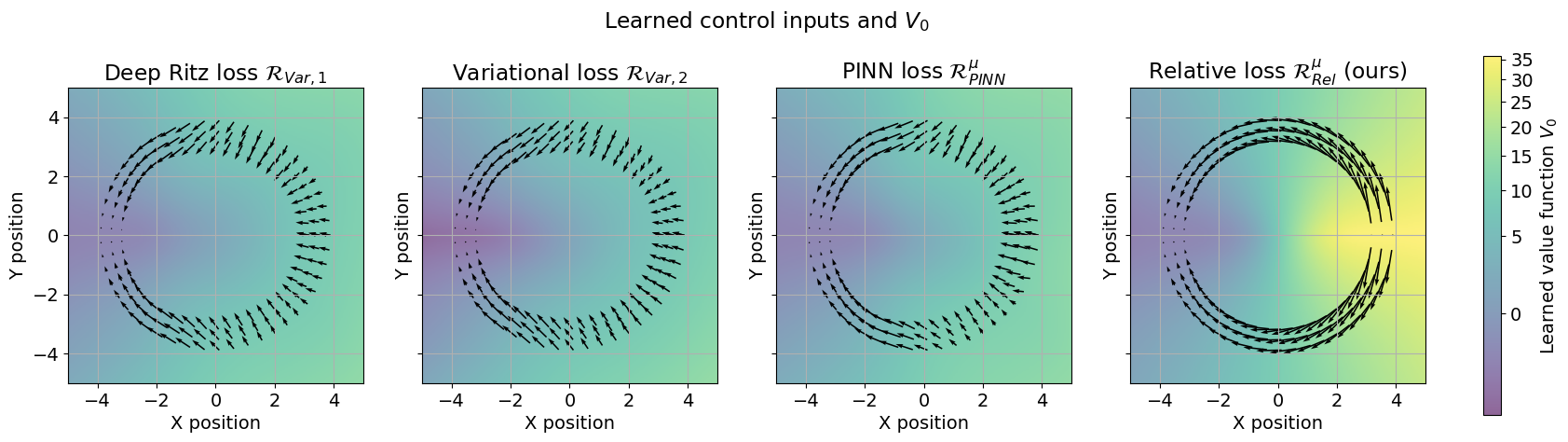}
        \caption{Learned controls (arrows) and $V_0$ for different eigenfunction losses. Existing methods fail to learn the correct control in regions where $V_0$ is large due to implicit reweighting.}
        \label{fig:ring_viz}
      \vspace{-10pt}
    \end{figure}
    Based on this observation, we propose to modify \eqref{eq:pinnloss} to
	\begin{align}
		\label{eq:relloss}
		\cR_{Rel}^\rho(\phi) =& \left\|\frac{\cL\phi}{\phi}-\lambda\right\|_\rho^2 + \alpha\cR_{reg}^\rho(\phi).
	\end{align}
	This loss function, which we call the \emph{relative loss}, eliminates the implicit reweighting of the stationary HJB equation. Indeed, the same computation as before yields
    \begin{align}
        \label{eq:noreweighting}
        \cR^\rho_{Rel}(e^{-\beta V_0}) &= 4\beta^4 \left\|
        \cK V_0 -\frac{\lambda_0}{2\beta^2}\right\|_\rho^2 + \alpha \cR_{reg}^\rho(e^{-\beta V_0}).
    \end{align}
    As a result, the relative loss remains sensitive even in regions where $\phi = e^{-\beta V_0}$ becomes small. 
    This can also be empirically observed in the \textsc{ring} task, as illustrated in \autoref{fig:ring_viz}.
    Instead of learning $V_0$ and $\lambda_0$ jointly, a natural idea is to combine the benefits of the above loss functions by first training with a variational loss \eqref{eq:varloss} to obtain an estimate for the eigenvalue $\lambda_0$ and a good initialization of $V_0$, and then `fine-tune' using \eqref{eq:relloss}. In practice, we also observed that this initialization is necessary for the relative loss \eqref{eq:relloss} to converge.
    \begin{remark}
        \label{remark:infinite_horizon}
        We note that there is an alternative interpretation of \eqref{eq:noreweighting} based on a separate class of control problems in which there is no terminal cost $g$, and an infinite-horizon cost is minimized, yielding a \emph{stationary} or \emph{ergodic} optimal control \citep{kushner1978optimality}. In this setting, $u$ is related to a time-independent value function $V_\infty$ which satisfies a stationary HJB equation of the form $\cK V_\infty = \lambda$. In low dimensions, these problems are solved through classical techniques such as basis expansions or grid-based methods, with no involvement of neural networks \citep{todorovEigenfunctionApproximationMethods2009}.
    \end{remark}
    
    \subsection{Our hybrid method: combining eigenfunctions and short-horizon solvers}
    For both IDO and FBSDE methods, every iteration requires the numerical simulation of an SDE, yielding a linear increase in computation cost with the time horizon $T$. We propose to leverage the eigenfunction solution given in \autoref{thm:eigfcontrol} in order to scale these methods to longer time horizons 
     as follows: first, parametrize the top eigenfunction as $\phi_0^{\theta_0} = \exp(-\beta V_0^{\theta_0})$ for a neural network $V_0^{\theta_0}$, and learn the parameters $\theta_0$ using the relative loss, as well as the first two eigenvalues $\lambda_0, \lambda_1$ (see \autoref{app:experiments}). Next, choose some cutoff time $T_{cut} < T$ and parametrize the control as
    \begin{equation}
    \label{eq:combcontrol}
	u_\theta(x,t) = \begin{cases}
		\beta^{-1} \nabla \log \phi_0^{\theta_0} & 0 \leq t \leq T_{cut},\\
		\beta^{-1}\left(\nabla \log \phi_0^{\theta_0}(x) + e^{-\frac{1}{2\beta}(\lambda_1-\lambda_0)\left(T-t\right)}v^{\theta_1}(x,t)\right) & T_{cut} < t \leq T.
	\end{cases}
\end{equation}
This control can then be used in an IDO/FBSDE algorithm to optimize the parameters $\theta_1$ of the additive correction $v^{\theta_1}$, a second neural network, near the terminal time. Crucially, this only requires simulation of the system in the interval $[T_{cut},T]$, significantly reducing the overall computational burden and reducing the time complexity of the algorithm from $\cO(Td)$ to $\cO(d)$.

\paragraph{Choice of $T_{cut}$.} Increasing $T_{\mathrm{cut}} \to T$ raises computational cost, whereas 
choosing $T_{\mathrm{cut}}$ too small degrades performance; this reflects 
the fundamental trade-off in selecting this parameter. One heuristic for 
choosing an appropriate order of magnitude is to observe that we would 
like the correction term given by the infinite series in Eq. (12) to be 
small for all $t \leq T_{\mathrm{cut}}$. Even without access to the inner 
products or eigenfunctions, we may consider the approximation
\[
\exp\!\left(-\tfrac{1}{2\beta}(\lambda_1 - \lambda_0)(T - T_{\mathrm{cut}})\right) 
= \varepsilon,
\]
which yields
\[
T - T_{\mathrm{cut}} 
= -\frac{2\beta}{\lambda_1 - \lambda_0}\,\log \varepsilon.
\]
Since we obtain empirical estimates of $\lambda_0$ and $\lambda_1$, this 
expression provides a practical guideline for determining the scale of 
$T_{\mathrm{cut}}$, and thus can be implemented in practice.
	\section{Experiments}
    To evaluate the benefits of the proposed method, we consider four different settings, \textsc{quadratic (isotropic)}, \textsc{quadratic (anisotropic)}, \textsc{double well}, and \textsc{ring}. An additional setting \textsc{quadratic (repulsive)} with nonconfining energy is discussed in \autoref{app:experiments}. The first three are  high-dimensional benchmark problems adapted from \cite{nuskenSolvingHighdimensionalHamilton2021}, modified to be long-horizon problems, where a ground truth can be obtained. Detailed information on the experimental setups, including computational costs, is given in \autoref{app:experiments}. The code for the described experiments is available online\footnote{\href{https://github.com/lclaeys/eigenfunction-solver}{https://github.com/lclaeys/eigenfunction-solver}}.
        \begin{figure}[h!]
            \centering
            \scalebox{1.0}{
          \begin{minipage}{\linewidth}
            \centering
            \includegraphics[width=0.24\linewidth]{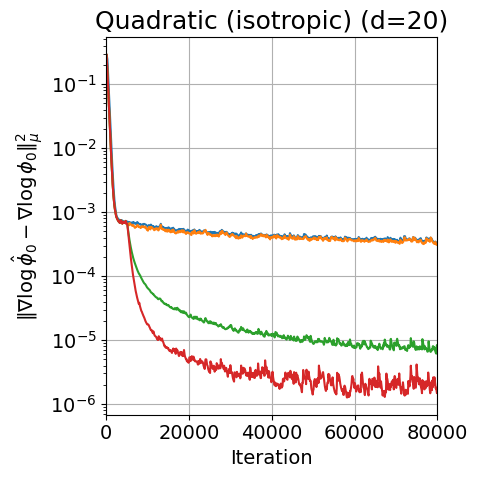}
            \includegraphics[width=0.24\linewidth]{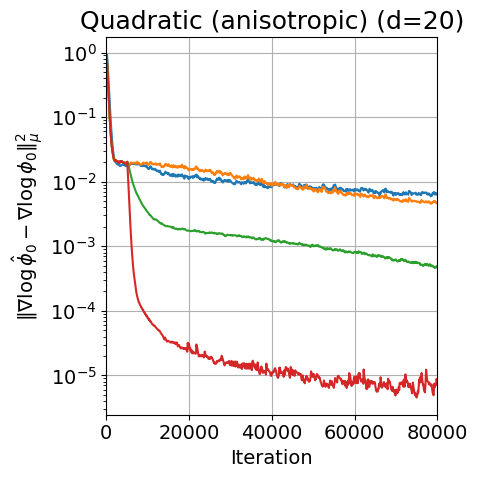}
            \includegraphics[width=0.24\linewidth]{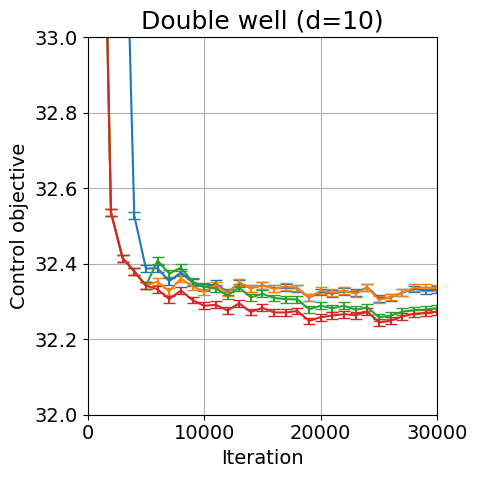}
            \includegraphics[width=0.24\linewidth]{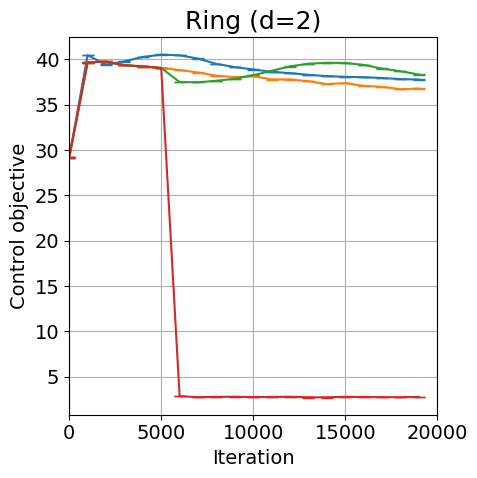}\\[1ex]
            \vspace{-0.5cm}
            \includegraphics[width=0.8\linewidth]{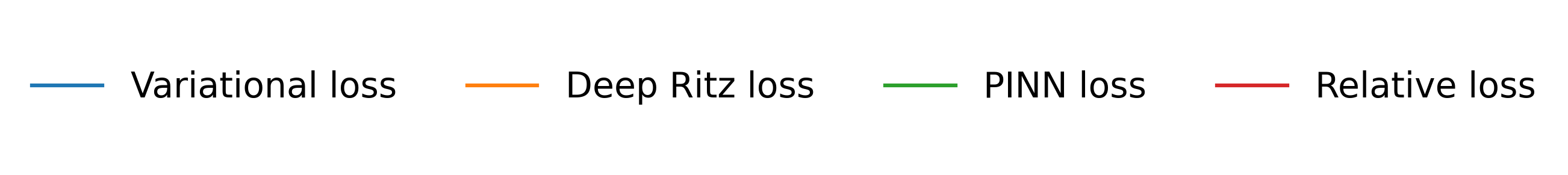}\vspace{-0.6cm}
            
          \end{minipage}}
            \caption{Comparison of the different eigenfunction losses (EMA).}
            \label{fig:eigfloss}
        \end{figure}
        
        \autoref{fig:eigfloss} shows the results of the various eigenfunction losses. For the \textsc{quadratic} settings, we can compute $\nabla \log \phi_0$ exactly, and see that the relative loss significantly improves upon existing loss functions for approximating this quantity (with the error measured in $L^2(\mu)$). For the other settings, the resulting control $\nabla \log \phi_0$ yields the lowest value of the control objective for the relative loss.

        In \autoref{fig:control_l2}, we show the result of using the learned eigenfunctions in the IDO algorithm using the combined algorithm described in the previous section, and compare it with the standard IDO/FBSDE methods. In each setting, we obtain a lower $L^2$ error using the combined method, typically by an order of magnitude. The bottom row of \autoref{fig:control_l2} shows how the error behaves as a function of $t \in [0,T]$: the pure eigenfunction method achieves superior performance for $t \to 0$, but performs worse closer to the terminal time $T$. The IDO method has constant performance in $[0,T]$, and the combined method combines the merits of both to provide the lowest overall $L^2$ error.
        \begin{figure}[t!]
          \centering
          \scalebox{1.0}{
          \begin{minipage}{\linewidth}
            \centering
            \includegraphics[width=0.32\linewidth]{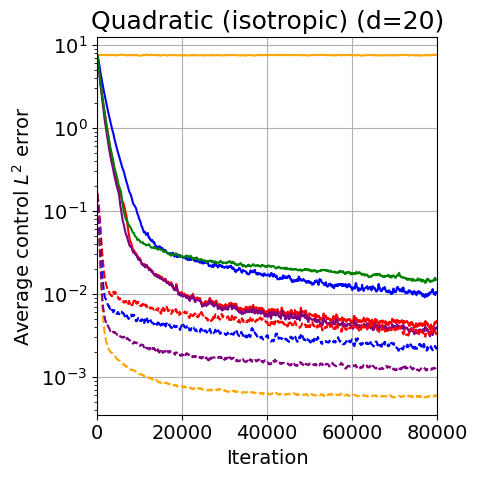}
            \includegraphics[width=0.32\linewidth]{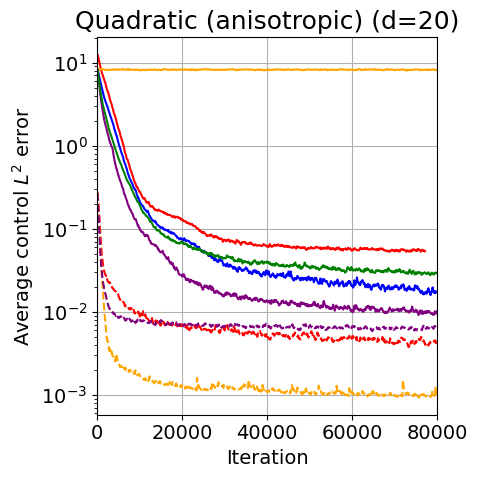}
            \includegraphics[width=0.32\linewidth]{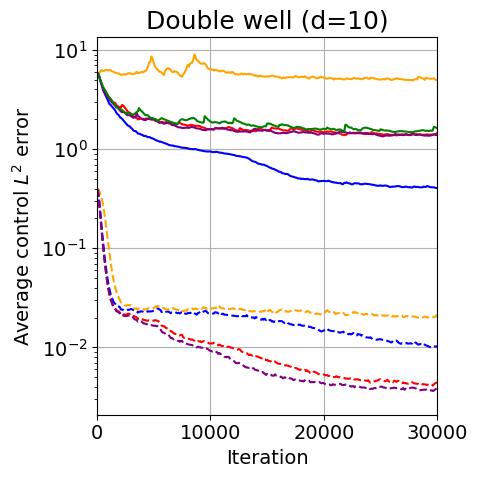}\\[1ex]
            \includegraphics[width=0.32\linewidth]{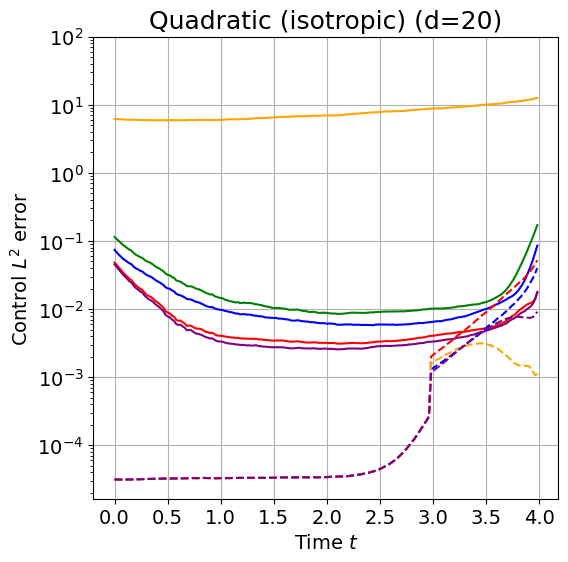}
            \includegraphics[width=0.32\linewidth]{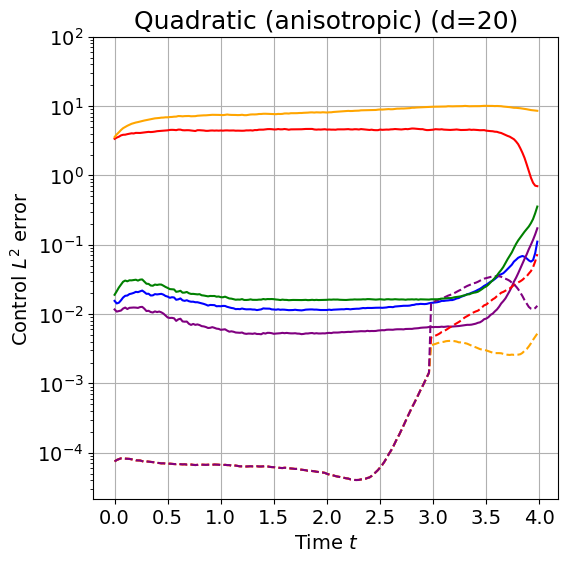}
            \includegraphics[width=0.32\linewidth]{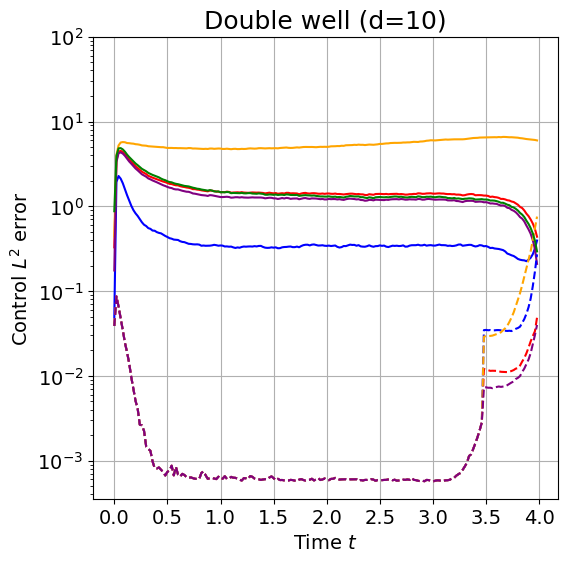}
        
            \vspace{0.5ex}
            \includegraphics[width=0.88\linewidth]{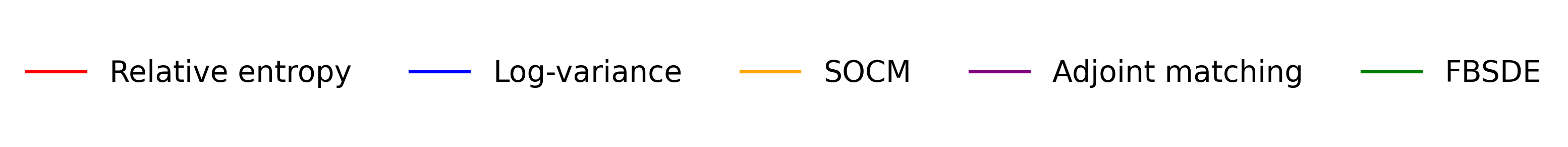}
            \includegraphics[width=\linewidth]{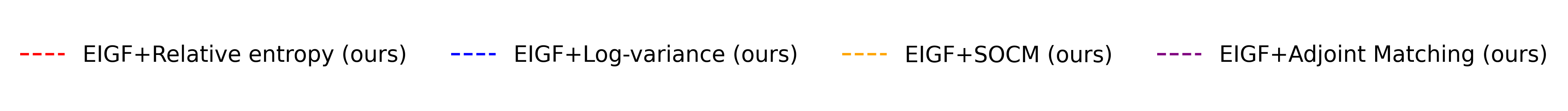}
          \end{minipage}}
          \caption{Average $L^2$ control error (EMA) as a function of iteration (top row) and $L^2$ error as a function of $t\in[0,T]$ (bottom row).}
          \label{fig:control_l2}
        \end{figure}

        \subsection{Opinion modeling: de Groot model}
        We additionally consider a networked control problem inspired by opinion formation.
We model $N=10$ agents with state $X_t\in\mathbb{R}^{10}$ and dynamics
\begin{equation}
    dX_t = \big((L-I)X_t - \gamma X_t + u(X_t)\big)\,dt + dW_t,
\end{equation}
where $L\in\mathbb{R}^{10\times 10}$ is a symmetric interaction matrix with
$L_{ii}=0.5$ and $L_{i,i\pm 1}=0.25$ (all other entries are zero), and we set $\gamma=0.1$.
We use the non-quadratic running cost
\begin{equation}
    f(x)=\sum_{i=1}^{10}(x_i^2-1)^2,
\end{equation}
and terminal cost $g(x)=0$, with parameters $\lambda=1.0$ and time horizon $T=10.0$.
Table~\ref{tab:opinion} reports the final control objective (mean $\pm$ std) after $80{,}000$ training iterations.

\begin{table}[t]
    \centering
    \begin{tabular}{l l c}
        \hline
        Method & Algorithm/Loss & Control objective \\
        \hline
        EIGF (ours) & Relative loss & $\mathbf{73.33 \pm 0.02}$ \\
        IDO & Log-variance loss & $74.52 \pm 0.02$ \\
        IDO & Adjoint Matching & $74.69 \pm 0.02$ \\
        IDO & Relative Entropy & $75.63 \pm 0.02$ \\
        IDO & SOCM & Did not converge \\
        \hline
    \end{tabular}
    \caption{Opinion dynamics, final control objective (smaller is better).}
    \label{tab:opinion}
\end{table}

        \section{Conclusion}
        \label{sec:conclusion}
        In this work, we have introduced a new perspective on a class of stochastic optimal control problems with gradient drift, showing that the optimal control can be obtained from the eigensystem of a Schrödinger operator. We have investigated the use of deep learning methods to learn the eigensystem, introducing a new loss function for this task. We have shown how this approach can be combined with existing IDO methods, yielding an improvement in $L^2$ error of roughly an order of magnitude over state-of-the-art methods in several long-horizon experiments, and overcoming the increase in computation cost typically associated with longer time horizons.
        \paragraph{Limitations} The main drawback of the proposed approach is that it is currently limited to problems with gradient drift. When the operator $\cL$ is not even symmetric, it may no longer have real eigenvalues. Nonetheless, the top eigenfunction may still be real and nondegenerate with real eigenvalue, so that the long-term behaviour of the control is still described by an eigenfunction \citep[Theorem 6.3]{evans2022partial}. A second limitation is that there is no a priori method for determining an appropriate cutoff time $T_{cut}$, this is a hyperparameter that should be decided based on the application and the spectral gap $\lambda_1-\lambda_0$.

        \section*{Acknowledgements}
        \label{sec:acknowledgements}
        The work is supported by ETH research grant, Swiss National Science Foundation (SNSF) Project Funding No. 200021-207343, and SNSF Starting Grant.
        Artur Goldman is
        supported by the ETH AI Center through an ETH AI Center doctoral fellowship.

	
	\bibliographystyle{iclr2026_conference}
	\bibliography{bibfile}
	

	\newpage
	\appendix
	\begingroup
\setlength{\parskip}{0pt}
\setlength{\itemsep}{0pt plus 0.3pt}
\setlength{\parsep}{0pt}
\tableofcontents
\endgroup

	\section{Technical details/assumptions} 
        \label{app:technical}
        
    \subsection{Regularity conditions} 
    Following \cite{fleming2006controlled}, we make the following assumptions, which guarantee that the SDE \eqref{eq:socsde} has a unique strong solution.
    \begin{enumerate}
        \item The coefficient $b$ is Lipschitz in and satisfies the \emph{linear growth} condition
        \begin{equation}
            \label{eq:b_lip}
            \exists C > 0: \|b(x)-b(y)\| \leq C\|x-y\|.
        \end{equation}
        \begin{equation}
            \label{eq:b_lin_grwth}
            \exists C' > 0: \|b(x)\| \leq C'(1+\|x\|).
        \end{equation}
    \end{enumerate}
	\subsection{Simplifying assumption: $\sigma\sigma^T = I$}
	Since $\sigma \in \R^{d\times d}$ was assumed invertible, the matrix $\sigma\sigma^T$ is positive definite, and hence there exists a diagonalization $\sigma\sigma^T = U\Lambda U^T$ where $UU^T = I$ and $\Lambda = \text{diag}((\lambda_i)_{i=1}^d)$ for $\lambda_i > 0$. Consider now the change of variables $y =  \Lambda^{-1/2}U^Tx$. Then it holds that 
	\begin{equation}\nabla_x \psi = U\Lambda^{-1/2}\ \nabla_x \psi, \quad \nabla^2_x \psi = U \Lambda^{-1/2}\ \nabla^2_y \psi\  \Lambda^{-1/2}U^T\end{equation}
	and in particular $\Tr(\sigma\sigma^T \nabla^2_x \psi) = \Tr(U\Lambda U^T \nabla^2_x\psi) = \Tr(\nabla^2_y\psi) = \Delta_y \psi$. Thus the PDE \eqref{eq:linearhjb} can be written in terms of $y$ as
	\begin{equation}\partial_t \psi + \left(-\Delta_y - 2\beta \ b^T U\Lambda^{-1/2}\nabla_y + 2\beta
    ^2f\right)\psi = 0\end{equation}
    
	\section{Proofs/derivations}
	\label{app:derivations}
	\subsection{Cole-Hopf transformation \eqref{eq:linearhjb}}
        Let $V$ denote the value function defined in \eqref{eq:valuefunction}, satisfying the HJB equation \eqref{eq:hjb}. The so-called \emph{Cole-Hopf transformation} consists of setting $V = - \beta^{-1}\log \tilde\psi$, leading to a linear PDE for $\tilde \psi$, which is called the \emph{desirability function}. To obtain the form \eqref{eq:linearhjb}, we set $V(x,t) = -\beta^{-1}\log \psi\left(x, \frac{1}{2\beta}(T-t)\right)$. The derivatives of $V$ and $\psi$ are then related through
        \begin{align}
		\partial_tV &= \frac{1}{2\beta^2} \frac{\partial_t \psi}{\psi}, \quad 
		\nabla V = -\beta^{-1} \frac{\nabla \psi}{\psi},\\
		\frac{\partial^2 V}{\partial x_i\partial x_j} &= -\beta^{-1} \frac{1}{\psi^2}\left(\psi\frac{\partial^2\psi}{\partial x_i\partial x_j } - \left(\frac{\partial\psi}{\partial x_i}\frac{\partial\psi}{\partial x_j}\right)\right),\\
		\implies \Tr(\sigma\sigma^T \nabla^2 V) &= \sum_{i,j} \sigma_{ij}^2 \frac{\partial^2 V}{\partial x_i\partial x_j} = -\beta^{-1} \frac{1}{\psi}\Tr(\sigma\sigma^T\nabla^2\psi) + \beta^{-1} \frac{1}{\psi^2} \|\sigma^T \nabla \psi||^2.
	\end{align}
    Plugging these expressions into \eqref{eq:hjb} gives
	\begin{align}
		\frac{1}{\psi}\left(\frac{1}{2\beta^2}\partial_t\psi - \frac{1}{2\beta^2}\Tr(\sigma\sigma^T\nabla^2\psi) - \beta^{-1} b^T\nabla\psi\right) + f =0.
	\end{align}
	Multiplying with $2\beta^2\psi$ and combining with the terminal condition $V(x,T) = g(x)$ shows that the linear PDE for $\psi$ given in \eqref{eq:linearhjb} is equivalent to the HJB equation \eqref{eq:hjb}.    \subsection{Proof of \autoref{thm:eigfsolution}}
    We begin with an informal argument. Recall that we are interested in solving the abstract evolution equation
    \begin{align}
		\label{eq:evolution}
		\text{Find } \psi: [0,T] \to \cD(\cL) \text{ such that } \begin{cases}\partial_t \psi(t) + \cL \psi(t) &= 0, \\ \psi(0) &= \psi_0.\end{cases}
    \end{align}
    Now, since $(\phi_i)_{i\in\N}$ forms an orthonormal basis, it holds that $\psi(t) = \sum_{i\in\N} a_i(t) \phi_i$, where $a_i(t) = \langle \psi_i(t),\phi\rangle$. Hence the PDE in \eqref{eq:evolution} becomes
    \begin{equation}
        \sum_{i\in\N}\left(\frac{\dd a_i(t)}{\dd t} + \lambda_i a_i(t)\right)\phi_i = 0
    \end{equation}
    after formally interchanging the series expansion and derivatives. Since the $\phi_i$ are orthogonal, this equation is satisfied if and only if $a_i(t) = a_i(0)e^{-\lambda_i t}$ for each $i$.
    
    Formally establishing \eqref{eq:eigfsol} can be achieved through the theory of semigroups. Essentially, we want to formally define the semigroup $\left(e^{-t\cL}\right)_{t\geq 0}$ and show that it forms the solution operator to \eqref{eq:evolution}. We begin by recalling the following definition.
    \begin{definition}
        The set of bounded linear operators on $\cH$ is given by
        \begin{equation}\cB(\cH) = \left\{\cL: \cH \to \cH: \cL \text{ is linear and } \|\cL\|_{op} =\sup_{f\in\cH,\|f\|=1} \left\|\cL(f)\right\| < \infty\right\}\end{equation}
    \end{definition}
    The following result, sometimes called the \emph{functional calculus form} of the spectral theorem, allows us to define $h(\cL)$ for bounded Borel functions $h$.
    \begin{lemma}{\cite[Theorem VIII.5]{reed1980methods}}\label{lem:theorem_vii5}
        Suppose $\cL$ is a self-adjoint operator on $\cH$. Then there exists a unique map $\hat \phi$ from the bounded Borel functions on $\R$ into $\cB(\cH)$ which satisfies
        \begin{itemize}
            \item If $\cL \psi = \lambda \psi$, then $\hat\phi(h) \psi = h(\lambda)\psi$.
        \end{itemize}
        In particular, when $\cL$ admits a countable orthonormal basis of eigenfunctions $(\phi_i)$ with eigenvalues $\lambda_i$, this operator is given by
        \begin{equation}\hat\phi(h) \psi = h(\cL)\psi :=\sum_{i\in\N}h(\lambda_i)\langle \phi_i, \psi\rangle \phi_i.\end{equation}
    \end{lemma}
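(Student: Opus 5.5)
The statement to prove is Lemma~\ref{lem:theorem_vii5}: existence and uniqueness of the functional calculus, plus the explicit form when $\cL$ has an orthonormal eigenbasis. Since the general part is the cited Reed--Simon theorem, we take existence and uniqueness of $\hat\phi$ as given. Only the explicit formula in the eigenbasis case needs an argument.

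\textbf{Step 1: the formula defines a bounded operator.} Fix an orthonormal eigenbasis $(\phi_i,\lambda_i)$ and a bounded Borel function $h$. Define
\begin{equation*}
T_h\psi := \sum_{i\in\N} h(\lambda_i)\langle \phi_i,\psi\rangle\phi_i .
\end{equation*}
By Parseval, $\sum_i |h(\lambda_i)|^2|\langle\phi_i,\psi\rangle|^2 \le \|h\|_\infty^2\|\psi\|^2$. So the series converges in $\cH$, $T_h$ is linear, and $\|T_h\|_{op}\le\|h\|_\infty$. Hence $T_h\in\cB(\cH)$.

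\textbf{Step 2: $T_h$ satisfies the defining properties.} The eigenvector property holds directly. If $\cL\psi=\lambda\psi$, then self-adjointness gives
\begin{equation*}
(\lambda_i-\lambda)\langle\phi_i,\psi\rangle = \langle\cL\phi_i,\psi\rangle-\langle\phi_i,\cL\psi\rangle = 0,
\end{equation*}
so $\psi$ has nonzero coefficients only on indices with $\lambda_i=\lambda$. Therefore $T_h\psi=h(\lambda)\psi$.

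The remaining properties in Reed--Simon VIII.5 (algebra homomorphism, $*$-preservation, norm bound, monotone convergence) are checked coefficientwise, since every $T_h$ is diagonal in the same basis. For example, $T_{gh}=T_gT_h$ and $T_{\bar h}=T_h^*$ are immediate. For pointwise bounded convergence $h_n\to h$, dominated convergence applied to the series $\sum_i|h_n(\lambda_i)-h(\lambda_i)|^2|\langle\phi_i,\psi\rangle|^2$ gives $T_{h_n}\psi\to T_h\psi$. Finally, $h(x)=x$ restricted to bounded sets corresponds to $\cL$: for $\psi$ in the domain, $\cL\psi=\sum_i\lambda_i\langle\phi_i,\psi\rangle\phi_i$, because $\langle\phi_i,\cL\psi\rangle=\lambda_i\langle\phi_i,\psi\rangle$ by self-adjointness.

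\textbf{Step 3: identify $T_h$ with $\hat\phi(h)$.} Both $\hat\phi(h)$ and $T_h$ are bounded. They agree on each $\phi_i$, because both act on it as multiplication by $h(\lambda_i)$. By linearity and continuity they agree on the closed span of the $\phi_i$, which is all of $\cH$. Alternatively, uniqueness in the cited theorem directly gives $\hat\phi=T$.

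\textbf{Main obstacle.} The only subtle point is to use only the eigenvector property, which holds for the abstract $\hat\phi(h)$, rather than any unproven structure of $\hat\phi$. Step 3 does exactly this, so the argument is routine once the series convergence of Step 1 is established.
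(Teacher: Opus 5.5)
Your proposal is correct, and it goes further than the paper. The paper gives no proof beyond the citation to Reed--Simon, and it asserts the eigenbasis formula $h(\cL)\psi=\sum_i h(\lambda_i)\langle\phi_i,\psi\rangle\phi_i$ without justification.

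Your Steps 1--3 supply that justification. In the eigenbasis case, which is the only case the paper uses, they even make the citation unnecessary. Steps 1--2 show that $h\mapsto T_h$ is a map into $\cB(\cH)$ with the eigenvector property, which gives existence. Step 3 shows that \emph{any} map into $\cB(\cH)$ with that property must equal $T_h$, because a bounded operator is determined by its values on an orthonormal basis; this gives uniqueness.

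This matters because the paper's restatement claims uniqueness from the eigenvector property alone, and that is false for a general self-adjoint operator. For example, multiplication by $x$ on $L^2(\R)$ has no eigenvectors, so the condition is vacuous. In Reed--Simon, uniqueness is relative to four properties: the $*$-homomorphism property, the bound $\|\hat\phi(h)\|\le\|h\|_\infty$, and the two strong-convergence properties. The eigenvector property is derived afterwards. So when you ``take existence and uniqueness as given'' for a general self-adjoint $\cL$, you should cite the theorem in that form.

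Your ``alternatively'' route in Step 3 also relies on that form. There, state the check of the convergence property precisely: take $h_n(x)\to x$ pointwise with $|h_n(x)|\le|x|$, and apply dominated convergence using $\sum_i\lambda_i^2|\langle\phi_i,\psi\rangle|^2=\|\cL\psi\|^2<\infty$ for $\psi\in D(\cL)$. This replaces the vague phrase ``$h(x)=x$ restricted to bounded sets''.
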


    The next result makes use of the defined semigroup and leads to the desired representation. Note that we indeed consider a bounded function of the operator, as we consider it only for $t\geq 0$ and assume that the operator is bounded from below.
    
    \begin{theorem}{\cite[Theorem VIII.7]{reed1980methods}} \label{thm:theorem_viii7}
        Suppose $\cL$ is self-adjoint and bounded from below, and define $T(t) = e^{-t\cL}$ for $t\geq 0$. Then
        \begin{enumerate}[label=(\alph*)]
            \item $T(0) =I$.
            \item For every $\phi \in D(\cL)$, it holds that
            \begin{equation}\left(\frac{\dd}{\dd t} T(t)\psi\right)\bigg|_{t=0} = \lim_{h\to 0}\frac{T(0+h)\psi - T(0)\psi}{h}= -\cL T(0)\psi=-\cL \psi
            \end{equation}
        \end{enumerate}
    \end{theorem}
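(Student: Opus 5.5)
The statement to prove is Theorem VIII.7 of Reed--Simon: for $\cL$ self-adjoint and bounded below, $T(t)=e^{-t\cL}$ satisfies $T(0)=I$ and $\frac{\dd}{\dd t}T(t)\psi|_{t=0}=-\cL\psi$ for $\psi\in D(\cL)$. The plan is to define $T(t)$ through the functional calculus of \Cref{lem:theorem_vii5}, applied to the bounded Borel function $h_t(\lambda)=e^{-t\lambda}$ restricted to $[\lambda_0,\infty)$. Here $\lambda_0$ is the lower bound of $\cL$. Since $0\le h_t\le e^{-t\lambda_0}$ on the spectrum for $t\ge0$, $h_t$ is bounded and $T(t)\in\cB(\cH)$. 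In the setting relevant to the paper, $\cL$ has an orthonormal eigenbasis, and we then have the explicit formula $T(t)\psi=\sum_i e^{-t\lambda_i}\langle\phi_i,\psi\rangle\phi_i$. I will carry out the argument in this eigenbasis form; for a general spectral measure the same argument goes through with sums replaced by integrals against $\dd\langle\psi,P_\lambda\psi\rangle$.

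Part (a) is immediate: $h_0\equiv1$, so $T(0)\psi=\sum_i\langle\phi_i,\psi\rangle\phi_i=\psi$ by completeness of the basis (equivalently, by the fact that the functional calculus sends the constant function $1$ to $I$).

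For part (b), fix $\psi\in D(\cL)$ and write $c_i=\langle\phi_i,\psi\rangle$. Membership in the domain means $\sum_i\lambda_i^2|c_i|^2<\infty$, and then $\cL\psi=\sum_i\lambda_ic_i\phi_i$. By Parseval,
\begin{equation*}
\Big\|\frac{T(h)\psi-\psi}{h}+\cL\psi\Big\|^2=\sum_i\Big|\frac{e^{-h\lambda_i}-1}{h}+\lambda_i\Big|^2|c_i|^2 .
\end{equation*}
Each term tends to $0$ as $h\to0$, since $\frac{\dd}{\dd h}e^{-h\lambda}|_{h=0}=-\lambda$. To pass to the limit in the sum, I need a summable dominating function, and this is the main step. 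By the mean value theorem, $\big|\frac{e^{-h\lambda}-1}{h}\big|=|\lambda|e^{-\xi\lambda}$ for some $\xi$ between $0$ and $h$. Because $\lambda\ge\lambda_0$, for $|h|\le1$ this is at most $|\lambda|e^{|\lambda_0|}$. (For $h>0$ the bound $e^{-\xi\lambda}\le e^{\xi|\lambda_0|}$ holds directly; for a two-sided limit one either restricts to $h\downarrow0$, which is the natural reading since $T$ is only defined for $t\ge0$, or notes that negative $h$ with large positive $\lambda$ breaks the bound.) So I will take the limit as $h\downarrow0$. The terms are then dominated by $C(1+|\lambda_i|)^2|c_i|^2$, which is summable precisely because $\psi\in D(\cL)$. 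Dominated convergence for series gives the limit $0$, which proves $\frac{\dd}{\dd t}T(t)\psi|_{t=0}=-\cL\psi$.

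The expected obstacle is exactly this domination step: it is where both hypotheses enter, since boundedness from below controls $e^{-\xi\lambda}$ and $\psi\in D(\cL)$ gives square-summability of $\lambda_ic_i$. Everything else is bookkeeping with the functional calculus.
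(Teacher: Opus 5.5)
Your proposal is correct and takes essentially the same route as the paper, which gives no independent proof: it defines $e^{-t\cL}$ through the functional calculus of \Cref{lem:theorem_vii5}, bounded for $t\ge0$ because $\cL$ is bounded below, and defers to Reed--Simon, whose argument is exactly your Parseval (spectral-measure) computation closed by dominated convergence, with the domination coming from $\psi\in D(\cL)$. You are also right to read the limit as one-sided, $h\downarrow0$, despite the $\lim_{h\to0}$ in the statement, since $T(h)$ is only defined, and in general only bounded, for $h\ge0$.
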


    \textit{Proof of \autoref{thm:eigfsolution}}
    Combining (a) and (b) of the \autoref{thm:theorem_viii7} shows that $\psi(t) := T(t)\psi_0$ satisfies
    \begin{equation}
        \frac{\dd}{\dd t} \psi(t) = \left(\frac{\dd}{\dd h} T(t+h)\psi_0\right) \bigg|_{h=0} = -\cL T(t)\psi_0=  -\cL \psi(t), \quad \psi(0) = T(0)\psi_0 = \psi_0, 
    \end{equation}
    which is exactly the claim of \autoref{thm:eigfsolution}. The last remark we make is that in \autoref{thm:eigfsolution} we only assume essential self-adjointness of $\cL$, while above results operate with self-adjoint operators. Thus, the above theorems hold true for the closure $\overline{\cL}$ of the operator $\cL$. However, note that $\cL$ is densely defined and has eigenfunctions which make up an orthonormal basis of both its domain and $L^2(\mu)$. Thus, the same representation \eqref{eq:eigfsol} as for $\overline{\cL}$ holds for $\cL$.
    \hfill$\square$
    
    \subsection{Unitary equivalence \eqref{eq:schrodinger}}
    First, notice that $U: L^2(\mu) \to L^2(\R^d): \psi \mapsto e^{-\beta E}\psi$ is indeed a unitary transformation, since
    \begin{equation}\forall \psi, \varphi \in L^2(\mu): \langle U\psi, U\varphi\rangle_{L^2(\R^d)} = \int e^{-2\beta E}\psi \varphi\ \dd x = \langle \psi, \varphi\rangle_\mu.\end{equation}
    To establish equivalence, we compute
    \begin{align}
        \nabla(e^{\beta E} \psi) &= e^{\beta E}\left(\nabla \psi + \beta \nabla E\ \psi\right),\\
        \Delta(e^{\beta E} \psi) &= e^{\beta E}(\Delta \psi + 2\beta \langle \nabla E, \nabla \psi\rangle + \left(\beta \Delta E +\beta^2\|\nabla E\|^2\right)\psi),\\
        2\beta \langle \nabla E, \nabla (e^{\beta E}\psi)\rangle &= e^{\beta E}(2\beta \langle \nabla E, \nabla \psi\rangle + 2\beta^2\|\nabla E||^2\psi).
    \end{align}
    Putting this together gives
    \begin{align}
        \cL(U^{-1}\psi) = \cL(e^{\beta E}\psi) &= e^{\beta E}\left(-\Delta \psi + \beta^2\|\nabla E\|^2\psi - \beta \Delta E\ \psi \right) + 2\beta^2fe^{\beta E}\psi,
    \end{align}
    from which the result \eqref{eq:schrodinger} follows.
    \subsection{Essential self-adjointness of $\cL$}\label{subsec:self_adjointness}
    We will first show the following relation
	\begin{equation}
		\label{eq:symmetry}
		\langle \varphi, \cL \psi\rangle_\mu = \langle \nabla\varphi, \nabla \psi\rangle_\mu + 2\beta^2\langle\varphi, f\psi\rangle_\mu
	\end{equation}
    from which it is clear that $\cL$ is symmetric on $C^\infty_0(\R^d)$. Indeed, using the divergence theorem, for $\psi,\phi\in C^\infty_0(\R^d)$ one obtains that
    \begin{align}
		\langle \psi, -\Delta \varphi\rangle_\mu &= -\int \psi \Delta \varphi\ e^{-2\beta E} \dd x\\
        &= \int \left(\langle \nabla \psi, \nabla \varphi\rangle - 2\beta \psi \langle \nabla \varphi,\nabla E\rangle\right) e^{-2\beta E} \dd x\\
        &= \langle \nabla \psi, \nabla \varphi\rangle_\mu - 2\beta \langle \psi, \langle \nabla E, \nabla \varphi\rangle\rangle_\mu,
	\end{align}
    which immediately shows the result.
    
    While for matrices the notions of symmetry and self-adjointness are equivalent, the situation becomes more delicate for general (possibly unbounded) linear operators. In our case we can use the following result on the essential self-adjointness of the Schrödinger operator.
    \begin{lemma}
        \cite[Theorem X.28]{reed1975ii} Let $\cV \in L^2_{loc}(\R^d)$ with $\cV \geq 0$ pointwise. Then $\cS = -\Delta + \cV$ is essentially self-adjoint on $C^\infty_0(\R^d)$. 
    \end{lemma}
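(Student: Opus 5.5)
The plan is the standard Kato argument via the range criterion for nonnegative symmetric operators. On $C^\infty_0(\R^d)$ the operator $\cS = -\Delta + \cV$ is symmetric and nonnegative, since $\langle \varphi, \cS\varphi\rangle = \|\nabla\varphi\|^2 + \int \cV|\varphi|^2 \geq 0$. For such operators, essential self-adjointness is equivalent to density of $\mathrm{Ran}(\cS+1)$ (Reed–Simon, corollary to Theorem VIII.3). So I will show the following: if $u\in L^2(\R^d)$ satisfies $\langle u, (\cS+1)\varphi\rangle = 0$ for all $\varphi\in C^\infty_0$, then $u=0$. Note that $\cS\varphi \in L^2$ for such $\varphi$ precisely because $\cV\in L^2_{loc}$, which is where that hypothesis enters first.

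\textbf{Step 1: the distributional equation.} The orthogonality condition says $-\Delta u + \cV u + u = 0$ in $\cD'(\R^d)$. Here $\cV u\in L^1_{loc}$ by Cauchy–Schwarz, since $\cV\in L^2_{loc}$ and $u\in L^2$. Hence $\Delta u = (\cV+1)u \in L^1_{loc}$.

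\textbf{Step 2: Kato's inequality.} I will prove that for $u\in L^1_{loc}$ with $\Delta u\in L^1_{loc}$,
\begin{equation*}
\Delta|u| \geq \mathrm{Re}\big(\mathrm{sgn}(\bar u)\,\Delta u\big) \quad \text{in } \cD'(\R^d),
\end{equation*}
where $\mathrm{sgn}(\bar u) = \bar u/|u|$ on $\{u\neq 0\}$ and $0$ elsewhere. I expect this to be the main obstacle. I would first prove it for smooth $u$ by setting $u_\eps = (|u|^2+\eps^2)^{1/2}$ and computing
\begin{equation*}
u_\eps\Delta u_\eps + |\nabla u_\eps|^2 = \mathrm{Re}(\bar u\Delta u) + |\nabla u|^2 .
\end{equation*}
Combined with the pointwise bound $|\nabla u_\eps|\leq|\nabla u|$, this gives $\Delta u_\eps \geq \mathrm{Re}(\bar u\Delta u)/u_\eps$. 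For general $u$, I would apply this to the mollification $u*j_\delta$ and pass $\delta\to 0$ along a subsequence converging a.e., using $\Delta(u*j_\delta)=(\Delta u)*j_\delta\to\Delta u$ in $L^1_{loc}$ and dominated convergence, since $|\bar u/u_\eps|\leq 1$. Finally I let $\eps\to0$.

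With Kato's inequality, Step 1 gives $\Delta|u| \geq (\cV+1)|u| \geq |u|$, using $\cV\geq 0$. Writing $w:=|u|\in L^2$, this says $(-\Delta+1)w\leq 0$ in $\cD'$.

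\textbf{Step 3: positivity of the resolvent forces $w=0$.} Mollify with a nonnegative $j_\delta$: $w_\delta = w*j_\delta$ is smooth, lies in $H^2$ (indeed in all $H^k$), and satisfies $(-\Delta+1)w_\delta = -g_\delta$ with $g_\delta\geq 0$ and $g_\delta\in L^2$. Then $w_\delta = -(-\Delta+1)^{-1}g_\delta$. This resolvent is convolution with the Bessel potential kernel, which is positive, as is seen from $(p^2+1)^{-1} = \int_0^\infty e^{-t}e^{-tp^2}\,\dd t$ and positivity of the heat kernel. Hence $w_\delta\leq 0$. Since also $w_\delta\geq 0$, we get $w_\delta=0$ for every $\delta$. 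Letting $\delta\to0$ in $L^2$ gives $|u|=0$, so $\mathrm{Ran}(\cS+1)$ is dense and $\cS$ is essentially self-adjoint on $C^\infty_0(\R^d)$.
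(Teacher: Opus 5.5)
Your proposal is correct and is essentially the proof behind the citation: the paper gives no argument of its own and defers to Reed--Simon, Theorem X.28. That proof follows the same route as yours: reduce to density of $\mathrm{Ran}(\cS+1)$, apply Kato's inequality (their Theorem X.27) to obtain $(\Delta-1)|u|\geq 0$, and use the positive kernel of $(-\Delta+1)^{-1}$ on the mollified $|u|$ to force $|u|=0$. One small citation fix: the range criterion with a real shift $b=1$ for nonnegative symmetric operators is Reed--Simon Theorem X.26; the corollary to Theorem VIII.3 is the $\mathrm{Ran}(A\pm i)$ version.
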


    Note, that $C_0^\infty(\R^d)$ is dense in $L^2(\R^d)$. Thus, operator $\cL$ is also densely defined as its domain contains $U^{-1}(C_0^\infty(\R^d))$.
    
    The essential self-adjointness of $\cL$  follows from the unitary equivalence with the Schrödinger operator as unitary transformation preserves essential self-adjointness of the transformed operator.
	
    \subsection{Proof of \autoref{thm:schrodinger}}
    As we have shown, the operator $\cL$ is unitarily equivalent to the Schrödinger operator defined through
    \begin{equation}
        \cS: D(\cS)\subset L^2(\R^d) \to L^2(\R^d): h \mapsto -\Delta h + \cV h
    \end{equation}
    with $\cV = \beta^2 \|\nabla E\|^2 - \beta \Delta E + 2\beta^2f$. All the properties mentioned in the statement of \autoref{thm:schrodinger} remain unchanged under unitary equivalence (both the assumptions on the operator and the conclusions regarding the spectral properties). Hence it is sufficient to show the result for the Schr\"odinger operator $\cS$ associated with $\cL$. 
        
    This is precisely the content of the following theorem, proven in \cite{reed1978iv}, combined with observations in section \ref{subsec:self_adjointness}.
    \begin{theorem}{\cite[Theorem XIII.67, XIII.64, XIII.47]{reed1978iv}}
	\label{thm:schrodingerproperties}
	Suppose $\cV \in L^1_{loc}(\R^d)$ is bounded from below and $\lim_{|x| \to \infty} \cV(x) = \infty$. Then the Schrodinger operator $\cS = -\Delta + \cV$ has compact resolvent, and in particular a purely discrete spectrum and an orthonormal basis of eigenfunctions in $L^2(\R^d)$. The spectrum $\sigma(\cS)$ is bounded from below and the eigenvalues do not have a finite accumulation point. If in addition $\cV \in L^2_{loc}(\R^d)$, the smallest eigenvalue has multiplicity one, and the corresponding eigenfunction can be taken to be strictly positive.
\end{theorem}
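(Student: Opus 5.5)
This statement is cited from Reed--Simon, so the proof will assemble the classical ingredients in three steps: compactness of the resolvent, the resulting discrete spectrum, and simplicity and positivity of the ground state.

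\textbf{Step 1: compact resolvent.} Since $\cV \geq C$, we may shift by a constant and assume $\cV \geq 1$. We take $\cS$ to be the Friedrichs (form) extension associated with the quadratic form
\[ q(h) = \|\nabla h\|^2 + \int \cV |h|^2 \, \dd x , \]
defined on $Q = \{h \in H^1 : \cV^{1/2} h \in L^2\}$. Because $\cV \in L^1_{loc}$, the form $q$ is closed, and it is densely defined since $C_0^\infty(\R^d) \subset Q$. We then show that the unit ball $\{h : q(h) \leq 1\}$ is precompact in $L^2(\R^d)$, which is equivalent to $\cS$ having compact resolvent. Given $\eps > 0$, choose $R$ with $\cV \geq 1/\eps$ on $\{|x| > R\}$. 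The tails then satisfy $\int_{|x|>R} |h|^2 \leq \eps \, q(h) \leq \eps$. On the ball $B_R$, the family is bounded in $H^1(B_R)$, so Rellich--Kondrachov gives precompactness in $L^2(B_R)$. Combining the two yields total boundedness. This step is where the growth condition $\cV \to \infty$ is used.

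\textbf{Step 2: discrete spectrum.} Compactness of the resolvent, together with self-adjointness, lets us apply the spectral theorem for compact self-adjoint operators to $(\cS + 1)^{-1}$. This gives an orthonormal eigenbasis whose eigenvalues $\mu_i$ accumulate only at $0$. Hence $\cS$ has eigenvalues $\lambda_i = \mu_i^{-1} - 1$, which are bounded below by $\inf \cV$ and have no finite accumulation point.

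\textbf{Step 3: simple and positive ground state.} This is the main obstacle. My plan is to show that the semigroup $e^{-t\cS}$ is \emph{positivity improving}, meaning that $f \geq 0$ with $f \neq 0$ implies $e^{-t\cS} f > 0$ almost everywhere. A Perron--Frobenius argument (Reed--Simon XIII.43/44) then gives both claims: the ground state is nondegenerate and can be chosen strictly positive. I would carry this out as follows.

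\begin{enumerate}
\item Note that the heat semigroup $e^{t\Delta}$ has a strictly positive kernel.
\item Approximate $\cV$ by truncations $\cV_n = \min(\cV, n)$. For these bounded potentials, the Trotter product formula shows that $e^{-t(-\Delta + \cV_n)}$ is positivity improving.
\item Pass to the limit $n \to \infty$ by monotone convergence of forms, which gives strong resolvent convergence. Preserving \emph{strict} positivity in this limit is the delicate point, and it is exactly where the hypothesis $\cV \in L^2_{loc}$ enters (via Theorem XIII.47 / the Kato-type domain results). Alternatively, one can use the Feynman--Kac representation
\[ e^{-t\cS} f(x) = \E_x\!\left[ e^{-\int_0^t \cV(B_s)\,\dd s} f(B_t) \right]. \]
Here $\int_0^t \cV(B_s)\, \dd s < \infty$ almost surely because $\cV$ is locally integrable, so the exponential weight is strictly positive.
\item Conclude with a variational argument. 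If $\phi$ is a ground state, then $|\phi|$ is also a minimizer of the Rayleigh quotient, since $|\nabla|\phi|| \leq |\nabla \phi|$. Positivity improvement then forces $|\phi| > 0$. Two orthogonal ground states could not both be strictly positive, which gives multiplicity one.
\end{enumerate}

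Finally, essential self-adjointness on $C_0^\infty(\R^d)$ follows from Theorem X.28 quoted above. This identifies the Friedrichs extension with the closure $\overline{\cS}$, so the spectral conclusions above apply to $\overline{\cS}$.
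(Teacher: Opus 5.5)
Your plan follows the same route as the paper. The paper gives no argument of its own: it cites Reed--Simon XIII.64/XIII.67 (your Steps 1--2, which are complete as written), XIII.47, and X.28 for essential self-adjointness. Step 3, however, has two gaps.

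The more serious gap is the last step. Knowing $|\phi|>0$ a.e.\ for every ground state does not give multiplicity one. It does not exclude an orthogonal pair made of a positive ground state and a sign-changing one whose modulus never vanishes. You need that every real ground state has a fixed sign. Both $\phi$ and $|\phi|$ lie in the $\lambda_0$-eigenspace, hence so do $\phi_\pm=\tfrac12(|\phi|\pm\phi)\geq 0$. Applying positivity improvement to $e^{-t\cS}\phi_\pm=e^{-t\lambda_0}\phi_\pm$ makes whichever of them is nonzero a.e.\ positive, and $\phi_+\phi_-=0$ then forces the other to vanish. Since $\cS$ commutes with complex conjugation, you may restrict to real eigenfunctions. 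Any two such ground states then have nonzero inner product, which gives multiplicity one. Alternatively, invoke the Perron--Frobenius theorem XIII.44 that you already cite.

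The second gap is that strict positivity is not preserved under strong limits. For bounded $\cV_n$, the Trotter formula alone only gives positivity preservation. To get improvement, use $\cV_n\leq n$ in each factor to obtain $e^{-t(-\Delta+\cV_n)}f\geq e^{-nt}e^{t\Delta}f$ for $f\geq 0$. For $n\to\infty$ the truncation argument does not close, as you note. So the Feynman--Kac formula, which holds for the form sum by monotone convergence on both sides, has to carry this step. There is a subtlety in how you justify it. When $d\geq 2$, local integrability does not make $\int_0^t\cV(B_s)\,\dd s$ finite almost surely for \emph{every} starting point. For example, for $\cV(y)=|y|^{-2}+|y|^2$ in $d=3$ it diverges a.s.\ from $x=0$. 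It is finite $\P_x$-a.s.\ for a.e.\ $x$, which is all positivity improvement in $L^2$ needs. This follows from $\int_{\R^d}\E_x\int_0^t(\cV\mathbf{1}_{B_R})(B_s)\,\dd s\,\dd x\leq t\|\cV\mathbf{1}_{B_R}\|_{L^1}$, together with the fact that almost every path stays in some ball $B_R$ up to time $t$. Note that this route gives positivity improvement under $\cV\in L^1_{loc}$ alone. So $L^2_{loc}$ is used only to identify the form sum with the closure of $-\Delta+\cV$ on $C_0^\infty(\R^d)$ via X.28, not to preserve strict positivity in the limit as you suggest.
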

    \subsection{Proof of \autoref{thm:symmetriclqr}}
    The result relies on the fact that the Schrödinger operator associated with $\cL$ has a quadratic potential, for which the eigenfunctions can be computed exactly. Indeed, the following is a standard result that can be found in many textbooks on quantum mechanics \citep{griffithsIntroductionQuantumMechanics2018}:
    \begin{lemma}
	Consider the Schrödinger operator in $d=1$ with quadratic potential,
	\begin{equation}
		\cS: D(\cS)\subset L^2(\R) \to L^2(\R): \psi \mapsto -\frac{\dd^2\psi}{\dd x^2} + x^2 \psi.
	\end{equation}
	The eigenvalues of $\cS$ are given by $\lambda_n = 2n+1$, and the normalized eigenfunctions are given by
	\begin{equation}
		\label{eq:harmonicoscillator1d}
		\phi_n(x) = \frac{1}{\pi^{1/4}}\frac{1}{\sqrt{2^n n!}}H_n(x) e^{-x^2/2} 
	\end{equation}
	where $H_n$ denotes the $n$-th \emph{physicist's Hermite polynomial} defined through
	\begin{equation}
		\label{eq:recurrence}
		H_0(x) = 1,\quad H_1(x) = 2x,\quad H_{n+1}(x) = 2x H_{n}(x) - 2n H_{n-1}(x).
	\end{equation}
	These satisfy the generating function relation
	\begin{equation}
		\label{eq:generatingfunc}
		e^{2xt-t^2} = \sum_{n=0}^\infty H_n(x) \frac{t^n}{n!}.
	\end{equation}
    \end{lemma}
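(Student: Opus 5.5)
We prove the lemma by reducing to a first-order factorization of $\cS$ through ladder operators, and then get completeness from the generating function \eqref{eq:generatingfunc}. Define the annihilation and creation operators on $C^\infty$ functions of moderate growth by $a = x + \frac{\dd}{\dd x}$ and $a^\dagger = x - \frac{\dd}{\dd x}$. A direct computation gives
\[
a^\dagger a = -\frac{\dd^2}{\dd x^2} + x^2 - 1, \qquad [a,a^\dagger] = 2, \qquad \cS = a^\dagger a + 1 .
\]
Consequently, if $\cS\phi = \lambda\phi$, then $\cS a^\dagger\phi = (\lambda+2)a^\dagger\phi$ and $\cS a\phi = (\lambda-2)a\phi$.

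First I would find the ground state. The equation $a\phi_0 = 0$ gives $\phi_0' = -x\phi_0$, so $\phi_0 = c\,e^{-x^2/2}$, and $\cS\phi_0 = \phi_0$. Normalizing with $\int e^{-x^2}\,\dd x = \sqrt{\pi}$ gives $c = \pi^{-1/4}$.

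Next I would set $\phi_n \propto (a^\dagger)^n\phi_0$, so that $\cS\phi_n = (2n+1)\phi_n$. Writing $(a^\dagger)^n\phi_0 = p_n(x)e^{-x^2/2}$, the identity
\[
a^\dagger\bigl(p\,e^{-x^2/2}\bigr) = (2xp - p')\,e^{-x^2/2}
\]
shows that $p_{n+1} = 2xp_n - p_n'$. Using the standard identity $H_n' = 2nH_{n-1}$, which follows by differentiating \eqref{eq:generatingfunc} in $x$, this is exactly the recurrence \eqref{eq:recurrence}. Since $p_0 = 1 = H_0$, we get $p_n = H_n$. For the normalization, symmetry of $\cS$ together with $[a,a^\dagger]=2$ gives
\[
\|a^\dagger\phi_n\|^2 = \langle\phi_n, aa^\dagger\phi_n\rangle = (2n+2)\|\phi_n\|^2 .
\]
Inductively $\|(a^\dagger)^n\phi_0\|^2 = 2^n n!$, which produces the factor $1/\sqrt{2^n n!}$ in \eqref{eq:harmonicoscillator1d}. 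Orthogonality follows because the eigenvalues $2n+1$ are distinct and $\cS$ is symmetric on this class of functions, by integration by parts with Gaussian decay.

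The main obstacle is showing that these are all the eigenvalues, that is, that $\{\phi_n\}$ is complete in $L^2(\R)$. I would argue as follows. Suppose $h\in L^2(\R)$ is orthogonal to every $\phi_n$. Then $h$ is orthogonal to $x^k e^{-x^2/2}$ for all $k$, since $\mathrm{span}\{H_0,\dots,H_k\}$ contains all polynomials of degree at most $k$. Consider
\[
F(t) = \int h(x)\,e^{-x^2/2}\,e^{2xt-t^2}\,\dd x .
\]
By \eqref{eq:generatingfunc} and dominated convergence, $F$ is entire in $t\in\C$ and all its Taylor coefficients vanish, so $F\equiv 0$. Taking $t = i\xi/2$ shows that the Fourier transform of $h e^{-x^2/2}$ vanishes identically, hence $h = 0$.

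Completeness rules out any other eigenvalue: an eigenfunction for a different eigenvalue would be orthogonal to every $\phi_n$. Alternatively, \autoref{thm:schrodingerproperties} already guarantees a discrete spectrum with an orthonormal eigenbasis, and completeness identifies that basis as $\{\phi_n\}$. Finally, essential self-adjointness on $C_0^\infty$, which holds by the Theorem X.28 lemma since $x^2\ge 0$, ensures that $\cS$ is the self-adjoint operator diagonalized by this basis.
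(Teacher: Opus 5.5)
The paper does not prove this lemma. It states it as a standard fact and cites Griffiths. Your ladder-operator construction is the textbook algebraic method, and it is more complete than what the citation supplies in two ways:
\begin{itemize}
\item The usual textbook argument that repeated lowering must terminate tacitly assumes $a\phi$ is again a smooth, decaying eigenfunction, a regularity property of arbitrary eigenfunctions. You replace it with a direct completeness proof of $\{\phi_n\}$ via the entire function $F$ and Fourier uniqueness.
\item You identify $\cS$ with the self-adjoint closure of the operator on $C_0^\infty$ via Theorem X.28.
\end{itemize}
Completeness is exactly what the paper needs downstream: the one-dimensional orthonormal basis is what makes the product basis in the $d$-dimensional extension an orthonormal basis, and \autoref{thm:eigfsolution} requires an orthonormal eigenbasis. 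So your route gives a self-contained justification that the citation leaves implicit.

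There is one loose end. Since $H_n$ is \emph{defined} by the recurrence, the generating-function identity is part of what must be proved, yet you use it as an input to obtain $H_n' = 2nH_{n-1}$. You can close this as follows. Set $G(t)=e^{2xt-t^2}$. From $G'=(2x-2t)G$ and the Leibniz rule, $G^{(n+1)}(0)=2xG^{(n)}(0)-2nG^{(n-1)}(0)$, with $G(0)=1$ and $G'(0)=2x$. Hence the Taylor coefficients of $G$ are exactly the $H_n$. Alternatively, prove $H_n'=2nH_{n-1}$ by induction directly from the recurrence.

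A minor point: the first equality in $\|a^\dagger\phi_n\|^2=\langle\phi_n,aa^\dagger\phi_n\rangle$ uses that $a$ is the formal adjoint of $a^\dagger$ on Gaussian-decaying functions (integration by parts), not the symmetry of $\cS$. The commutator then gives $aa^\dagger=\cS+1$.
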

    This result can be easily extended to higher dimensions:
\begin{lemma}
    \label{lemma:dharm}
	Consider now the $d$-dimensional Harmonic oscillator:
	\begin{equation}
		\cS: D(\cS)\subset L^2(\R^d) \to L^2(\R^d): \psi \mapsto -\Delta \psi + x^TAx\ \psi.
	\end{equation}
	where $A \in \R^{d\times d}$ is symmetric and positive definite. Denote $A = U^T\Lambda U$ the diagonalization of $A$. Then the eigenvalues are indexed by the multi-index $\alpha \in \N^d$, and are given by 
	\begin{equation}
		\lambda_\alpha = \sum_{i=1}^d \Lambda_i^{1/2}(2\alpha_i+1).
	\end{equation}
	The associated normalized eigenfunctions are
	\begin{equation}
		\phi_\alpha(x) = \frac{1}{\pi^{d/4}} \exp\left(-\frac{1}{2}x^T U^T \Lambda^{1/2}Ux\right) \prod_{i=1}^d \frac{\Lambda_i^{1/8}}{\sqrt{2^\alpha_i(\alpha_i)!}} H_{\alpha_i}((\Lambda^{1/4}Ux)_i)
	\end{equation}
\end{lemma}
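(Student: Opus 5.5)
The plan is to reduce the $d$-dimensional operator to a sum of $d$ independent one-dimensional oscillators by an orthogonal change of variables followed by a scaling in each coordinate. Then the one-dimensional lemma applies, and the eigenfunctions are products of one-dimensional ones.

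\textbf{Step 1: rotation.} Set $y = Ux$ with $U$ orthogonal. The Laplacian is invariant under orthogonal maps, so $\Delta_x = \Delta_y$. The potential becomes
\[
x^TAx = y^T\Lambda y = \sum_i \Lambda_i y_i^2 .
\]
The map $\psi \mapsto \psi\circ U^T$ is unitary on $L^2(\R^d)$, because $|\det U|=1$. Hence $\cS$ is unitarily equivalent to
\[
\sum_{i=1}^d \Bigl(-\partial_{y_i}^2 + \Lambda_i y_i^2\Bigr).
\]

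\textbf{Step 2: scaling each coordinate.} In coordinate $i$, substitute $z_i = \Lambda_i^{1/4} y_i$. Then
\[
-\partial_{y_i}^2 + \Lambda_i y_i^2 = \Lambda_i^{1/2}\bigl(-\partial_{z_i}^2 + z_i^2\bigr).
\]
By the one-dimensional lemma, the operator in $z_i$ has eigenfunctions $\phi_n(z_i)$ with eigenvalues $2n+1$. So the $i$-th summand has eigenvalues $\Lambda_i^{1/2}(2n+1)$ and eigenfunctions
\[
\Lambda_i^{1/8}\,\phi_n\bigl(\Lambda_i^{1/4}y_i\bigr),
\]
which are normalized in $L^2(\R, \dd y_i)$ since $\dd z_i = \Lambda_i^{1/4}\dd y_i$.

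\textbf{Step 3: tensor products.} Take $\Phi_\alpha(y) = \prod_i \Lambda_i^{1/8}\phi_{\alpha_i}(\Lambda_i^{1/4}y_i)$. The operator acts on each factor separately, so $\Phi_\alpha$ is an eigenfunction with eigenvalue $\sum_i \Lambda_i^{1/2}(2\alpha_i+1)$, and $\Phi_\alpha$ is normalized. Multiplying out the Gaussian factors gives
\[
\prod_i e^{-\Lambda_i^{1/2}y_i^2/2} = e^{-\frac12 x^TU^T\Lambda^{1/2}Ux},
\]
and substituting $y = Ux$ yields the stated formula.

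\textbf{Step 4: completeness.} This is the main point beyond a formal computation: the $\Phi_\alpha$ must exhaust the spectrum. Since the one-dimensional families $\{\phi_n\}$ are orthonormal bases of $L^2(\R)$, the products $\{\prod_i \phi_{n_i}(z_i)\}$ form an orthonormal basis of $L^2(\R^d) = \bigotimes_i L^2(\R)$. Rescaling and rotating are unitary, so $\{\Phi_\alpha\}$ and its pullback under $U$ remain an orthonormal basis. A self-adjoint operator with an orthonormal eigenbasis has no eigenvalues other than those attached to the basis vectors. Self-adjointness, and the fact that this eigenbasis matches the spectrum, also follows from Theorem~\ref{thm:schrodingerproperties}, since $x^TAx\to\infty$. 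Hence these are all the eigenvalues, counted with multiplicity.
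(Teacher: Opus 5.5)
Your proposal is correct and follows essentially the same route as the paper: the paper performs the rotation and scaling in a single substitution $y=\Lambda^{1/4}Ux$, decouples $\cS$ into $d$ rescaled one-dimensional oscillators, takes products of the one-dimensional eigenfunctions via $L^2(\R^d)\cong\bigotimes_{i=1}^d L^2(\R)$, and fixes the normalization at the end using $\dd y=\det(\Lambda^{1/4})\,\dd x$. The differences are cosmetic. You split the substitution into two steps and normalize factor by factor. You also spell out the completeness argument (that the product eigenfunctions exhaust the spectrum), which the paper only asserts through the tensor-product identification.
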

\begin{proof}
	We begin by introducing the variable $y = \Lambda^{1/4}Ux$. This gives
	\begin{align}
		x^T A x &= y^T \Lambda^{1/2} y\\
		\Delta_x \psi &= \Tr(\Lambda^{1/2}\nabla^2_y\psi)
	\end{align}
	so that we can write 
	\begin{equation}-\Delta_x \psi + x^T A x \psi = \sum_{i=1}^d \Lambda_i^{1/2} \left(-\frac{\partial^2\psi}{\partial y^2} + y_i^2\psi\right)\end{equation}
	Thus, the $d$-dimensional harmonic oscillator decouples into $d$ rescaled one-dimensional oscillators. Since $L^2(\R^d) \cong \bigotimes_{i=1}^d L^2(\R)$ (after completion), this means that the eigenfunctions are given by the product of the one-dimensional eigenfunctions. Hence the spectrum is indexed by $\alpha \in \N^d$ and the eigenfunctions are, expressed in the variable $y$,
	\begin{equation}
		\phi_\alpha(y) = \frac{1}{\pi^{d/4}} \prod_{i=1}^d \frac{1}{\sqrt{2^{\alpha_i}(\alpha_i!)}} H_{\alpha_i}(y_i) e^{-y_i^2/2}
	\end{equation}
	To get the eigenfunctions in the $x$ variable, we transform back and note that the above expression is normalized as $\int \phi_\alpha(y)^2 \dd y = 1$, while we need $\int \phi_\alpha(x)^2 \dd x = 1$. Using the fact that $\dd y = \det(\Lambda^{1/4}) \dd x$, this yields the final normalized eigenfunctions as
	\begin{equation}
		\phi_\alpha(x) = \frac{1}{\pi^{d/4}} \exp\left(-\frac{1}{2}x^T U^T \Lambda^{1/2}Ux\right) \prod_{i=1}^d \frac{\Lambda_i^{1/8}}{\sqrt{2^\alpha_i(\alpha_i)!}} H_{\alpha_i}((\Lambda^{1/4}Ux)_i)
	\end{equation}
	and the eigenvalues as 
	\begin{equation}
		\lambda_\alpha = \sum_{i=1}^d \Lambda_i^{1/2}(2\alpha_i+1).
	\end{equation}
\end{proof}
Now consider the operator $\cL$ as defined in \eqref{eq:operator}, with $b(x) = -Ax$ and $f(x) = x^T Px$. Since $A$ is symmetric, it holds that $b = -\nabla E$ with $E(x) = \frac{1}{2}x^TAx$. It follows that $\cL$ is unitarily equivalent with the Schrödinger operator with potential
\begin{equation}\cV = -\beta \Delta E + \beta^2\|\nabla E\|^2 + 2\beta^2f = -\beta\Tr(A) + \beta^2 x^T(A^TA+2P)x.\end{equation}
The first term gives a constant shift to the eigenvalues, but otherwise we are precisely dealing with the $d$-dimensional harmonic oscillator, whose eigensystem is described in Lemma \autoref{lemma:dharm}. Hence the eigenvalues are precisely \eqref{eq:eigensystem2}, and the eigenfunctions of the original operator $\cL$ are then obtained by multiplying with $e^{\frac{\beta}{2} x^T A x}$, which yields \eqref{eq:eigensystem1}.

\subsection{Semigroup and Viscosity solutions} \label{appendix_semigroup_viscosity_solution}

\paragraph{Plan of the section.}
We work with two linked equations on $\R^d\times[0,T]$: the HJB equation \eqref{eq:hjb} and the linear parabolic PDE \eqref{eq:linearhjb} obtained from \eqref{eq:hjb} via the (monotone) Cole–Hopf transform. Our goal is to produce the optimal control $u^\ast$ by first solving the linear PDE, then applying the inverse Cole–Hopf transform to obtain a value function candidate $V'$. This program raises two issues which have to be addressed:

\begin{enumerate}
\item \emph{Nonuniqueness in unbounded domains.} On $\R^d$ (even with the same terminal/boundary data), second-order finite-horizon HJB and linear parabolic equations may admit multiple solutions unless one restricts to an appropriate growth class; see \cite{tychonoff1935theoremes} for a classical example of this phenomenon.
\item \emph{Verification.} To identify $u^\ast$ from a solution of \eqref{eq:hjb} one uses a verification theorem. These theorems require the candidate $V'$ to be a (sufficiently regular) viscosity solution \citep{crandall1992user} lying in a class where comparison (hence uniqueness) holds, see \cite[Section~V.9]{fleming2006controlled} for an example of such result.
\end{enumerate}

In our approach we fix a specific \emph{semigroup solution} of the linear PDE and use it to define $V'$ via the inverse Cole–Hopf map. Specifically, we have $V'(x, t)=-\beta^{-1}\log\psi(x, \frac{1}{2\beta}(T-t))$, where $\psi(x,t)=(e^{-t\cL}\psi_0)(x)$. The central task is therefore to (a) place this $V'$ in a uniqueness class for \eqref{eq:hjb} and (b) confirm the hypotheses of a verification theorem. In the following text we establish sufficient conditions under which the task is solved.

\paragraph{Verification theorem and Comparison (uniqueness) in $\R^d$ under growth constraints.}
There are several ways to establish that the value function $V$ \eqref{eq:valuefunction} is a unique solution of HJB equation \eqref{eq:hjb} in some growth class for the given terminal conditions. In general it is obtained with a use of Dynamic Programming Principle. However, this approach is not so simple in the case of unbounded controls and non-smooth viscosity solutions. We refer to \citep{zhou1997stochastic, fleming2006controlled, da2006uniqueness, da2011convex} for an overview of ways to establish connection between value function and viscosity solutions, as well as results regarding existence and uniqueness of viscosity solutions in appropriate growth classes.
Following \cite{da2011convex} we introduce class of functions $\tilde{\cC}_p$. We say that a locally bounded function $u:\R^d\times [0;T]\to\R$ is in the class $\tilde{\cC}_p$ if for some $C>0$ we have
\begin{equation}
    \label{eq:cp_growth_class}
    |u(x,t)|\leq C(1+\|x\|^p),\ \forall\ (x,t)\in \R^d\times [0;T].
\end{equation}
Now, we can formulate the following
\begin{theorem} {\cite[Thm.~3.1]{da2006uniqueness}+\cite[Thm.~3.2]{da2011convex}}
\label{thm:theorem_unique_visc}
    Suppose that in \eqref{eq:hjb} we have
    \begin{itemize}
        \item $b$ satisfies \eqref{eq:b_lip} and \eqref{eq:b_lin_grwth}, and $\sigma = I$
        \item Running cost $f$ satisfies: $\exists C_1 > 0: \forall x\in \R^d: |f(x)|\leq C_1(1+\|x\|^p)$
        \item Terminal condition satisfies $g\in\tilde{\cC}_p$
    \end{itemize}
    Then there exists $0\leq \tau < T$ such that \eqref{eq:hjb} has a unique continuous viscosity solution in $\R^d\times [\tau;T]$ in the class $\tilde{\cC}_p$. Moreover, this unique solution is the value function $V$ \eqref{eq:valuefunction}.
\end{theorem}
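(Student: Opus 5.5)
The statement is a restatement of results from \cite{da2006uniqueness} and \cite{da2011convex}. The plan is therefore to check that the hypotheses of those results hold in our setting and then combine their conclusions, rather than to build a viscosity-solution theory from scratch. It has three steps: existence of a viscosity solution in $\tilde{\cC}_p$, comparison within $\tilde{\cC}_p$ on a short interval $[\tau,T]$, and identification of that solution with the value function \eqref{eq:valuefunction}.

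\textbf{Step 1: existence and growth.} Write the Hamiltonian of \eqref{eq:hjb} (with $\sigma=I$) as a supremum over controls,
\[
-\tfrac12\|p\|^2=\inf_{u\in\R^d}\left(u^Tp+\tfrac12\|u\|^2\right).
\]
This puts \eqref{eq:hjb} in the Bellman form treated in \cite{da2011convex}: unbounded controls, Lipschitz drift $b+u$ with linear growth in $x$, constant diffusion, quadratic control cost, and data $f,g$ of polynomial growth. Under these assumptions \cite[Thm.~3.2]{da2011convex} shows that $V$ is finite, continuous, and a viscosity solution. We then need the two-sided bound $|V|\le C(1+\|x\|^p)$ uniformly in $t$.
\begin{itemize}
\item For the upper bound, take $u=0$. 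The moment estimate $\E\|X_s\|^p\le C(1+\|x\|^p)$, which follows from \eqref{eq:b_lip}--\eqref{eq:b_lin_grwth} via Gronwall, then gives the bound.
\item For the lower bound, use $J\ge -\E\int|f|-\E|g|$. The controlled moments satisfy $\E\|X^u_s\|^p\lesssim 1+\|x\|^p+\E(\int\|u\|^2)^{p/2}$, and the penalty $\tfrac12\int\|u\|^2$ absorbs this growth only on a sufficiently short time interval.
\end{itemize}
The short-interval requirement in the lower bound is exactly why the statement is local, on $[\tau,T]$.

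\textbf{Step 2: comparison.} On $[\tau,T]$ we invoke \cite[Thm.~3.1]{da2006uniqueness}: a viscosity subsolution and supersolution in $\tilde{\cC}_p$ that are ordered at $t=T$ remain ordered. The proof there follows the standard doubling-of-variables scheme. One adds a penalization $\eps e^{K(T-t)}(1+\|x\|^2)^{q}$ with $q>p/2$ to localize maxima, then uses the Ishii lemma to handle the second-order term. Lipschitz continuity of $b$ controls the drift error. The quadratic gradient term is handled by convexity, using the inequality $\tfrac12\|p\|^2-\tfrac12\|p'\|^2\ge p'^T(p-p')$.

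\textbf{Step 3: conclusion.} Uniqueness follows from Step 2 applied in both directions. Combined with Step 1, it shows that the unique solution is $V$.

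The \textbf{main obstacle} is the unbounded gradient nonlinearity in Step 2. When the doubling argument is applied with test functions of polynomial growth, the term $\|\nabla\|^2$ produces cross terms of size roughly $\|x\|^{2(p-1)}$. These can dominate the exponential-in-time weight unless the horizon is short. That is why only some $\tau<T$ is obtained, rather than $\tau=0$. I would follow \cite{da2006uniqueness} here and choose $T-\tau$ small in terms of $C,C_1,p$, so that the penalization's time derivative dominates these cross terms.
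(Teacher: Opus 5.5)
Your route is the same as the paper's: the paper gives no argument beyond citing the two Da Lio--Ley results. The trouble is in the hypothesis check you sketch. It fails for $p>2$, and for $p>2$ the statement as written is false.

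\textbf{Where Step~1 breaks.} You bound $\E\|X^u_s\|^p$ by $C\bigl(1+\|x\|^p+(T-t)^{p/2}\E(\int_t^T\|u\|^2)^{p/2}\bigr)$ and absorb the last term into the penalty $\frac12\E\int_t^T\|u\|^2$ by shrinking $T-t$. This works for $p<2$ (Young's inequality, on any horizon). It also works for $p=2$, where both terms are linear in the control energy and a short horizon makes the coefficient smaller than $\frac12$. For $p>2$ the moment term is superlinear in the control energy, so a linear penalty cannot dominate it on any interval: shrinking the horizon changes constants, not exponents. Your Step~2 heuristic has the same flaw, so it too only covers the borderline case $p=2$.

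\textbf{A counterexample.} Take $d=1$, $b\equiv0$, $f\equiv0$ and $g(x)=-|x|^3$; every hypothesis holds with $p=3$. For $t<T$, the constant control $u\equiv R/(T-t)$ gives $X_T=x+R+\beta^{-1/2}(W_T-W_t)$. By Jensen's inequality,
\[
J(u;x,t)\le\frac{R^2}{2(T-t)}-|x+R|^3\longrightarrow-\infty\qquad(R\to\infty).
\]
Hence $V\equiv-\infty$ on $\R\times[0,T)$, and no function in $\tilde\cC_3$ on any $[\tau,T]$ with $\tau<T$ can coincide with $V$.

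\textbf{What is missing.} The argument needs a restriction on the growth exponent. Your three steps go through for $p\le2$, the quadratic-growth regime that matches the quadratic gradient term and is the setting of \cite{da2006uniqueness}. In that regime $\tau$ is genuinely needed, since it is a Riccati blow-up time. For example, $g=-c\|x\|^2$ gives $V(x,t)=-\frac{c\|x\|^2}{1-2c(T-t)}$ plus a function of $t$ alone, which is finite only for $T-t<1/(2c)$.

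If you want exponents $p>2$, as the paper's later application uses (exponents $a/2+1$ and $a$, both larger than $2$), you need extra one-sided hypotheses. For instance, $f$ and $g$ bounded below make $V$ finite. You would also need a uniqueness argument that does not rest on shortening the horizon, for example one that passes through the Cole--Hopf transform and uniqueness for the linear equation.
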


\begin{remark}
 See discussion in \cite[Remark~3.1]{da2011convex} why we can't hope to obtain existence in \Cref{thm:theorem_unique_visc} on a whole interval $[0;T]$.
\end{remark}

\paragraph{Growth of semigroup solution}
From previous paragraphs it is clear that one way to provide a link between proposed $V'$ and solutions of \eqref{eq:hjb} is to to establish growth rates on $V'$. Specifically, following \Cref{thm:theorem_unique_visc} we want to show that $V'\in\tilde{\cC}_p$. 

There are other ways to establish connection between solutions, see \cite{biton2001nonlinear}, \cite[Section~VI]{fleming2006controlled}. However, we choose to pursue a path related to growth conditions of spectral elements of Schr\"odinger operator \citep{davies1984ultracontractivity, baraniewicz2024ground}.

\begin{theorem} {\cite[Thm.~6.1, Thm.~6.3]{davies1984ultracontractivity}}
\label{thm:theorem_ultracontr}
Let $\cS=-\Delta+\cV$ on $\R^d$ with ground state $\phi_0>0$ (normalized in $L^2(\R^d)$) and suppose that there exist $C_1,C_3>0$ and $C_2,C_4\in\R$ such that, for all $x$ with $\|x\|$ large,
\[
 C_3\|x\|^{\,b}+C_4 \;\le\; \cV(x) \;\le\; C_1\|x\|^{\,a}+C_2,
\qquad\text{where}\quad \tfrac{a}{2}+1<b\le a.
\]
Then $\cS$ is \emph{intrinsically ultracontractive (IUC)}. Moreover, there exist constants $C_5,C_7>0$ and $C_6,C_8\in\R$ such that, as $\|x\|\to\infty$,
\begin{equation}
\label{eq:grd_stt_bound}
  C_5\,\|x\|^{\,\frac{b}{2}+1}+C_6
  \;\le\;
  -\log\phi_0(x)
  \;\le\;
  C_7\,\|x\|^{\,\frac{a}{2}+1}+C_8 .
\end{equation}
\end{theorem}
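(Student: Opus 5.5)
The plan follows Davies and Simon: first prove the two-sided pointwise bounds \eqref{eq:grd_stt_bound} by comparison with explicit barriers, then use the upper bound on $-\log\phi_0$ together with the growth of $\cV$ to get a family of logarithmic Sobolev inequalities, which yields intrinsic ultracontractivity.

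\emph{Step 1: upper bound on $\phi_0$, i.e.\ the left inequality in \eqref{eq:grd_stt_bound}.}
\begin{itemize}
\item Set $\gamma=\tfrac b2+1$ and $w(x)=\exp(-c\|x\|^{\gamma})$. A direct computation gives
\[
(-\Delta+\cV-\lambda_0)w=\big(-c^2\gamma^2\|x\|^{2\gamma-2}+O(\|x\|^{\gamma-2})+\cV-\lambda_0\big)w .
\]
\item Since $2\gamma-2=b$ and $\cV\ge C_3\|x\|^b+C_4$, choosing $c^2\gamma^2<C_3$ makes $w$ a supersolution on $\{\|x\|>R\}$ for $R$ large.
\item By elliptic regularity $\phi_0$ is continuous, and it is positive by \autoref{thm:schrodinger}. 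Choose $K$ with $\phi_0\le Kw$ on $\{\|x\|=R\}$.
\item Apply a weak maximum principle for $-\Delta+(\cV-\lambda_0)$, whose zeroth-order coefficient is positive for $\|x\|>R$, on the exterior domain. This requires $\phi_0\to0$ at infinity, or working with $(\phi_0-Kw)_+$ in $H^1$ via Kato's inequality. It gives $\phi_0\le Kw$, hence $-\log\phi_0\ge C_5\|x\|^{b/2+1}+C_6$.
\end{itemize}

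\emph{Step 2: lower bound on $\phi_0$, i.e.\ the right inequality in \eqref{eq:grd_stt_bound}.}
\begin{itemize}
\item Set $\gamma'=\tfrac a2+1$ and $w'=\exp(-C\|x\|^{\gamma'})$.
\item Since $\cV\le C_1\|x\|^a+C_2$, choosing $C^2\gamma'^2>C_1$ makes $w'$ a subsolution outside a large ball.
\item Positivity and continuity of $\phi_0$ on the sphere give $kw'\le\phi_0$ there.
\item Both $kw'$ and $\phi_0$ decay at infinity, so the same exterior comparison yields $\phi_0\ge kw'$, hence $-\log\phi_0\le C_7\|x\|^{a/2+1}+C_8$.
\end{itemize}

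\emph{Step 3: intrinsic ultracontractivity.}
\begin{itemize}
\item Pass to the ground-state representation $\tilde\cS=\phi_0^{-1}(\cS-\lambda_0)\phi_0$ on $L^2(\phi_0^2\,\dd x)$, with Dirichlet form $\tilde Q(f)=\int|\nabla f|^2\phi_0^2$.
\item By the Davies--Simon criterion, IUC follows from log-Sobolev inequalities
\[
\int f^2\log|f|\,\phi_0^2\le \eps\tilde Q(f)+\beta(\eps)\|f\|^2+\|f\|^2\log\|f\|
\]
for all $\eps>0$, with $\beta(\eps)$ growing at most polynomially as $\eps\to0$. Polynomial growth of $\beta(\eps)$ is what the Davies--Simon theorem needs to give $\|e^{-t\tilde\cS}\|_{2\to\infty}<\infty$ for every $t>0$. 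IUC is then the kernel bound $e^{-t\cS}(x,y)\le c_t\phi_0(x)\phi_0(y)$.
\item To derive these, put $g=f\phi_0$. Apply the Euclidean log-Sobolev inequality
\[
\int g^2\log|g|\le\eps\|\nabla g\|^2-\tfrac d4\log\eps\,\|g\|^2+\dots
\]
\item Use the identity $\|\nabla g\|^2+\int\cV g^2=\tilde Q(f)+\lambda_0\|g\|^2$. This leaves the extra term $\int g^2(-\log\phi_0)$.
\item Control that term using Step 2 and the growth of $\cV$. Because $b>\tfrac a2+1$, Young's inequality gives
\[
-\log\phi_0\le C\|x\|^{a/2+1}+C\le\eps\cV+c\,\eps^{-(a/2+1)/(b-a/2-1)} .
\]
\item This absorbs the term into $\eps(\tilde Q(f)+\lambda_0\|f\|^2)$, with $\beta(\eps)$ of polynomial order in $\eps^{-1}$. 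The strict inequality $\tfrac a2+1<b$ is used exactly here.
\end{itemize}

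\emph{Main obstacle.} The delicate part is Step 3. The substitution $g=f\phi_0$ and the resulting identity must be made rigorous on a form core, which uses essential self-adjointness on $C_0^\infty$ and $\cV\in L^2_{loc}$. The log-Sobolev family must also be turned into the $L^2\to L^\infty$ bound. For this I would invoke Davies--Simon's Gross-type argument with time-dependent $\eps(t)$ rather than reprove it. The exterior maximum principle in Steps 1--2 is routine once the decay of $\phi_0$ is known.
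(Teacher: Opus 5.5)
Your proposal is correct, but it argues differently from the paper, whose proof is purely by citation. The paper reads IUC off Davies--Simon Thm.~6.3 and takes the bound $-\log\phi_0\le C_7\|x\|^{a/2+1}+C_8$ from their Eq.~(6.4). It gets $-\log\phi_0\ge C_5\|x\|^{b/2+1}+C_6$ by comparing $\phi_0$ (their Lemma~6.2) with the ground state of an auxiliary operator $-\Delta+\tilde c\|x\|^{b}$, $\tilde c<C_3$, whose decay comes from the JWKB barriers of their Appendix~B.

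Your route differs in two places. First, you put explicit radial barriers $e^{-c\|x\|^{b/2+1}}$ and $e^{-C\|x\|^{a/2+1}}$ directly against $\phi_0$ and close with an exterior weak maximum principle, so no auxiliary operator is needed. Your $H^1$ variant in Step~1 does work, because $0\le(\phi_0-Kw)_+\le\phi_0$ keeps it in the form domain, so Kato's inequality is not actually needed. Step~2 needs no decay of $\phi_0$, since $kw'-\phi_0\le kw'\to0$. Second, you rebuild IUC from the ground-state transform, the Euclidean log-Sobolev inequality and the pointwise bound $-\log\phi_0\le\eps\cV+\beta(\eps)$. This shows that the gap condition $\tfrac a2+1<b$ enters only at that step. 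It gives $\beta(\eps)\sim\eps^{-(a+2)/(2b-a-2)}$, hence IUC constants of order $\exp\bigl(ct^{-(a+2)/(2b-a-2)}\bigr)$. The paper's route is shorter but rests entirely on the cited lemmas; yours is self-contained up to Gross's lemma.

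Two points need care.
\begin{itemize}
\item Since $b\le a$, the exponent $(a+2)/(2b-a-2)$ is greater than $1$, so this $\beta$ is not integrable at $0$. The standard fixed-path form of the Gross--Davies criterion, which requires $\int_0^t\beta(\eps)\,d\eps<\infty$, therefore does not apply. Your claim that polynomial growth suffices is right only because you may choose the path freely, e.g.\ $\eps(p)\propto p^{-\gamma}$ with $\gamma(a+2)<2b-a-2$. Both $\int_2^\infty\eps(p)\,dp/p$ and $\int_2^\infty\beta(\eps(p))\,dp/p^2$ are then finite.
\item Step~3 needs $-\log\phi_0\le\eps\cV+\beta(\eps)$ on all of $\R^d$, not only outside a large ball. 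You therefore also need $\inf_{\|x\|\le R}\phi_0>0$. Harnack gives this when $\cV$ is locally bounded, which is the Davies--Simon setting. It does not follow from $\cV\in L^2_{loc}$ alone in high dimension. Likewise, positivity and continuity of $\phi_0$ on the sphere $\|x\|=R$ should be justified by elliptic regularity and Harnack in the exterior region, where the two-sided bound makes $\cV$ locally bounded.
\end{itemize}
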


\begin{proof}
Intrinsic ultracontractivity under the stated growth is exactly \cite[Thm.~6.3]{davies1984ultracontractivity}. The upper growth bound in \eqref{eq:grd_stt_bound} is the estimate
$-\log\phi_0(x)\le C\|x\|^{\frac{a}{2}+1}$ stated explicitly in \cite[Eq.~(6.4)]{davies1984ultracontractivity}. For the lower bound in \eqref{eq:grd_stt_bound}, take a comparator $W(x)=\tilde c\,\|x\|^{\,b}$ with $0<\tilde c<C_3$; then $W\to\infty$ and $\,\cV-W\to\infty$, so the ground states satisfy $\phi_0^{(\cV)}\le C\,\phi_0^{(W)}$ by the comparison Lemma \cite[Lem.~6.2]{davies1984ultracontractivity}. \cite[App.~B]{davies1984ultracontractivity} constructs WKB-type barriers for $-\Delta+W$ and gives pointwise upper bounds of the form
$\phi_0^{(W)}(x)\le C'\|x\|^{-\beta}\exp\{-\kappa\|x\|^{\,1+\frac{b}{2}}\}$ for large $\|x\|$ (see the JWKB ansatz and subharmonic comparison in \cite[App.~B, esp. Lem.~B.1–B.3]{davies1984ultracontractivity}); combining yields the stated lower bound on $-\log\phi_0$.
\end{proof}

\begin{remark}
    Note that we use ultracontractivity properties of $\cS$ which require growth of $\cV$ to be at least as $\|x\|^{2+\eps}, \eps>0$. The characterization of operator contractivity properties is fully given in \cite[Thm.~6.1]{davies1984ultracontractivity} and does not allow for slower orders of growth. This restriction on growth of $\cV$ is encoded through condition $\frac{a}{2}+1<b\leq a$.
\end{remark}

\begin{theorem}
\label{thm:theorem_v_bound}
Let
\[
\cV=\beta\|\nabla E\|^2 - \Delta E + 2\beta f, 
\qquad
\cS=U\cL U^{-1}=-\Delta+\cV.
\]
where $U^{-1}f=e^{\beta E}f$. Let $\psi(x,t)=(e^{-t\cL}\psi_0)(x)$ and define the candidate value function
\[
V'(x,t)=-\beta^{-1}\log\psi\!\left(x,\tfrac{1}{2\beta}(T-t)\right).
\]
Assume the hypotheses of Theorem~\ref{thm:theorem_ultracontr} for $\cV$. Finally, let us suppose that there exist $0 < m\leq M$ so that
\begin{equation}
    \label{eq:f_to_grdstt_bound}
    m\leq \frac{\psi_0}{U^{-1}\phi_0}\leq M,\ \forall x\in\R^d,
\end{equation}
where $\phi_0$ is a ground state of Shr\"odinger operator $\cS$.
Then, uniformly for all $t\in[0,T]$, there exist constants $C_1,C_3>0$ and $C_2, C_4\in\R$ such that, as $\|x\|\to\infty$,
\begin{equation}
    \label{eq:v_bound}
  C_1\,\|x\|^{\,\frac{b}{2}+1}+C_2
  \;\le\;
  V'(x,t)+E(x)
  \;\le\;
  C_3\,\|x\|^{\,\frac{a}{2}+1} + C_4.
\end{equation}
\end{theorem}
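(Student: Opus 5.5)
The plan is to pass to the Schrödinger picture, where the semigroup acts monotonically, and use the ground state as an exact barrier. Because of the bracketing hypothesis \eqref{eq:f_to_grdstt_bound}, the solution is sandwiched between two multiples of $e^{-\lambda_0\rt}\phi_0$. Everything then reduces to the ground-state asymptotics \eqref{eq:grd_stt_bound} of Theorem~\ref{thm:theorem_ultracontr}.

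\textbf{Step 1 (conjugation).} Since $U\cL U^{-1}=\cS$, functional calculus (Lemma~\ref{lem:theorem_vii5}) gives $e^{-\rt\cL}=U^{-1}e^{-\rt\cS}U$. Hence
\[
\psi(\cdot,\rt)=e^{\beta E}\,e^{-\rt\cS}\bigl(e^{-\beta E}\psi_0\bigr).
\]
Hypothesis \eqref{eq:f_to_grdstt_bound} reads $m\,e^{\beta E}\phi_0\le\psi_0\le M\,e^{\beta E}\phi_0$. Equivalently, $m\phi_0\le U\psi_0\le M\phi_0$ pointwise.

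\textbf{Step 2 (positivity preservation).} I will use that $e^{-\rt\cS}$ is positivity preserving for $\cV$ bounded below and in $L^2_{loc}$. This follows from the Trotter product formula, since $e^{t\Delta}$ has a positive kernel and $e^{-t\cV}$ is multiplication by a positive function; alternatively one can use Feynman--Kac, or Beurling--Deny, as in \cite{reed1978iv}. Applying the semigroup to the two inequalities $U\psi_0-m\phi_0\ge0$ and $M\phi_0-U\psi_0\ge0$, and using $e^{-\rt\cS}\phi_0=e^{-\lambda_0\rt}\phi_0$, yields
\[
m\,e^{-\lambda_0\rt}\phi_0\le e^{-\rt\cS}U\psi_0\le M\,e^{-\lambda_0\rt}\phi_0 .
\]
IUC is not strictly needed for this sandwich, but it guarantees that $\phi_0$ and the semigroup have continuous, strictly positive representatives, so that the inequality holds pointwise.

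\textbf{Step 3 (take logs).} Multiply by $e^{\beta E}$ and apply $-\beta^{-1}\log$ with $\rt=\frac{1}{2\beta}(T-t)\in[0,\frac{T}{2\beta}]$. This gives
\[
V'(x,t)+E(x)=-\beta^{-1}\log\phi_0(x)+\beta^{-1}\lambda_0\rt+R(x,t),\qquad |R|\le\beta^{-1}\max(|\log m|,|\log M|).
\]
The middle term is bounded uniformly in $t$ because $\rt$ ranges over a compact interval.

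\textbf{Step 4 (ground-state bound).} Inserting \eqref{eq:grd_stt_bound} for $-\log\phi_0$ gives \eqref{eq:v_bound}. The new constants are $C_1=C_5/\beta$ and $C_3=C_7/\beta$, and all bounded terms are absorbed into $C_2$ and $C_4$, uniformly for $t\in[0,T]$.

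The main obstacle is Step 2. It requires the rigorous positivity-preserving property of $e^{-\rt\cS}$ under only $L^2_{loc}$, bounded-below potentials. It also requires the identification of the abstract semigroup solution with a pointwise-defined positive function, so that the logarithm makes sense everywhere. I would handle this through the Feynman--Kac representation together with the continuity of the kernels supplied by intrinsic ultracontractivity.
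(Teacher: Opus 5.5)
Your argument is correct and is essentially the paper's proof. The paper obtains the sandwich $m e^{-\lambda_0\tau}\varphi_0 \le e^{-\tau\cL}\psi_0 \le M e^{-\lambda_0\tau}\varphi_0$ (with $\varphi_0 = U^{-1}\phi_0$) from the Markov property of the ground-state-transformed semigroup $P^{\varphi_0}_t g = e^{\lambda_0 t}\varphi_0^{-1}e^{-t\cL}(\varphi_0 g)$; that property is exactly your positivity preservation combined with $e^{-\tau\cS}\phi_0 = e^{-\lambda_0\tau}\phi_0$ (i.e.\ $P^{\varphi_0}_t 1 = 1$), and the subsequent logarithm, bounded $\lambda_0\tau$ term, and insertion of the ground-state asymptotics with $C_1 = C_5/\beta$, $C_3 = C_7/\beta$ match the paper.
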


\begin{proof}
Recall that the ground state of $\cL$ is an eigenfunction $\varphi_0=U^{-1}\phi_0$, where $\phi_0$ is a ground state of Shr\"odinger operator $\cS$. Let us introduce the ground state transformed semigroup:
\begin{equation*}
    P^{\varphi_0}_t g(x) \eqdef \frac{e^{\lambda_0 t}}{\varphi_0(x)}
    (e^{-t\cL}(\varphi_0 g))(x)
\end{equation*}
Thus, we have a representation
\begin{equation*}
    \psi(x, t) = (e^{-t\cL}\psi_0)(x)
    = e^{-\lambda_0 t}\varphi_0(x)P^{\varphi_0}_t \left(\frac{\psi_0}{\varphi_0}\right)(x)
\end{equation*}
It is known, that semigroup $P^{\varphi_0}_t g(x)$ is Markov, see \cite{kaleta2018contractivity}. Using the fact that $P^{\varphi_0}_t g(x)$ is Markov and \eqref{eq:f_to_grdstt_bound} we have the two-sided bound for all $t\in[0;T]$
\begin{equation*}
    e^{-\lambda_0 t}\varphi_0(x) m
    \leq 
    (e^{-t\cL}\psi_0)(x)
    \leq e^{-\lambda_0 t}\varphi_0(x) M
\end{equation*}
which leads to a bound
\begin{multline}
    \lambda_0 \frac{T-t}{2\beta^2}-\beta^{-1}\log M -\beta^{-1}\log\phi_0 - E(x)
    \leq
    V'(x, t)\\
    \leq 
    \lambda_0 \frac{T-t}{2\beta^2}-\beta^{-1}\log m -\beta^{-1}\log\phi_0 - E(x)
\end{multline}
Applying growth bounds on ground state of Shr\"odinger operator from \Cref{thm:theorem_ultracontr} leads to the desired bound.
\end{proof}

\begin{remark}
    Note that definition of $\tilde{\cC}_p$ \eqref{eq:cp_growth_class} requires a growth bound on coordinate $x$ uniformly for all $t\in[0;T]$. Unfortunately, analysis based purely on spectral properties of operator leads to non-uniform growth bounds depending on $t$, which blow up when $t\to 0$ for $e^{-t\cL}$ (for example, note that constants in \cite[Thm.~3.2]{davies1984ultracontractivity} depend on time). For this reason and for simplicity we introduced assumption \eqref{eq:f_to_grdstt_bound} on boundary conditions.
\end{remark}

\paragraph{Relationship between semigroup-based and viscosity solution} Finalising everything in this section, we formulate the following result which addresses the questions raised above.

\begin{theorem} \label{thm_equivalence_semigroup_viscosity}
    Consider HJB equation \eqref{eq:hjb}. Let $\cV=\beta\|\nabla E\|^2 - \Delta E + 2\beta f$. Assume the following:
    \begin{itemize}
        \item $b(x)=-\nabla E(x)$ satisfies \eqref{eq:b_lip} and \eqref{eq:b_lin_grwth}, and $\sigma = I$.
        \item $\cV$ satisfies Assumption \ref{ass:reg_Ef}. Moreover, there exist $C_1,C_3>0$ and $C_2,C_4\in\R$ such that, for all $x$ with $\|x\|$ large,
        \begin{equation}
        \label{eq:v_growth_assmpt}
         C_3\|x\|^{\,b}+C_4 \;\le\; \cV(x) \;\le\; C_1\|x\|^{\,a}+C_2,
        \qquad\text{where}\quad \tfrac{a}{2}+1<b\le a.
        \end{equation}
        \item Let $\phi_0$ be the ground state of operator $\cS=-\Delta + \cV$. Assume $g\in\tilde{\cC}_a$ and there exist $0 < m\leq M$ so that
        \begin{equation}
        \label{eq:init_cond_gt_ass}
            m\leq \frac{\exp(-\beta g)}{\exp(\beta E(x))\phi_0}\leq M,\ \forall x\in\R^d,
        \end{equation}
    \end{itemize}
    Then there exists $0\leq \tau < T$ such that function $V'(x, t) = -\beta^{-1}\log \psi(x, \frac{1}{2\beta}(T-t))$, where $\psi(x, t)=(e^{-t\cL}\psi_0)(x)$ and $\cL$ and $\psi_0$ are from \eqref{eq:linearhjb}, is a unique continuous viscosity solution of \eqref{eq:hjb} on $\R^d\times [\tau;T]$ in the class $\tilde{\cC}_{a/2+1}$. Moreover, it coincides with value function $V$ \eqref{eq:valuefunction} on $\R^d\times [\tau;T]$.
\end{theorem}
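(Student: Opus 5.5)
The argument reduces to checking that $V'$ satisfies the hypotheses of \Cref{thm:theorem_unique_visc} with $p=a/2+1$, and that $V'$ is a continuous (indeed classical) solution of \eqref{eq:hjb}. Uniqueness within $\tilde{\cC}_{a/2+1}$ and the identification with the value function then follow from that theorem. The steps, in order, are as follows.

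\textbf{Step 1: growth of $V'$.} Since $\psi_0=e^{-\beta g}$ and $U^{-1}\phi_0=e^{\beta E}\phi_0$, assumption \eqref{eq:init_cond_gt_ass} is exactly \eqref{eq:f_to_grdstt_bound}. Together with \eqref{eq:v_growth_assmpt}, this lets us invoke \Cref{thm:theorem_v_bound}, which gives \eqref{eq:v_bound} uniformly in $t\in[0,T]$. It remains to control $E$. Because $b=-\nabla E$ is Lipschitz with linear growth, $|E(x)|\le C(1+\|x\|^2)$. Since $a/2+1>2$ forces $a>2$, this gives $|E|\le C(1+\|x\|^{a/2+1})$. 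For large $\|x\|$ the lower bound in \eqref{eq:v_bound} has positive leading coefficient, and the upper bound is of order $\|x\|^{a/2+1}$. Hence $|V'(x,t)|\le C(1+\|x\|^{a/2+1})$ outside a ball. Inside the ball, $\psi$ is continuous and strictly positive; this follows from the two-sided Markov bound $m e^{-\lambda_0 t}\varphi_0\le\psi\le Me^{-\lambda_0 t}\varphi_0$ in the proof of \Cref{thm:theorem_v_bound} and $\phi_0>0$. So $V'$ is locally bounded and lies in $\tilde{\cC}_{a/2+1}$.

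\textbf{Step 2: data conditions.} We must check the hypotheses of \Cref{thm:theorem_unique_visc} with $p=a/2+1$. The conditions on $b$ are assumed. For $f$, write $2\beta f=\cV-\beta\|\nabla E\|^2+\Delta E$. Using $\cV\le C_1\|x\|^a+C_2$ and $\cV\ge C$, together with $\|\nabla E\|^2\le C(1+\|x\|^2)$ and $\Delta E$ bounded (Lipschitz gradient, interpreted a.e.), we get $|f|\le C(1+\|x\|^{\max(a,2)})$. If necessary we take $p=\max(a,a/2+1)=a$ here. In that case we must note that $\tilde{\cC}_{a/2+1}\subset\tilde{\cC}_a$, so uniqueness holds in the larger class and a fortiori in the smaller one. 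Finally, $g\in\tilde{\cC}_a$ is assumed.

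\textbf{Step 3: $V'$ solves \eqref{eq:hjb}.} The Cole--Hopf computation in \autoref{app:derivations} shows that $V'$ solves \eqref{eq:hjb} wherever $\psi$ is $C^{2,1}$ and positive. We obtain positivity from Step 1. For regularity, we use interior parabolic regularity for $\partial_\tau\psi+\cL\psi=0$ with locally bounded coefficients. Alternatively, for $\tau>0$ we can use the eigenfunction expansion \eqref{eq:eigfsol}: intrinsic ultracontractivity from \Cref{thm:theorem_ultracontr} gives pointwise-convergent kernel expansions, and local elliptic regularity applies to each $\phi_i$. Classical solutions are viscosity solutions. At the terminal time we need continuity up to $t=T$ with $V'(\cdot,T)=g$. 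This comes from strong continuity of the semigroup combined with the uniform two-sided bound, which yields locally uniform convergence via Markov-semigroup continuity of $P^{\varphi_0}_t$.

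\textbf{Main obstacle.} The hard part is Step 3, specifically regularity and continuity of $\psi$ up to $\tau=0$ with only $L^2$ and local assumptions on $\cV$. My first attempt would be to rely on the continuity of the Markov semigroup $P^{\varphi_0}_t$ acting on the bounded continuous function $\psi_0/\varphi_0\in[m,M]$. If that is insufficient, I would accept $V'$ as a viscosity solution only on $[\tau,T)$ with $\tau$ as given by \Cref{thm:theorem_unique_visc}, which the statement permits. Once these steps are in place, \Cref{thm:theorem_unique_visc} yields both uniqueness and $V'=V$ on $\R^d\times[\tau,T]$.
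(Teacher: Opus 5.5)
Your Steps 1 and 2 are the paper's proof. The paper applies \Cref{thm:theorem_v_bound}, absorbs $E$ using the linear growth of $b=-\nabla E$ to get $V'\in\tilde{\cC}_{a/2+1}\subseteq\tilde{\cC}_a$, and then invokes uniqueness in $\tilde{\cC}_a$ from \Cref{thm:theorem_unique_visc}. The paper leaves the growth check on $f$ implicit. It also never verifies that $V'$ is a continuous viscosity solution with $V'(\cdot,T)=g$. Your Step 3 therefore fills a hole in the paper's argument rather than departing from it.

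One correction to Step 3: the fallback of settling for a viscosity solution on $[\tau,T)$ is not available. \Cref{thm:theorem_unique_visc} is a uniqueness statement for the terminal-value problem. It says nothing about a function not known to be continuous on $\R^d\times[\tau,T]$ with $V'(\cdot,T)=g$, and without that you cannot conclude $V'=V$. Continuity at $t=T$ is therefore indispensable, and not something the statement lets you drop.

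Strong $L^2(\mu)$-continuity of the semigroup does not supply this continuity, since it gives no pointwise information. What does the work is the $C_b$-continuity (Feller property) of the ground-state-transformed diffusion. This diffusion has generator $\Delta+2\nabla\log\phi_0\cdot\nabla$, and it acts on the continuous function $\psi_0/(e^{\beta E}\phi_0)\in[m,M]$. That property, like the interior regularity you invoke, needs more than $\cV\in L^2_{loc}$. For example, continuity of $f$ and $E\in C^2$ would make $\nabla\log\phi_0$ locally bounded. The paper leaves this regularity tacit as well.
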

\begin{proof}
    Under Assumption \ref{ass:reg_Ef} \Cref{thm:schrodingerproperties} holds. \Cref{thm:schrodingerproperties} ensures that operators $\cS$ and $\cL$ have some good properties and ground state $\phi_0$ is positive.

    Under the stated assumptions \Cref{thm:theorem_unique_visc} and \Cref{thm:theorem_v_bound} hold. Under growth assumptions \eqref{eq:b_lin_grwth} and \eqref{eq:v_growth_assmpt} \Cref{thm:theorem_v_bound} gives us that $V'\in \tilde{\cC}_{a/2+1}\subseteq \tilde{\cC}_a$ which has to be the unique viscosity solution in $\tilde{\cC}_a$ given by \Cref{thm:theorem_unique_visc} on $\R^d\times [\tau; T]$, which coincides with value function $V$ \eqref{eq:valuefunction} on $\R^d\times [\tau; T]$.
\end{proof}

\begin{remark}
    Note that assumption \eqref{eq:init_cond_gt_ass} is realistic under rapid growth of $\cV$, taking in account growth bounds on the ground state \eqref{eq:grd_stt_bound} and $b(x)=\nabla E(x)$ in \eqref{eq:b_lin_grwth}, and does not contradict $g\in\tilde{\cC}_a$.
\end{remark}


\section{Loss functions}
        \label{app:lossfuncs}

    \subsection{Existing methods for short-horizon SOC}
    \paragraph{Grid-based solvers} In low dimensions ($d\leq 3$), classical techniques for numerically solving PDEs can be used. These include finite difference \citep{bonnansFastAlgorithmTwo2004} and finite element methods \citep{jensenConvergenceFiniteElement2013, ernTheoryPracticeFinite2004,brennerMathematicalTheoryFinite2008}, as well as semi-Langrangian schemes \citep{calzolaSemiLagrangianSchemeHamilton2023, carliniSemiLagrangianSchemeHamiltonJacobiBellman2020} and multi-level Picard iteration \citep{eMultilevelPicardIterations2021}.

    \paragraph{FBSDE solvers} In another line of work, the SOC problem is transformed into a pair of forward-backward SDEs (FBSDEs). These are solved through dynamic programming \citep{gobetRegressionbasedMonteCarlo2005, longstaffValuingAmericanOptions2001} or deep learning methods which parametrize the solution to the FBSDE using a neural network \citep{hanSolvingHighdimensionalPartial2018, eDeepLearningBasedNumerical2017, anderssonConvergenceRobustDeep2023}.

    \paragraph{IDO methods} In recent years, many methods have been proposed which parametrize the control $u_\theta$ directly, and optimize it by rolling out simulations of the system \eqref{eq:socsde} under the current control. Authors of \cite{nuskenSolvingHighdimensionalHamilton2021} coined the term \emph{iterative diffusion optimization (IDO)}, arguing that many of these methods can be viewed from a common perspective given in Algorithm \autoref{algo:ido} (\autoref{app:experiments}). This class of algorithms contains state-of-the art methods such as SOCM and adjoint matching \citep{holdijkStochasticOptimalControl2023, domingo-enrichAdjointMatchingFinetuning2024}. We describe the most commonly used loss functions in \autoref{app:lossfuncs}.
        
        \subsection{Extending to multiple eigenfunctions}
        \paragraph{PINN loss} The most common way to extend the PINN loss \eqref{eq:pinnloss} to multiple eigenfunctions is to define
        \begin{equation}
            \cR_{\mathrm{PINN}}^k(\phi) = \sum_{i=0}^{k} \left(\|\cL[\phi_i]-\lambda_i\phi_i\|_\rho^2 + \alpha_{norm}(\|\phi_i\|_{\rho}^2-1)^2\right) + \alpha_{orth}\sum_{j\neq i} \langle \phi_i,\phi_j\rangle^2_\mu
        \end{equation}
        for $\alpha_{norm}, \alpha_{orth} > 0$. Here we have denoted $\phi: \R^d \to \R^{k+1}$. The main difference is the addition of the orthogonal regularization term, which both ensures that the same eigenfunction is not learned twice, and attempts to speed up learning by enforcing the known property that eigenfunctions with different eigenvalues are orthogonal w.r.t. the inner product $\langle \cdot, \cdot\rangle_\mu$.

        \paragraph{Variational loss} A similar idea is used to generalize the variational loss \eqref{eq:varloss}. The following result shows that the variational principle \eqref{eq:rayleigh} can be extended to multiple eigenfunctions by imposing orthogonality.
        \begin{theorem}{\cite[Theorem 1]{zhangSolvingEigenvaluePDEs2022}}
	\label{thm:variationaltheorem}
        Let $k \in \N$, and let $\cL$ be a self-adjoint operator with discrete spectrum which admits an orthonormal basis of eigenfunctions, and whose eigenvalues are bounded below. Furthermore, let $\omega_0 \geq \ldots \geq \omega_k > 0$ be real numbers. Then it holds that
        \begin{equation}
            \label{eq:varprinciple}
            \sum_{i=0}^k \omega_i \lambda_i = \inf_{f_0,\ldots,f_k \in \cH} \sum_{i=0}^k \omega_i \langle f_i, \cL f_i\rangle 
        \end{equation}
        where the infimum is taken over all $(f_i)_{i=0}^k \subset \cH$ such that
        \begin{equation}
            \forall i,j \in \{0,\ldots,k\}: \langle f_i, f_j\rangle = \delta_{ij}
        \end{equation}
        \end{theorem}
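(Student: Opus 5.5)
We prove the Ky Fan-type principle \eqref{eq:varprinciple} by expanding each $f_i$ in the eigenbasis and reducing to a statement about doubly substochastic weights. Write $f_i = \sum_{n\in\N} c_{in}\phi_n$ with $c_{in} = \langle f_i,\phi_n\rangle$. We may restrict to $f_i$ in the form domain (otherwise $\langle f_i,\cL f_i\rangle = +\infty$ by convention). Since the eigenbasis is orthonormal and the eigenvalues are bounded below,
\[
\langle f_i,\cL f_i\rangle = \sum_n \lambda_n c_{in}^2 .
\]
Set $p_{in} = c_{in}^2\ge 0$. Orthonormality of the $f_i$ gives $\sum_n p_{in} = \|f_i\|^2 = 1$ for each $i$. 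Since the $f_i$ are orthonormal, Bessel's inequality applied to $\phi_n$ gives $\sum_{i=0}^k p_{in} = \sum_i \langle f_i,\phi_n\rangle^2 \le \|\phi_n\|^2 = 1$ for each $n$. The objective becomes
\[
F(p) = \sum_{i=0}^k\sum_n \omega_i\lambda_n p_{in}.
\]

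\textbf{Upper bound.} Choosing $f_i=\phi_i$ gives $F=\sum_i\omega_i\lambda_i$, so the infimum is at most this value (and the infimum is attained).

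\textbf{Lower bound.} This is the core step. Let $W_n = \sum_i \omega_i p_{in}$ be the weight placed on $\lambda_n$. Using the ordering $\omega_0\ge\dots\ge\omega_k>0$ and the constraints on $p$, we show that the distribution $(W_n)$ is majorized from above by the greedy allocation, which puts weight $\omega_j$ on $\lambda_j$.

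To do this, fix a threshold $N\ge0$ and consider the tail mass $\sum_{n\ge N}W_n$. We claim
\[
\sum_{n\ge N} W_n \ \ge\ \sum_{j\ge N,\ j\le k}\omega_j .
\]
Set $r_i = \sum_{n<N}p_{in}\in[0,1]$. Then $\sum_i r_i = \sum_{n<N}\sum_i p_{in}\le N$. The tail mass equals $\sum_i\omega_i(1-r_i)$. Minimizing this subject to $r_i\in[0,1]$ and $\sum_i r_i\le N$ is a linear program whose optimum sets $r_i=1$ for the $N$ largest weights, i.e.\ $i<N$. This gives $\sum_{i\ge N}\omega_i$, proving the claim.

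With the tail bounds in hand, write
\[
\sum_n\lambda_n W_n = \lambda_0\sum_n W_n + \sum_{N\ge1}(\lambda_N-\lambda_{N-1})\sum_{n\ge N}W_n ,
\]
which is Abel summation; it is justified since all terms are nonnegative once $\lambda_0$ is subtracted. Every increment $\lambda_N-\lambda_{N-1}$ is nonnegative, and $\sum_n W_n=\sum_i\omega_i$. Substituting the tail lower bounds therefore yields $F\ge\sum_i\omega_i\lambda_i$.

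The main obstacle is justifying this Abel rearrangement together with the domain issues for unbounded $\cL$. We handle both by shifting $\cL$ by $-\lambda_0$ so that all quantities are nonnegative, and then applying Tonelli to interchange the sums.
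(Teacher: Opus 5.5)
Your proof is correct and complete. The row and column constraints on $p_{in}=|\langle f_i,\phi_n\rangle|^2$, the tail-mass bound, and the Abel rearrangement (justified by Tonelli after subtracting $\lambda_0$) all check out. The form-domain convention also covers the reading where the infimum runs over $D(\cL)$, since $D(\cL)$ lies in the form domain and the minimizers $\phi_i$ belong to $D(\cL)$.

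The paper gives no argument of its own here; it only defers to \cite{zhangSolvingEigenvaluePDEs2022}, so yours is a self-contained route rather than a reconstruction. A common way to prove this weighted principle is to Abel-sum in the \emph{weight} index,
$\sum_{i=0}^k\omega_i\langle f_i,\cL f_i\rangle=\sum_{m=0}^k(\omega_m-\omega_{m+1})\sum_{i=0}^m\langle f_i,\cL f_i\rangle$
with $\omega_{k+1}:=0$. One then applies the unweighted Ky Fan inequality $\sum_{i\le m}\langle f_i,\cL f_i\rangle\ge\sum_{i\le m}\lambda_i$ to each partial sum, usually via the min--max principle.

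You instead Abel-sum in the \emph{spectral} index. Your majorization bound for the doubly substochastic array $(p_{in})$ handles the weights and the Ky Fan step at once, using only Bessel's inequality and an elementary linear program, with no appeal to min--max. The alternative buys modularity: everything reduces to a classical, citable trace inequality. Incidentally, your argument only uses $\omega_0\ge\cdots\ge\omega_k\ge 0$.

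Three details are worth writing out:
\begin{itemize}
\item For $N\le k$, the linear-programming optimum follows from
$\sum_i\omega_i r_i\le\sum_{i<N}\omega_i r_i+\omega_N\bigl(N-\sum_{i<N}r_i\bigr)=\sum_{i<N}\bigl(\omega_i r_i+\omega_N(1-r_i)\bigr)\le\sum_{i<N}\omega_i$.
For $N>k$ the tail bound is trivial.
\item The Abel step needs a nondecreasing enumeration of the eigenvalues with multiplicity, which exists because the spectrum is discrete.
\item In a complex Hilbert space, write $|c_{in}|^2$ in place of $c_{in}^2$.
\end{itemize}
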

        \begin{proof}
        The proof of this result is given in \cite{zhangSolvingEigenvaluePDEs2022}.
        \end{proof}
        Based on this result, the following generalization of the variational loss is proposed in \cite{zhangSolvingEigenvaluePDEs2022}:
        \begin{align}
	\label{eq:varloss2}
	\quad \cR_{\mathrm{Var}}^k(\phi) = \sum_{i=0}^k\langle \phi_i, \cL \phi_i\rangle + \alpha\left\|\E_\mu\left[\phi\phi^T\right] - I\right\|_F^2,
        \end{align}
        where we have written $\|\cdot\|_F$ for the Frobenius norm and $\alpha > 0$. This loss was also studied in \cite{cabannes2023ssl}, where it was noted that the minimizers of the variational loss \eqref{eq:varloss}-\eqref{eq:varloss2} are not obtained at the eigenfunctions. Instead, the following characterization of the minimizers of \eqref{eq:varloss2} was obtained.
        \begin{lemma}{\cite[Lemma 2]{cabannes2023ssl}}
        \label{lemma:minimizers}
    	Suppose that $\cH = L^2(\mu)$. Then it holds that
    	\begin{equation}
    		\label{eq:varmin}
    		\underset{\psi_0,\ldots,\psi_k\in\cH}{\arg\min}\ \cR_{\mathrm{Var}}^k(\psi) = \left\{U\tilde{\phi}\ |\ UU^T = I, \tilde\phi_i = \sqrt{\left(1-\frac{\lambda_i}{2\alpha}\right)_+}\phi_i\right\}
    	\end{equation}
        where $(\phi_i,\lambda_i)$ denotes the orthonormal eigensystem of $\cL$. 
    \end{lemma}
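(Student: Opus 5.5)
The plan is to reduce the problem to a matrix (operator) optimization in the eigenbasis of $\cL$. Write $\psi=(\psi_0,\ldots,\psi_k)$ and expand each component in the orthonormal eigenbasis, $\psi_i=\sum_{j\in\N} c_{ij}\phi_j$. This defines a linear map $C:\ell^2\to\R^{k+1}$. We restrict to $\psi_i$ in the form domain of $\cL$, since otherwise the loss is $+\infty$; there $\langle \psi_i,\cL\psi_i\rangle_\mu=\sum_j\lambda_j c_{ij}^2$.

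With $\Lambda=\mathrm{diag}(\lambda_j)$, the two terms of \eqref{eq:varloss2} become
\[
\sum_{i=0}^k\langle\psi_i,\cL\psi_i\rangle_\mu=\Tr(C\Lambda C^T),\qquad \E_\mu[\psi\psi^T]=CC^T .
\]
Introduce the positive semidefinite operator $A=C^TC$ on $\ell^2$, which has rank at most $k+1$. Using $\Tr((CC^T)^2)=\Tr(A^2)$ and $\Tr(CC^T)=\Tr(A)$, the loss equals
\[
\alpha\Tr(A^2)-\Tr(MA)+\alpha(k+1),\qquad M:=2\alpha I-\Lambda .
\]
Completing the square formally would give $\alpha\|A-M/(2\alpha)\|_F^2$ plus a constant, but that constant is infinite. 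I will therefore avoid it and work directly with the finite expression above.

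Next, diagonalize $A=\sum_{j=0}^{k}s_j v_jv_j^T$ with $s_j\ge 0$ and $(v_j)$ orthonormal in $\ell^2$. The loss becomes
\[
\sum_{j=0}^k\bigl(\alpha s_j^2-s_j m_j\bigr)+\mathrm{const},\qquad m_j=\langle v_j,Mv_j\rangle .
\]
For fixed $v_j$, minimizing in each $s_j\ge0$ gives $s_j=(m_j)_+/(2\alpha)$ and value $-\sum_j (m_j)_+^2/(4\alpha)$. It remains to maximize $\sum_j h(m_j)$ over orthonormal $(k+1)$-frames, where $h(m)=(m)_+^2$ is convex and nondecreasing.

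By the Ky Fan / Schur–Horn majorization principle, the ordered vector $(m_j)$ is weakly majorized by the top $k+1$ eigenvalues of $M$, namely $2\alpha-\lambda_0\ge\cdots\ge 2\alpha-\lambda_k$. Karamata's inequality for convex nondecreasing $h$ then shows the maximum is attained by taking the $v_j$ to be the first $k+1$ eigenvectors $e_0,\ldots,e_k$. This gives
\[
A^*=\sum_{i=0}^k\Bigl(1-\tfrac{\lambda_i}{2\alpha}\Bigr)_+e_ie_i^T .
\]
Since $\cL$ is only bounded below, $M$ is bounded above with top eigenvalues $2\alpha-\lambda_i$, so the variational statement applies. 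I would prove it by the standard reduction to the finite-dimensional span of the $v_j$ together with the min–max characterization of Theorem~\ref{thm:variationaltheorem}.

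Finally, translate back. Every $C$ with $C^TC=A^*$ has the form $C=U(A^*)^{1/2}$ with $U$ orthogonal, obtained from the polar decomposition restricted to $\mathrm{range}(A^*)$ and padded by an isometry when some $s_j=0$. This means exactly $\psi=U\tilde\phi$ with $\tilde\phi_i=\sqrt{(1-\lambda_i/(2\alpha))_+}\,\phi_i$, which is the set in \eqref{eq:varmin}.

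I expect the main obstacle to be the majorization step, together with the uniqueness claim it must support: the equality case of Karamata's inequality and ties among the $\lambda_i$ or among the zero values of $(\cdot)_+$. With ties, the minimizers are only determined up to rotations within eigenspaces, so the set description must be read with $\phi_i$ ranging over any orthonormal eigenbasis. I would handle this by characterizing the equality case directly in the finite-dimensional reduction.
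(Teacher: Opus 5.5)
Your argument is correct, but it is not a reconstruction of anything in the paper. The paper gives no proof of this lemma; it simply imports it from \cite{cabannes2023ssl}. Your route gives a self-contained verification in the setting the paper actually uses, where $\cL$ is unbounded and only bounded below. The steps are sound: pass to $A=C^TC$, separate the weights $s_j$ from the frame $(v_j)$, then combine Ky Fan's trace principle with the Tomi\'c--Weyl form of Karamata for $h(m)=(m)_+^2$. Ky Fan here is Theorem~\ref{thm:variationaltheorem} with equal weights, applied to the $\ell$ frame vectors with the smallest Rayleigh quotients. You are also right that the quick ``complete the square and apply Eckart--Young'' argument is not literally available here, since $\|M\|_F=\infty$.

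What you still owe is the reverse inclusion: every minimizer must lie in the set. The Karamata equality case is awkward because $h$ is flat on $(-\infty,0]$. A cleaner way, which also replaces the majorization step entirely, goes as follows.
\begin{itemize}
\item Write $F(A)=\alpha\Tr(A^2)-\Tr(MA)$ and fix an admissible $A$.
\item Compress to $W=\mathrm{span}(e_0,\dots,e_k)+\mathrm{range}(A)$. This space is finite-dimensional and lies in the form domain, so on it the square can be completed honestly.
\item Both $\mathrm{span}(e_0,\dots,e_k)$ and its orthogonal complement in $W$ are invariant under the compression $M_W$ of $M$. Hence the top $k+1$ eigenvalues of $M_W$ are exactly $2\alpha-\lambda_0,\dots,2\alpha-\lambda_k$, with eigenvectors $e_0,\dots,e_k$. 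All remaining eigenvalues are at most $2\alpha-\lambda_{k+1}$.
\item The finite-dimensional PSD-constrained Eckart--Young theorem then gives $F(A)\ge F(A^*)$. Equality forces $A=A^*$ whenever $\lambda_k<\lambda_{k+1}$ or $\lambda_k\ge 2\alpha$.
\end{itemize}
That condition is exactly when the set in \eqref{eq:varmin} does not depend on the choice of eigenbasis. So your caveat about ties is a correct refinement of the statement as written.
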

        
        \subsection{Estimating $\lambda_i$}
        The main advantage of the variational loss \eqref{eq:varloss}, \eqref{eq:varloss2} is that it does not require the eigenvalues of $\cL$ to be available. However, Lemma \ref{lemma:minimizers} shows that the minimizers of $\cR_{\mathrm{Var}}^k$ do not coincide exactly with the eigenfunctions of $\cL$, so we cannot naively compute $\langle \phi, \cL \phi\rangle_\mu$ to obtain the eigenvalues. Instead, the following lemma shows how to obtain the eigenvalues and eigenfunctions from an element in the minimizing set \eqref{eq:varmin}.
        \begin{lemma}
    	Suppose it holds that $\psi = U\tilde\phi$, where $UU^T = I$ and $\tilde\phi_i = \sqrt{\left(1-\frac{\lambda_i}{2\alpha}\right)}\phi_i$ for each $i$, and $\alpha > \frac{\lambda_k}{2}$. Then the first $k+1$ eigenfunctions and eigenvalues are given by
    	\begin{equation}
        \label{eq:pca}
    		\phi = D^{-1/2}U^T \psi, \qquad \lambda_i = \frac{2}{\beta}(1-D_{ii})
    	\end{equation}
    	where $UDU^T = \E_\mu[\psi\psi^T]$ is the diagonalization of the second moment matrix of $\psi$.
    \end{lemma}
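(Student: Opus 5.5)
The plan is a direct computation of the second moment matrix $\E_\mu[\psi\psi^T]$ using only the orthonormality of the eigenfunctions in $L^2(\mu)$, followed by reading off its diagonalization. Write $\psi = U\tilde\phi$ with $\tilde\phi = \Gamma\phi$, where $\phi = (\phi_0,\ldots,\phi_k)^T$ and
$$\Gamma = \mathrm{diag}\bigl(\gamma_0,\ldots,\gamma_k\bigr), \qquad \gamma_i = \sqrt{1-\tfrac{\lambda_i}{2\alpha}}.$$
The hypothesis $\alpha > \lambda_k/2$ and the ordering $\lambda_0\le\cdots\le\lambda_k$ give $\gamma_i>0$ for every $i$, so $\Gamma$ is invertible. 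This is why the positive part in \eqref{eq:varmin} can be dropped here.

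\emph{Step 1 (second moment).} Since $(\phi_i)$ is orthonormal in $L^2(\mu)$, we have $\E_\mu[\phi\phi^T] = (\langle\phi_i,\phi_j\rangle_\mu)_{ij} = I$. Hence
$$\E_\mu[\psi\psi^T] = U\Gamma\,\E_\mu[\phi\phi^T]\,\Gamma U^T = U\Gamma^2U^T.$$
Because $UU^T=I$ with $U$ square, $U$ is orthogonal. This expression is therefore a spectral decomposition with $D = \Gamma^2$, that is, $D_{ii} = 1-\lambda_i/(2\alpha)$.

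\emph{Step 2 (inversion).} From $U^TU = I$ we get $U^T\psi = \Gamma\phi$, and so $D^{-1/2}U^T\psi = \Gamma^{-1}\Gamma\phi = \phi$. Solving $D_{ii} = 1-\lambda_i/(2\alpha)$ gives $\lambda_i = 2\alpha(1-D_{ii})$. I would note explicitly that the constant produced by this computation is the regularization weight $\alpha$. The factor $2/\beta$ in \eqref{eq:pca} then corresponds to whatever normalization identifies $\alpha$ with the paper's scaling, and I would state the derivation in terms of $\alpha$.

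\emph{Main obstacle: uniqueness of the diagonalization.} The lemma states the result for "the" diagonalization $UDU^T$, but a numerically computed diagonalization of $\E_\mu[\psi\psi^T]$ need not return the same orthogonal factor $U$. I would handle this in two cases.
\begin{itemize}
\item If the $\lambda_i$ (equivalently the $D_{ii}$) are distinct, the eigenvectors are determined up to sign. The recovered $\phi_i$ are then correct up to sign, which is harmless for eigenfunctions.
\item If some $\lambda_i$ coincide, any other diagonalization has the form $UQ$, where $Q$ is block-orthogonal on the eigenspaces of $D$. Then $D^{-1/2}Q^TU^T\psi = Q^T\phi$, because $Q$ commutes with $D$. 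This mixes only eigenfunctions sharing an eigenvalue, so it is again an orthonormal set of eigenfunctions with the same eigenvalues.
\end{itemize}
Finally, sorting $D_{ii}$ in decreasing order corresponds to sorting $\lambda_i$ in increasing order, which matches the indexing of the eigensystem.
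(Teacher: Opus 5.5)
Your proposal is correct and follows the paper's proof: both compute $\E_\mu[\psi\psi^T] = U\bigl(I-\frac{1}{2\alpha}\Lambda\bigr)U^T$ from orthonormality and then invert $\psi = UD^{1/2}\phi$. You are also right that the constant should be $2\alpha$: the paper's proof does the same computation but then writes $D = I-\frac{\beta}{2}\Lambda$, which matches the stated $\lambda_i = \frac{2}{\beta}(1-D_{ii})$ only if $\alpha=\beta^{-1}$, so $\lambda_i = 2\alpha(1-D_{ii})$ is what the hypotheses support, and your treatment of the non-uniqueness of the diagonalization is an addition the paper omits.
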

    \begin{proof}
	By definition of $\psi$, we have
	\begin{align}
		\E_\mu[\psi\psi^T] &= U\E_\mu[\tilde\phi\tilde\phi^T]U^T\\
		&= U\left(I-\frac{1}{2\alpha}\Lambda\right)U^T
	\end{align}
	where $\Lambda = \text{diag}(\lambda_i)$ and we have used the orthonormality property of the eigenfunctions. From this, we obtain that
	\begin{equation}
		D = I - \frac{\beta}{2}\Lambda, \quad \psi = UD^{1/2}\phi
	\end{equation}
	which concludes the proof.
\end{proof}
In light of this result, we can estimate the second moment matrix $\E_\mu[\psi\psi^T]$, apply a diagonalization algorithm, and obtain the eigenfunctions and eigenvalues using \eqref{eq:pca}.
        \subsection{Empirical loss \& sampling}
        \paragraph{Rewriting variational loss} Recalling the equation \eqref{eq:symmetry},
        \begin{equation}
		\label{eq:symmetry1}
		\langle \varphi, \cL \psi\rangle_\mu = \langle \nabla\varphi, \nabla \psi\rangle_\mu + 2\beta^2\langle\varphi, f\psi\rangle_\mu,
	   \end{equation}
        we see that it is possible to evaluate inner products of the form $\langle \varphi, \cL \psi\rangle_\mu$ by only evaluating $\varphi, \psi$ and its derivatives. This avoids the expensive computation of the second order derivatives of the neural network, making the variational losses less memory-intensive than the other loss functions, which requires explicit computation of $\cL \phi$. 
        \paragraph{Estimation of inner products} All of the loss functions discussed for learning eigenfunctions contain inner products of the form $\langle \psi, \phi\rangle_\mu$. To obtain an empirical loss, these quantities most be approximated. When $\mu$ is a density, this is done using a Monte Carlo estimate
        \begin{equation}
            \langle \varphi, \psi\rangle_\mu = \int \varphi \psi\ \dd\mu = \E_\mu[ \varphi \psi] \approx \frac{1}{m}\sum_{i=1}^m \varphi(X_i)\psi(X_i), \quad X_i \sim \mu.
        \end{equation}
        Since $\mu(x) \propto \exp(-2\beta E(x))$, we can employ Markov Chain Monte Carlo (MCMC) techniques in order to obtain the samples $(X_i)$ \citep{brooksHandbookMarkovChain2011}. In particular, when training the eigenfunctions, we can store $m$ samples in memory and apply some number of MCMC steps in each iteration to update these samples. Alternatively, one can pre-sample a large dataset of samples from $\mu$.

        \paragraph{Non-confining energy.} In general, $\mu$ may not be a finite measure, for instance when we have a repulsive LQR ($E = -\frac{1}{2}\|x\|^2$). In this case, the inner products can no longer be estimated directly from samples, but may still be computed by using importance sampling techniques \citep{liuMonteCarloStrategies2004,tokdarImportanceSamplingReview2010}. In the case where the measure defined through $\mu(x) = \exp(-2\beta E(x))$ is not finite but $\bar\mu(x) = \exp(2\beta E(x))$ is, we can apply importance sampling with $\bar\mu$, so that the inner products are obtained as
        \begin{equation}
            \langle \varphi, \psi\rangle_\mu = \int \varphi\psi e^{-2\beta E(x)}\dd x = \E_{\bar\mu}\left[\bar\varphi\bar\psi\right]
        \end{equation}
        where we have defined $\bar\varphi = e^{-2\beta E}\varphi$ and $\bar\psi = e^{-2\beta E}\psi$. For stable training, it is then advisable to parametrize $\bar\phi := \phi e^{-2\beta E}$ instead of $\phi$. 
        \subsection{Logarithmic regularization}
        As discussed in the main text, the PINN and relative eigenfunction losses \eqref{eq:pinnloss}-\eqref{eq:relloss} have the form
        \begin{equation}
            \cR(\phi) = \cR_{main}(\phi) + \alpha\cR_{reg}(\phi), \quad \cR_{reg}(\phi) = (\|\phi\|^2_\rho - 1)^2. 
        \end{equation}
        for some $\alpha > 0$. We observe in our experiments that this regularizer is sometimes not strong enough, and the network may still converge to $\phi = 0$. For this reason, we instead use a logarithmic regularizer
        \begin{equation}
            \cR_{reg}(\phi) = (\log \|\phi\|_\rho)^2
        \end{equation}
        The behaviour is exactly the same as before for $\|\phi\|_\rho \approx 1$, since $\log(1+x) = x + \cO(x^2)$ as $x\to 0$, but this regularizer avoids the convergence to $\phi = 0$.
        \subsection{FBSDE Loss}
We briefly discuss the Robust FBSDE loss for stochastic optimal control introduced in \cite{anderssonConvergenceRobustDeep2023}. The main idea is that the solution to the HJB equation \eqref{eq:hjb} can be written down as a pair of SDEs, as the following lemma illustrates.
\begin{lemma}
    Suppose $\sigma = I$. Then it holds that the solution to the pair of FBSDEs
    \begin{align}
        dX_t &= (b(X_t,t) - Z_t)\dd t + \sqrt{\lambda}\dd W_t, \quad &X_0 \sim p_0,\\
        dY_t &= \left(-f(X_t,t) - \frac{1}{2}\|Z_t\|^2\right) \dd t + \sqrt{\lambda}\langle Z_t, \dd W_t\rangle, &Y_T = g(X_T)
    \end{align}
    is given by $Z_t = \partial_x V(X_t,t)$ and $Y_t = V(X_t,t)$.
\end{lemma}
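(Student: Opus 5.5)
The plan is to verify the claim directly with Itô's formula, taking the value function $V$ as given. I will assume $V$ is a classical $C^{2,1}$ solution of the HJB equation \eqref{eq:hjb} with $\sigma = I$ and $\beta^{-1} = \lambda$. Under these choices $\cK V = \frac{\lambda}{2}\Delta V + b^T\nabla V - \frac12\|\nabla V\|^2 + f$, and I read the time-dependent $b(\cdot,t)$, $f(\cdot,t)$ in the statement as allowing the same computation verbatim. The strategy is to construct the candidate triple $(X,Y,Z)$ from $V$ and check that it satisfies both equations and the terminal condition.

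\textbf{Step 1 (forward equation).} I first solve the forward SDE in feedback form,
\begin{equation*}
\dd X_t = \bigl(b(X_t,t) - \nabla V(X_t,t)\bigr)\dd t + \sqrt{\lambda}\,\dd W_t, \qquad X_0\sim p_0 .
\end{equation*}
Existence of a unique strong solution follows from the regularity conditions of \autoref{app:technical}, provided $\nabla V$ is locally Lipschitz with linear growth. I then set $Z_t := \nabla V(X_t,t)$ and $Y_t := V(X_t,t)$. By construction the forward equation of the lemma holds with this $Z$.

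\textbf{Step 2 (backward equation via Itô).} Apply Itô's formula to $V(X_t,t)$:
\begin{equation*}
\dd Y_t = \Bigl(\partial_t V + \nabla V^T(b - \nabla V) + \tfrac{\lambda}{2}\Delta V\Bigr)(X_t,t)\,\dd t + \sqrt{\lambda}\,\langle \nabla V(X_t,t), \dd W_t\rangle .
\end{equation*}
The HJB equation gives $\partial_t V + \frac{\lambda}{2}\Delta V + b^T\nabla V = \frac12\|\nabla V\|^2 - f$. Substituting, the drift becomes
\begin{equation*}
\tfrac12\|\nabla V\|^2 - f - \|\nabla V\|^2 = -f - \tfrac12\|Z_t\|^2 ,
\end{equation*}
and the martingale part is exactly $\sqrt{\lambda}\langle Z_t,\dd W_t\rangle$. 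The terminal condition $Y_T = V(X_T,T) = g(X_T)$ follows from $V(\cdot,T) = g$. This shows that $(X,Y,Z)$ solves the FBSDE pair.

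\textbf{Step 3 (well-posedness and main obstacle).} The main difficulty is regularity rather than algebra. Itô's formula needs $V\in C^{2,1}$, whereas the paper in general only has a viscosity solution (cf.\ \Cref{thm_equivalence_semigroup_viscosity}). The stochastic integral should also be a true martingale, i.e.\ $\E\int_0^T\|Z_t\|^2\dd t<\infty$, so that $Y_t = \E\bigl[g(X_T) + \int_t^T (f + \frac12\|Z_s\|^2)\dd s \,\big|\, \F_t\bigr]$ is legitimate. My first approach would be to assume enough smoothness and polynomial growth of $V$ and $\nabla V$, as in the linear-growth class of \autoref{app:technical}, and to use standard moment bounds on $X$ to get square-integrability. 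If uniqueness of the FBSDE solution is also wanted, I would invoke the standard uniqueness theory for quadratic BSDEs coupled through the feedback $Z$ in \cite{anderssonConvergenceRobustDeep2023}. The statement as phrased only asserts that this pair is a solution, so Steps 1 and 2 suffice.
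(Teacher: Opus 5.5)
Your verification is correct. With $\lambda=\beta^{-1}$ and $\sigma=I$, It\^o's formula for $V(X_t,t)$ along the feedback diffusion, combined with the HJB equation $\partial_t V+\frac{\lambda}{2}\Delta V+b^T\nabla V=\frac12\|\nabla V\|^2-f$, gives exactly the drift $-f-\frac12\|Z_t\|^2$, the martingale part $\sqrt{\lambda}\langle Z_t,\dd W_t\rangle$, and $Y_T=g(X_T)$. The paper itself gives no proof of this lemma; it is quoted as background for the FBSDE loss of \cite{anderssonConvergenceRobustDeep2023}, so there is no competing argument to compare against. Yours is the standard nonlinear Feynman--Kac verification, and assuming $V\in C^{2,1}$ is consistent with the lemma's own use of $\partial_x V$.

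On uniqueness, your deferral to quadratic BSDE theory is vaguer than it needs to be. The system is also fully coupled ($X$ depends on $Z$), so decoupled results do not apply verbatim. The same ingredients settle it directly. For an arbitrary solution $(X,Y,Z)$, set $D_t=V(X_t,t)-Y_t$ and $\delta_t=\nabla V(X_t,t)-Z_t$. It\^o along this $X$ (whose drift is $b-Z_t$) plus HJB gives $\dd D_t=\frac12\|\delta_t\|^2\dd t+\sqrt{\lambda}\,\delta_t^T\dd W_t$ with $D_T=0$. Hence $\dd\,e^{-D_t/\lambda}=-\lambda^{-1/2}e^{-D_t/\lambda}\delta_t^T\dd W_t$, which is the Cole--Hopf transform once more. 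So $e^{-D/\lambda}$ is a positive local martingale ending at $1$. Whenever it is a true martingale (e.g.\ if $Y-V(X,\cdot)$ is bounded), you get $D\equiv 0$, hence $\delta\equiv 0$, and $X$ is the unique strong solution of your feedback SDE. Conversely, every positive strict local martingale $M$ with $M_T=1$ produces a different solution via $D=-\lambda\log M$. Uniqueness therefore holds only within such an integrability class, and reading the lemma as an existence statement, as you do, is the right call.
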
This pair of SDEs can be transformed in a variational formulation which is amenable to deep learning, as described in detail in \cite{anderssonConvergenceRobustDeep2023}. Using the Markov property of the FBSDE, one can show that $Z_t = \zeta(t,X_t)$ for some function $\zeta: [0,T]\times \R^d \to \R^d$, and that the FBSDE problem can be reformulated as the following variational problem:
\begin{equation}
    \label{eq:fbsdevar}
    \begin{cases}
        \underset{\zeta}{\text{minimize}}\  \Psi_{\alpha}(\zeta) = \E[\cY_0^\zeta] + \alpha \E\left[|Y_T^\zeta - g(X_T^\zeta)|\right], \quad \text{ where } \\
        \cY_0^\zeta = g(X_T^\zeta) + \int_0^T \left(f(X_t^\zeta,t)+\frac{1}{2}\|Z_t^\zeta\|^2\right) \dd t - \int_0^T \langle Z_t^\zeta,dW_t\rangle\\
        X_t^\zeta = X_0 + \int_0^t \left( b(s,X_s^\zeta) - Z_s^\zeta\right)\dd t + \int_0^t \lambda\dd W_s\\
        Y_t^\zeta = \E[\cY_0^\zeta] - \int_0^t \left(f(X_s^\zeta,s)+\frac{1}{2}\|Z_s^\zeta\|^2\right)\dd s + \int_0^t \langle Z_s^\zeta, \dd W_s\rangle, \quad Z_t^\zeta = \zeta(t,X_t^\zeta), \quad t \in [0,T] 
     \end{cases}
\end{equation}
Parametrizing $\zeta$ as a neural network, we can then simulate the stochastic processes in \eqref{eq:fbsdevar} and define the loss function as
\begin{equation}
    \cR_{FBSDE}(u;X^u) = \E[\cY_0^\zeta] + \alpha \E\left[|Y_T^\zeta - g(X_T^\zeta)|\right], \quad \text{ with } \zeta = -u
\end{equation}
For more details we refer to the relevant work \cite{anderssonConvergenceRobustDeep2023}.

        \subsection{IDO loss functions}
        The IDO algorithm described in Algorithm \ref{algo:ido} (\autoref{app:algorithms}) is rather general, and a large number of algorithms can be obtained by specifying different loss functions. For a detailed discussion of the various loss functions and the relations between the resulting algorithms, we refer to \cite{domingo-enrichTaxonomyLossFunctions2024, domingo-enrichStochasticOptimalControl2024c}. Here, we go over the loss functions that were used for the experiments in this paper.

        \paragraph{Adjoint loss} The most straightforward choice of loss is to simply use the objective of the control, and define
\begin{equation}
	\label{eq:relentropy}
	\cR(u;X^u) = \int_0^T \left(\frac{1}{2}\|u(X_t^u,t)\|^2 + f(X_t^u)\right)\dd t + g(X_T^u).
\end{equation}
This loss is also called the \emph{relative entropy loss}. When converting this to an empirical loss, there are two options. The \emph{discrete adjoint method} consists of first discretizing the objective and then differentiating w.r.t. the parameters. However, this requires storing the numerical solver in memory and can hence be quite memory-intensive.

The \emph{continuous adjoint method} instead analytically computes derivatives w.r.t. the state space. To this end, define the adjoint state as
\begin{equation}
	a(t,X^u;u) := \nabla_{X_t} \left(\int_t^T \left(\frac{1}{2}\|u(X_t^u,t)\|^2 + f(X_t^u)\right)\dd t' + g(X_T^u)\right),
\end{equation}
where $X^u$ solves \eqref{eq:socsde}. Then the dynamics of $a$ can be computed as
\begin{align}
    \label{eq:adjoint}
	\dd a_t &= \left(\nabla_{X_t^u}(b(X_t^u,t)+\sigma(t)u(X_t^u,t)\right)^T a(t; X^u,u) \dd t \notag \\
	&+ \left(\nabla_{X_t^u} (f(X_t^u,t) + \frac{1}{2}\|u(X_t^u,t)\|^2\right)\dd t,\\
	a(T;X^u,u) &= \nabla g(X_T^u).
\end{align}
The path of $a$ is obtained by solving the above equations backwards in time given a trajectory $(X_t^u)$ and control $u$. Finally, the derivative of the relative entropy loss \eqref{eq:relentropy} is then computed as
\begin{equation}
    \label{eq:adjointgrad}
	\partial_\theta \cR = \frac{1}{2} \int_0^T \frac{\partial}{\partial\theta} \|u(X_t^{\bar u},t)\|^2 \dd t + \int_0^T \left(\frac{\partial u(X_t^{\bar u},t)}{\partial \theta}\right)^T \sigma(t) a(t;X_t^{\bar u}, \bar u) \dd t,
\end{equation}
where the notation $\bar u$ means that we do stop gradients w.r.t. $\theta$ from flowing through these values. Both the discrete and continuous adjoint method have been shown to work well in practice \citep{nuskenSolvingHighdimensionalHamilton2021, bertsekasStochasticOptimalControl1996, domingo-enrichTaxonomyLossFunctions2024}. However, their training can be unstable due to the non-convexity of the problem.

\paragraph{Variance and log-variance} For the second class of loss functions, let $(X_t^v)$ denote the solution to \eqref{eq:socsde}, with $u$ replaced by $v$. Then we can define
\begin{equation}
    \widetilde{Y}_T^{u,v} = - \int_0^T (u\cdot v)(X_s^v,s)\dd s - \int_0^T f(X_s^v,s)\dd s - \int_0^T u(X_s^v, s) \cdot \dd W_s + \int_0^T \|u(X^v_s,s)\|^2\dd s.
\end{equation}
The \emph{variance} and \emph{log-variance} loss are then defined as
\begin{align}
    \cR_{\mathrm{Var}}(u) &= \text{Var}(e^{\widetilde{Y}_T^{u,v} - g(X_T^v)}),\\
    \cR_{log-var}(u) &= \text{Var}(\widetilde{Y}_T^{u,v} - g(X_T^v)).
\end{align}
It can be shown that these losses are minimized when $u=u^*$, irrespective of the choice of $v$. In \cite{nuskenSolvingHighdimensionalHamilton2021}, it was shown that these loss functions are closely connected to the FBSDE formulation of the SOC problem.

\paragraph{SOCM} The Stochastic Optimal Control Matching (SOCM) loss, introduced in \cite{domingo-enrichStochasticOptimalControl2024c}, is one of the most recently introduced IDO losses. It is described in the following theorem, where we adapt the notation $\lambda = \beta^{-1}$:
\begin{theorem}{\cite[Theorem 1]{domingo-enrichStochasticOptimalControl2024c}}
	For each $t \in [0, T]$, let $M_t : [t, T] \to \mathbb{R}^{d \times d}$ be an arbitrary matrix-valued differentiable function such that $M_t(t) = \mathrm{Id}$. Let $v \in \mathcal{U}$ be an arbitrary control. Let $\mathcal{R}_{\mathrm{SOCM}} : L^2(\mathbb{R}^d \times [0, T]; \mathbb{R}^d) \times L^2([0, T]^2; \mathbb{R}^{d \times d}) \to \mathbb{R}$ be the loss function defined as
	\begin{equation}
		\label{eq:socmloss}
		\mathcal{R}_{\mathrm{SOCM}}(u, M) := \mathbb{E} \left[ \frac{1}{T} \int_0^T \left\| u(X^v_t, t) - w(t, v, X^v, W, M_t) \right\|^2 dt \times \alpha(v, X^v, W) \right],
	\end{equation}
	\textit{where $X^v$ is the process controlled by $v$ (i.e., $dX^v_t = \left( b(X^v_t, t) + \sigma(t) v(X^v_t, t) \right) dt + \sqrt{\lambda} \sigma(t) dW_t$ and $X^v_0 \sim p_0$), and}
	\begin{align}
		w(t, v, X^v, W, M_t) &= \sigma(t)^\top \Bigg( - \int_t^T M_t(s) \nabla_x f(X^v_s, s) ds - M_t(T) \nabla g(X^v_T) \notag\\
		&\quad + \int_t^T \left( M_t(s) \nabla_x b(X^v_s, s) - \partial_s M_t(s) \right) (\sigma^{-1})^\top(s) v(X^v_s, s) ds \notag\\
		&\quad + \lambda^{1/2} \int_t^T \left( M_t(s) \nabla_x b(X^v_s, s) - \partial_s M_t(s) \right) (\sigma^{-1})^\top(s) dW_s \Bigg),\notag
	\end{align}
	\begin{align}
		\alpha(v, X^v, B) &= \exp \Bigg( - \lambda^{-1} \int_0^T f(X^v_t, t) dt - \lambda^{-1} g(X^v_T) \notag\\
		&- \lambda^{-1/2} \int_0^T \langle v(X^v_t, t), dW_t \rangle - \frac{\lambda^{-1}}{2} \int_0^T \|v(X^v_t, t)\|^2 dt \Bigg).
	\end{align}
	
	\textit{$\mathcal{L}_{\mathrm{SOCM}}$ has a unique optimum $(u^*, M^*)$, where $u^*$ is the optimal control.}
\end{theorem}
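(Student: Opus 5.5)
The plan is to read $\alpha$ as an importance weight that turns the expectation under $X^v$ into an expectation under the optimally controlled path measure. After that, the loss becomes a least-squares regression whose unique minimizer in $u$ is a conditional expectation of $w$, and the main work is to show that this conditional expectation equals $u^*$ for every admissible $M_t$.

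First, I would apply Girsanov's theorem. Let $\P^v$ be the law of $X^v$ and $\P^0$ the law of the uncontrolled process. Then $\tfrac{\dd\P^0}{\dd\P^v}=\exp\bigl(-\lambda^{-1/2}\int_0^T\langle v,\dd W_t\rangle-\tfrac{\lambda^{-1}}{2}\int_0^T\|v\|^2\dd t\bigr)$. Hence $\alpha=\tfrac{\dd\P^0}{\dd\P^v}\exp\bigl(-\lambda^{-1}(\int_0^T f\,\dd t+g)\bigr)$, and by the Feynman--Kac/Cole--Hopf representation of the linearized HJB equation \eqref{eq:linearhjb}, $\alpha/\E[\alpha]=\tfrac{\dd\P^{u^*}}{\dd\P^v}$. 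So $\cR_{\mathrm{SOCM}}(u,M)=\E[\alpha]\,\E_{\P^{u^*}}\bigl[\tfrac1T\int_0^T\|u(X_t,t)-w_t\|^2\dd t\bigr]$, where $w_t$ is re-expressed in the noise of the new measure. Then I would use the standard $L^2$ decomposition, conditioning on $X_t$ at each $t$:
\begin{equation*}
\cR_{\mathrm{SOCM}}(u,M)=\E[\alpha]\,\E_{\P^{u^*}}\Bigl[\tfrac1T\int_0^T\Bigl(\|u(X_t,t)-\bar w(X_t,t)\|^2+\bigl\|w_t-\bar w(X_t,t)\bigr\|^2\Bigr)\dd t\Bigr],
\end{equation*}
where $\bar w(x,t)=\E_{\P^{u^*}}[w_t\mid X_t=x]$. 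The second term does not depend on $u$. Because $\P^{u^*}$ has positive marginal densities, the unique minimizer in $u$ is $u=\bar w$ almost everywhere.

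The key step is to show $\bar w=u^*=-\sigma^T\nabla V$ for every choice of $M_t$. Write $e^{-\lambda^{-1}V(x,t)}=\E^0\bigl[e^{-\lambda^{-1}(\int_t^Tf+g)}\mid X_t=x\bigr]$ and differentiate in $x$ using a path-wise reparametrization. Perturbing the initial condition along the flow yields the derivative process $\nabla_x X_s$. The reparametrization trick introduces an arbitrary matrix process $M_t(s)$ with $M_t(t)=\mathrm{Id}$, and the mismatch $M_t(s)\nabla_xb-\partial_sM_t(s)$ between $M_t$ and the true Jacobian flow is absorbed by integration by parts against the Brownian motion (a Malliavin/Bismut-type formula). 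The outcome is the $\dd W_s$ term in $w$. This gives $\nabla_x e^{-\lambda^{-1}V}=-\lambda^{-1}\E^0\bigl[\sigma^{-T}\tilde w\, e^{-\lambda^{-1}(\int_t^Tf+g)}\mid X_t=x\bigr]$ for the uncontrolled version $\tilde w$ of $w$. Dividing by $e^{-\lambda^{-1}V}$ turns the right-hand side into a conditional expectation under $\P^{u^*}$, so $-\sigma^T\nabla V=\bar w$. Finally, changing from $W$ to the $v$-driven noise accounts for the $v$-dependent drift terms in $w$. I expect this identity, with the bookkeeping of the $v$ terms and the Itô integration by parts, to be the main obstacle, and I would verify it first in the case $M_t(s)=\mathrm{Id}$.

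For the optimum in $M$: once $u=u^*$, the loss equals $\E[\alpha]$ times the integrated conditional variance of $w_t$. Since $w$ is affine in $M$ (through $M_t$ and $\partial_sM_t$), this variance is a convex quadratic functional of $M$. Existence of $M^*$ follows from coercivity on the affine space $\{M_t(t)=\mathrm{Id}\}$. For uniqueness I would show strict convexity: a direction $\delta M$ with $\delta M_t(t)=0$ and zero variance contribution forces $\int_t^T(\delta M\nabla_xb-\partial_s\delta M)\sigma^{-T}\dd W_s$ to vanish almost surely. By the Itô isometry the integrand then vanishes, so $\delta M$ solves a linear ODE with zero initial data, hence $\delta M=0$. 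This gives the unique optimum $(u^*,M^*)$.
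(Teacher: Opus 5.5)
Your $u$-part follows the route the paper indicates. The paper gives no proof of its own; it only points to the path-integral representation and the reparametrization trick of SOCM. Your ingredients are the same: Girsanov reweighting by $\alpha$, $L^2$ projection onto $X_t$, and the identity $\E[w_t\alpha\mid X_t]/\E[\alpha\mid X_t]=u^*(X_t,t)$ for every admissible $M_t$, with the mismatch $M_t\nabla b-\partial_sM_t$ absorbed by a Girsanov-type perturbation of the noise.

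Two slips there are easily fixed.
\begin{itemize}
\item $\alpha/\E[\alpha]$ is the density of the optimally controlled process started from the tilted law $p_0e^{-\lambda^{-1}V(\cdot,0)}/Z$, not of $\P^{u^*}$. This is harmless once you condition on $X_t$.
\item The gradient identity should read $\nabla_x e^{-\lambda^{-1}V}=+\lambda^{-1}\E^0[\sigma^{-T}\tilde w\,e^{-\lambda^{-1}(\int_t^Tf+g)}\mid X_t=x]$; check the case $M=\mathrm{Id}$, $b=0$. With your minus sign you would get $\bar w=-u^*$.
\end{itemize}

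The genuine gap is your strict-convexity step for $M^*$. Let $Q$ be the $\alpha$-reweighted measure, which is equivalent to the uncontrolled law $\P^0$. Since $\E_Q[w_t(M)\mid X_t]=u^*(X_t,t)$ for all admissible $M$, a direction $\delta M$ with $\delta M(t)=0$ has zero variance contribution if and only if
\[
F+\sqrt{\lambda}\int_t^T\bigl(\delta M\nabla b(X_s)-\partial_s\delta M\bigr)\sigma^{-T}\,\dd W^0_s=0\quad\text{a.s.},\qquad F=-\int_t^T\delta M(s)\nabla f(X_s)\,\dd s-\delta M(T)\nabla g(X_T).
\]
You drop $F$. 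But $F$ depends on $\delta M$ and has a nontrivial martingale part that can cancel the stochastic integral. By Clark--Ocone, the condition is equivalent to $\E[F\mid\F_t]=0$ together with
\[
\partial_r\delta M(r)=\delta M(r)\nabla b(X_r)-N_r\sigma\sigma^T,\qquad N_r=\E\Bigl[\int_r^T\delta M(s)\nabla^2f(X_s)J_{r\to s}\,\dd s+\delta M(T)\nabla^2g(X_T)J_{r\to T}\Bigm|\F_r\Bigr],
\]
where $J$ is the Jacobian flow of the uncontrolled SDE. Hence the integrand of your stochastic integral equals $N_r\sigma$. Its vanishing is equivalent to $N\equiv0$, which is exactly what has to be shown. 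Your It\^o-isometry and linear-ODE step only concludes after that.

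Closing this needs structure you do not invoke.
\begin{itemize}
\item Under gradient drift ($\nabla b$ symmetric) one gets $\dd\,\Tr(\delta M N_r^T)=-\bigl(\Tr(N_r\sigma\sigma^TN_r^T)+\Tr(\delta M\nabla^2f(X_r)\delta M^T)\bigr)\dd r+\dd(\text{mart.})$, with $N_T=\delta M(T)\nabla^2g(X_T)$. This forces $N\equiv0$ when $f$ and $g$ are convex.
\item In the LQR case, $\E[F\mid\F_t]=-N_tX_t$ with $N$ deterministic. This gives $N_t=0$ for $t>0$, and the linear ODE system for $(\delta M,N)$ with zero data at $t$ then finishes the argument.
\end{itemize}
For nonconvex costs (e.g.\ the double well) or non-symmetric $\nabla b$ (the general SOCM setting), your proposal contains no argument.

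Existence is affected in the same way. The cross term between $F$ and the stochastic integral can be negative and of the same order as the other terms. So coercivity of the quadratic form is not automatic; it would have to hold in an $H^1$-in-$s$ norm, since the stated $L^2([0,T]^2)$ domain does not control $\partial_sM$. At best it follows from this same injectivity plus a compactness argument.
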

The proof hinges on the path integral representation described in \cite{kappenPathIntegralsSymmetry2005} and using a reparametrization trick to compute its gradients. We refer the reader to the relevant work \cite{domingo-enrichStochasticOptimalControl2024c} for more details. The reason why this method outperforms other IDO losses is as follows: the loss \eqref{eq:socmloss} can be seen as minimizing the discrepancy between $u$ and a target vector field $w$. The addition of the parametrized matrix $M$ allows the variance of this weight to be reduced, making it easier to learn. The downside of this method is that the variance of the importance weight $\alpha$ can blow up in more complex settings or with poor initialization, with the method failing to converge as a result.  

\paragraph{Adjoint Matching} The most recently introduced IDO loss, \emph{adjoint matching}, was proposed in \cite{domingo-enrichAdjointMatchingFinetuning2024}, and was proposed in the context of finetuning diffusion models.  It is based on two observations. Firstly, one can write down a regression objective that does not have an importance weighting $\alpha$ by using the adjoint state $a$ defined earlier.

\begin{lemma}\cite[Proposition 2]{domingo-enrichAdjointMatchingFinetuning2024}
    Define the basic adjoint matching objective as
    \begin{equation}
        \cR_{Basic-Adj-Match}(u;X^u) = \frac{1}{2}\int_0^T \|u(X_t,t)+\sigma(t)^Ta(t;X^u,\bar u)\|^2 \dd t, \quad \bar u = \verb|stopgrad|(u).
    \end{equation}
    where $\bar u = \verb|stopgrad|(u)$ means that the gradients of $\bar u$ w.r.t. the parameters $\theta$ of the control $u$ are artificially set to zero. Then the gradient of this loss w.r.t. $\theta$ is equal to \eqref{eq:adjointgrad}, the gradient of the loss in the continuous adjoint method. Consequently, the only critical point of $\E_{X^u \sim \P^u}[\cR_{Basic-Adj-Match}]$ is the optimal control $u^*$.
\end{lemma}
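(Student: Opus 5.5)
The plan is to establish the lemma in two parts. First comes a pathwise gradient identity. Then the critical points of the expected loss are identified with fixed points of a policy-improvement map, and those fixed points are identified with solutions of the HJB equation \eqref{eq:hjb}. Throughout I take $\sigma = I$ as in \autoref{app:technical}, so $\sigma(t)^T = \sigma(t)$; for general constant invertible $\sigma$ one reduces to this case by the change of variables there.

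\emph{Step 1 (gradient identity).} Under the stopgrad convention, neither the sampled trajectory $X^u$ nor the adjoint $a(t;X^u,\bar u)$ depends on $\theta$; only the explicit occurrence of $u(X_t,t)$ does. Differentiating under the integral gives
\begin{equation*}
\partial_\theta \cR_{Basic-Adj-Match} = \int_0^T \Big(\frac{\partial u(X_t,t)}{\partial\theta}\Big)^T\big(u(X_t,t)+\sigma^T a(t;X^u,\bar u)\big)\,\dd t .
\end{equation*}
The first piece equals $\frac12\int_0^T \partial_\theta\|u\|^2\,\dd t$, and the second equals $\int_0^T (\partial_\theta u)^T\sigma a\,\dd t$. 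Together these are exactly \eqref{eq:adjointgrad}. This step is routine, given enough integrability (via the Lipschitz and linear-growth assumptions of \autoref{app:technical}) to pass $\partial_\theta$ inside $\E$.

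\emph{Step 2 (characterizing critical points).} Assume the control class is rich enough that stationarity holds in every direction $\delta u$. Then a critical point of $\E_{X^u\sim\P^u}[\cR_{Basic-Adj-Match}]$ satisfies $\E\int_0^T \delta u^T(u+\sigma^T a)\,\dd t = 0$ for all $\delta u$. Conditioning on $X_t = x$, this yields
\begin{equation*}
u(x,t) = -\sigma^T\,\E[a(t;X^u,u)\mid X_t^u=x]
\end{equation*}
for $\P^u$-a.e.\ $(x,t)$. I then want to identify $\E[a(t)\mid X_t=x]$ with $\nabla_x J(u;x,t)$, the gradient of the cost-to-go of the \emph{fixed} feedback $u$. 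This follows from the definition of $a$ as the pathwise derivative of the remaining cost with respect to $X_t$. The adjoint dynamics \eqref{eq:adjoint} include the $\nabla_x u$ terms, so this derivative is the full derivative of the cost of the closed-loop flow. One then interchanges gradient and conditional expectation via the Markov property and differentiability of the stochastic flow.

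\emph{Step 3 (fixed point implies optimality).} By Feynman--Kac, $J^u := J(u;\cdot,\cdot)$ solves the linear equation
\begin{equation*}
\partial_t J^u + \tfrac{1}{2\beta}\Delta J^u + (b+\sigma u)^T\nabla J^u + \tfrac12\|u\|^2 + f = 0, \qquad J^u(\cdot,T) = g .
\end{equation*}
Substitute $u = -\sigma^T\nabla J^u$. The cross term $(\sigma u)^T\nabla J^u$ becomes $-\|\sigma^T\nabla J^u\|^2$, and $\tfrac12\|u\|^2$ becomes $\tfrac12\|\sigma^T\nabla J^u\|^2$. Their sum is $-\tfrac12\|\sigma^T\nabla J^u\|^2$, so $J^u$ solves \eqref{eq:hjb}. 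By the uniqueness result \Cref{thm:theorem_unique_visc}, applied in a polynomial growth class that $J^u$ belongs to for admissible $u$, we get $J^u = V$. Hence $u = -\sigma^T\nabla V = u^*$. Conversely, $u^*$ satisfies the fixed-point relation, so it is a critical point.

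\emph{Main obstacle.} I expect the hardest part to be Step 2's identification $\E[a(t)\mid X_t=x]=\nabla_x J^u(x,t)$. It requires differentiability of the controlled flow in its initial condition, and justification of exchanging $\nabla_x$ with the expectation for unbounded $f,g$. A secondary issue is that \Cref{thm:theorem_unique_visc} only gives uniqueness on some $[\tau,T]$; covering all of $[0,T]$ would need iterating the argument backward in time or a global comparison principle. I would first try to get both from standard stochastic-flow estimates under the Lipschitz hypotheses on $b$ and $u$.
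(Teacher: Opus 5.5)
Your proposal is correct and follows the standard argument behind the cited result; the paper itself gives no proof and defers to Proposition 2 of \cite{domingo-enrichAdjointMatchingFinetuning2024}. The steps are the pathwise gradient identity under stopgrad, the stationarity condition $u=-\sigma^T\E[a(t;X^u,u)\mid X^u_t=x]=-\sigma^T\nabla_x J(u;x,t)$, and optimality of a policy-improvement fixed point. Your secondary worry that \Cref{thm:theorem_unique_visc} only covers $[\tau,T]$ can be avoided entirely: apply It\^o's formula to $J^u$ along $X^v$ for any admissible $v$ and use the fixed-point relation to get $J(v;x,t)=J^u(x,t)+\tfrac12\E\int_t^T\|v-u\|^2(X^v_s,s)\,\dd s$, which yields $J^u=V$ and $u=u^*$ without any PDE uniqueness theorem.
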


Secondly, it is observed that some terms in the SDE for the adjoint state \eqref{eq:adjoint} have expectation zero under the trajectories of the optimal control. Indeed, it holds by definition of the value function and adjoint state that
\begin{equation}
    \nabla V(x,t) = \E[a(t,X^{u^*},u^*)\ \vert\ X_t = x ]
\end{equation}
and hence, since $u^* = -\sigma\nabla V$, we get
\begin{equation}
    \E_{X\sim \P^{u^*}}\left[u^*(x,t)^T \nabla_x u^*(x,t) + a(t,X,u^*)^T\sigma(t)u^*(x,t)\ \vert\ X_t = x\right] = 0
\end{equation}
This motivates dropping the terms with expectation zero in \eqref{eq:adjoint}, yielding the loss function
\begin{align}
    \cR_{Adj-Match}(u;X) &= \frac{1}{2}\int_0^T \|u(X_t,t) + \sigma(t) \tilde a(t;X,\bar u)\|^2 \dd t, \quad \bar u = \text{\texttt{stopgrad}}(u),\\
    \text{ where } \frac{\dd}{\dd t} \tilde a(t;X) &= - (\tilde a(t;X)^T \nabla_xb(X_t,t) + \nabla_x f(X_t,t)),\\
    \tilde a (T, X) &= \nabla_x g(X_T).
\end{align}
The value $\tilde a$ is called the \emph{lean adjoint state}. The claimed benefit of this method is that the resulting loss is a simple least-squares regression objective with no importance weighting, allowing it to avoid the problem of high variance importance weights. We refer to the original work \cite{domingo-enrichAdjointMatchingFinetuning2024} for a more in-depth discussion and proofs related to the adjoint matching loss.

\section{Algorithms}
\label{app:algorithms}
We give an overview of the algorithm used for eigenfunction learning in Algorithm \autoref{algo:eigf}, and for the IDO method in Algorithm \autoref{algo:ido}.
\begin{algorithm}[h!]
    \caption{Deep learning for eigenfunctions} 
    \label{algo:eigf}
    \begin{algorithmic}[1]
        \STATE Parametrize the eigenfunctions $(\phi_i^{\theta_i})$, choose loss functions $(\cR_i)$.
        \STATE Fix a batch size $m$, number of iterations $N$, regularization $\alpha>0$, learning rate $\eta > 0$.
        \STATE Generate $m$ samples $(X_i)_{i=1}^m$ from $\mu$, for instance using an MCMC scheme.
        \FOR{$n=0,\ldots,N-1$}
        \STATE (optional) Update the samples $(X_i)_{i=1}^m$ using a sampling algorithm.
        \STATE Compute an $m$-sample Monte-Carlo estimate $\widehat{\cR}_i(\phi_i^{\theta_i})$ of the loss $\cR_i(\phi_i^{\theta_i})$ .
        \STATE Compute the gradients $\nabla_{\theta_i} \widehat{\cR_i}(\phi_i^{\theta_i})$ for the current parameters $\theta_i$, and update $\theta_i$ using Adam.
        \ENDFOR
        \STATE Estimate the eigenvalues $\lambda_i$ from the learned functions $\phi_i^{\theta_i}$
        \STATE \textbf{return } eigenfunction estimates $(\phi_i^{\theta_i})$, eigenvalue estimates $(\lambda_i)$.
    \end{algorithmic}
    \end{algorithm}
\begin{algorithm}[h!]
\caption{Iterative Diffusion Optimization (\textsc{IDO})} 
	\label{algo:ido}
	\begin{algorithmic}[1]
		\STATE Parametrize the control $u_\theta \in \cU, \theta \in \R^p$, and choose a loss function $\cR$.
		\STATE Fix a batch size $m$, number of iterations $N$, number of timesteps $K$, learning rate $\eta > 0$.
		\FOR{$i=0,\ldots,N-1$}
		\STATE Simulate $m$ trajectories of \eqref{eq:socsde} with control $u=u_\theta$ using a $K$-step discretization scheme.
		\STATE Compute an $m$-sample Monte-Carlo estimate $\widehat{\cR}(u_{\theta})$ of the loss $\cR(u_{\theta})$.
		\STATE Compute the gradients $\nabla_\theta \widehat{\cR}(u_{\theta})$ for the current parameters $\theta$ and update using Adam.
		\ENDFOR
		\STATE \textbf{return the learned control} $u_\theta \approx u^*$.
	\end{algorithmic}
\end{algorithm}
\newpage
\section{Experimental Details}
\label{app:experiments}
\subsection{Setting configurations}
To evaluate the learned controls, we will use three different metrics:
\begin{enumerate}
    \item The \emph{control objective} is the value we are trying to minimize, 
    \begin{equation}
        \label{eq:controlobjective}
        \E\left[\int_0^T \left(\frac{1}{2}\|u(X_t^u,t)\|^2 + f(X_t^u)\right)\dd t + g(X_T^u)\right].
    \end{equation}
    It can be estimated by simulating trajectories of \eqref{eq:socsde}. Unless mentioned otherwise, we estimate it using 65536 trajectories and report the standard deviation of the estimate.
    \item The \emph{control $L^2$ error at time $t$} is given by
    \begin{equation}
        \E_{x\sim \P^{u^*}_t}\left[\|u(x,t)-u^*(x,t)\|^2\right],
    \end{equation}
    where $\P^{u^*}_t$ denotes the density over $\R^d$ induced by the trajectory \eqref{eq:socsde} under the optimal control at time $t$. 
    \item The \emph{average control $L^2$ error} is the above quantity averaged over the entire trajectory,
    \begin{equation}
        \E_{t\sim [0,T]} \E_{x\sim \P^{u^*}_t}\left[\|u(x,t)-u^*(x,t)\|^2\right].
    \end{equation}
\end{enumerate}
These are also the metrics reported in previous works \cite{domingo-enrichStochasticOptimalControl2024c, domingo-enrichTaxonomyLossFunctions2024, nuskenSolvingHighdimensionalHamilton2021}. The $L^2$ error is introduced because, after the control $u$ is sufficiently close to the optimal control $u^*$, the expectation \eqref{eq:controlobjective} requires increasingly more Monte Carlo samples to distinguish $u$ from the optimal control. In this case the $L^2$ error is a more precise metric for determining how close a given control is to the optimal control.

For all experiments, we trained the neural networks involved using Adam with learning rate $\eta = 10^{-4}$. For the IDO methods, we use a batch size of $m=64$. For the eigenfunction learning, we sample from $\mu$ using the Metropolis-Adjusted Langevin Algorithm \citep{robertsOptimalScalingDiscrete1998}. Unless mentioned otherwise, we sample $m=65536$ samples, updating these in every iteration with 100 MCMC steps with a timestep size of $\Delta t=0.01$ after a warm-up phase of 1000 steps. For more details, we refer to the code in the supplementary material, which contains a description of all hyperparameters used.

\textsc{quadratic}\quad The first setting we consider has
\begin{equation}
    E(x) = \frac{1}{2}x^TAx, \quad f(x) = x^T Px, \quad g(x) = x^TQx,
\end{equation}
where $A\in \R^{d\times d}$ is symmetric, $Q\in\R^{d\times d}$ is positive definite and $P \in \R^{d\times d}$. This type of control problem is more widely known as the linear quadratic regulator. The optimal control is given by $u^*(x,t) = -2F_tx$, where $F_t$ solves the Riccati equation (see \cite[Theorem 6.5.1]{lecturenotessoc})
\begin{equation}
    \frac{\dd F_t}{\dd t} - A^TF_t - F_tA - 2F_tF_t^T + P = 0, \quad F_T = Q.
\end{equation}
We consider three different configurations:
\begin{itemize}
    \item \textsc{(isotropic)}  We set $d=20$, $A=I$, $P=I$, $Q=0.5I$, $\beta =1$, $T=4$, $x_0 \sim \cN(0,0.5I)$, taking $K=200$ time discretization steps for the simulation of the SDE.
    \item \textsc{(repulsive)} Exactly the same as isotropic, but with $A = -I$.
    \item \textsc{(anisotropic)}  We set $d=20$, $A=\text{diag}(e^{a_i})$, $P=U\text{diag}(e^{p_i})U^T$, $Q=0.5I$, $\beta =1$, $T=4$, $x_0 \sim \cN(0,0.5I)$, taking $K=200$ time discretization steps for the simulation of the SDE. The values $a_i,p_i$ are sampled i.i.d. from $\cN(0,1)$, and the matrix $U$ is a random orthogonal matrix sampled using \verb|scipy.stats.ortho_group| \citep{2020SciPy} at the start of the simulation.
\end{itemize}

\textsc{double well}\quad The second setting we consider is the $d$-dimensional double well defined through
\begin{equation}
    E(x) = \sum_{i=1}^d \kappa_i (x_i^2-1)^2, \quad f(x) = \sum_{i=1}^d \nu_i(x_i^2-1)^2, \quad g(x) = 0.
\end{equation}
where $d=10$, and $\kappa_i = 5, \nu_i=3$ for $i=1,2,3$ and $\kappa_i=\nu_i = 1$ for $i\geq4$. In addition, we again set $T=4$, $\beta =1$ and use $K=400$ discretization steps. This problem is similar to the one considered in \cite{nuskenSolvingHighdimensionalHamilton2021} and \cite{domingo-enrichStochasticOptimalControl2024c}, the difference being that we consider a longer horizon ($T=4$ instead  of $T=1$) and consider a nonzero running cost in order to have nontrivial long-term behaviour. This problem is considered a highly nontrivial benchmark problem, since the double well in each dimension creates a total of $2^{d} =1024$ local minima, making this setting highly multimodal. The ground truth is \emph{not} available in closed form, but can be approximated efficiently by noticing that the energy and running cost are a sum of one-dimensional terms, and hence we can compute the solution by solving $d$ one-dimensional problems using a classical solver.

\textsc{ring}\quad The final setting considers the setup
\begin{equation}
    E(x) = \alpha\left(\|x\|^2 - 2R^2\right)\|x\|^2, \quad f(x) = 2x_1, \quad g(x) =0,
\end{equation}
where $\alpha = 1, R = 5/\sqrt{2}$. This energy is nonconvex and has its minimizers lying on the hypersphere with radius $R$. This setting serves to highlight the difference between the relative eigenfunction loss \eqref{eq:relloss} and the other eigenfunction losses, and for visualization purposes will be done in $d=2$. In addition, we set $T=5, \beta = 1$ and $x_0 = (R,0)$. The energy landscape and running cost are visualized in \autoref{fig:ring2d}. The goal of the controller is essentially to guide the system to the left hand side of the $xy$-plane while being constrained by the potential to stay close to the circle of radius $R$. This setting requires a smaller timestep for stable simulation, we take $K=500$. 
\begin{figure}[h!]
    \centering
    \includegraphics[width=0.4\linewidth]{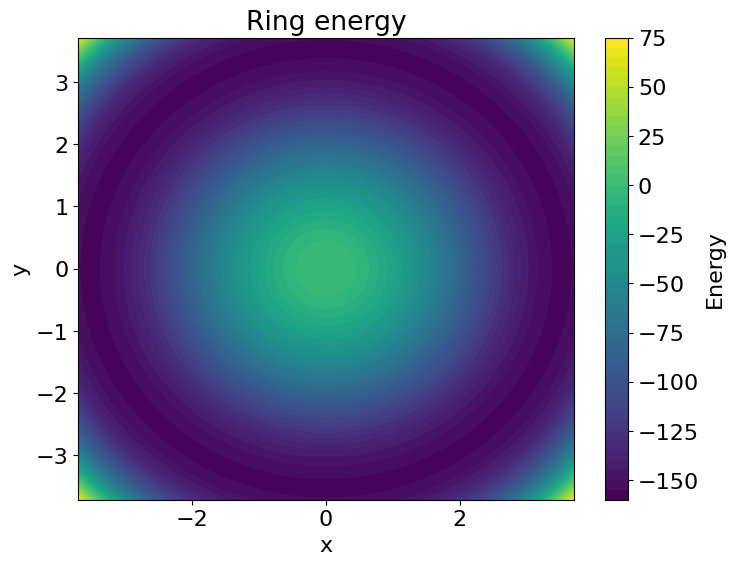}
    \includegraphics[width=0.4\linewidth]{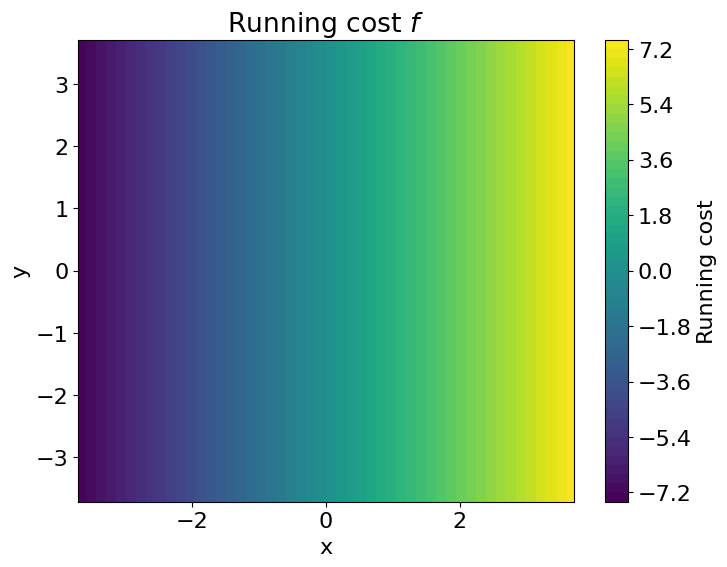}
    \caption{Energy function (left) and running cost (right) for the \textsc{ring} setting in $d=2$.}
    \label{fig:ring2d}
\end{figure}

\subsection{Model architecture and training}
\paragraph{Architecture} For the IDO methods, we use the exact same architecture as in \cite{domingo-enrichStochasticOptimalControl2024c}. They argue that the control can be viewed as the analog of a score function in diffusion models, and hence they use a simplified U-Net architecture, where each of the up-/downsampling steps is a fully connected layer with ReLU activations. As in their work, we use three downsampling and upsampling steps, with widths 256, 128 and 64.

For the eigenfunction models, we use the same architecture, but replace the ReLU activation with the GELU activation function $x \mapsto x\Phi(x)$, where $\Phi$ is the cdf of a standard normal distribution \citep{hendrycksGaussianErrorLinear2023}. This is done because the eigenfunction losses require evaluating the derivatives of the network w.r.t. the inputs, hence requiring a smoother activation function.

\paragraph{Training} For the eigenfunction method, we train using the following procedure: 
\begin{enumerate}
    \item Start training the top eigenfunction using the deep Ritz loss \eqref{eq:varloss}. Every 100 iterations, we estimate the eigenvalue $\lambda_0$, and continue training until the variance of these estimates (computed with EMA 0.5) is below $10^{-4}$ and we have reached at least 5000 iterations.
    \item Fix $\lambda_0$, and continue training the top eigenfunction using a loss function of choice among \eqref{eq:pinnloss},\eqref{eq:varloss}, \eqref{eq:relloss}. Start training the excited state using the variational loss $\cR_{\mathrm{Var}}$ in \eqref{eq:varloss2} with $k=1$ and regularization parameter $\alpha = |\lambda_0|$.
\end{enumerate}
For the combined methods, we first train the eigenfunction and eigenvalues using the method above with $\cR_{Rel}$ for 80\ 000 iterations, and then start training with an IDO loss using the control parametrization \eqref{eq:combcontrol}. We resample the trajectories in $[0,T_{cut}]$ (which only use the eigenfunction control) every $L=100$ iterations in order to have a diverse set of starting positions at $T = T_{cut}$.
\subsection{Computational cost}
\autoref{tab:iteration_times} shows the computation cost per iteration for the different algorithms for the \textsc{quadratic (repulsive)} setting, measured in seconds/iteration when ran in isolation on a single GPU. All experiments where carried out on an NVIDIA H100 NV GPU.
\begin{table}
\centering
\caption{Iteration times by method and loss}
\label{tab:iteration_times}
\begin{tabular}{llr}
\toprule
 & Experiment & Iteration time ($s$) \\
Method & Loss &  \\
\midrule
\multirow[t]{4}{*}{\textbf{COMBINED}} & \textbf{Adjoint Matching} & 0.332 \\
\textbf{} & \textbf{Log variance} & 0.328 \\
\textbf{} & \textbf{Relative entropy} & 0.419 \\
\textbf{} & \textbf{SOCM} & 0.432 \\
\cline{1-3}
\multirow[t]{4}{*}{\textbf{EIGF}} & \textbf{Deep ritz loss} & 0.227 \\
\textbf{} & \textbf{PINN} & 0.662 \\
\textbf{} & \textbf{Relative loss} & 0.662 \\
\textbf{} & \textbf{Variational loss} & 0.228 \\
\cline{1-3}
\textbf{FBSDE} & \textbf{FBSDE} & 0.443 \\
\cline{1-3}
\multirow[t]{4}{*}{\textbf{IDO}} & \textbf{Adjoint Matching} & 0.230 \\
\textbf{} & \textbf{Log variance} & 0.212 \\
\textbf{} & \textbf{Relative entropy} & 0.413 \\
\textbf{} & \textbf{SOCM} & 0.799 \\
\cline{1-3}
\bottomrule
\end{tabular}
\end{table}
\subsection{Further details on the \textsc{ring} setting}
The \textsc{ring} setting serves as an illustrative example the difference between the 'absolute' eigenfunction losses and our relative eigenfunction loss. As shown in \autoref{fig:eigfloss}, the relative loss obtains a drastically lower control objective than the absolute losses. To further understand this, consider the shape of the learned eigenfunctions for the different losses, shown in \autoref{fig:ring2d_eigf}. From the top row, it is clear that the learned eigenfunctions for the different methods are all very close in terms of the distance induced by $\|\cdot\|_\mu$. However, while the learned eigenfunctions look similar in $L^2(\mu)$, the logarithm of the learned eigenfunctions varies drastically, and hence the resulting control $\nabla \log \phi$ (shown in \autoref{fig:ring_viz}) differs greatly - the control learned by the relative loss correctly guides the system along the circle in the negative $x$ direction, while the other controls are not learned correctly for $x\geq 0$. The name 'relative loss' comes from the analogy with \emph{absolute} and \emph{relative} errors,

\begin{equation}
    |x-x^*| < \epsilon \quad \text{ vs. }\quad \left|\frac{x}{x^*}-1\right| < \epsilon \iff |\log x - \log x^*| < \epsilon + \cO(\epsilon^2).
\end{equation}

In essence, since the resulting control is given by $\nabla \log \phi$, the quantity of interest is the \emph{relative} error of the learned eigenfunction, while existing methods are designed to minimize the \emph{absolute} error. 
\begin{figure}
    \centering
    \includegraphics[width=0.8\linewidth]{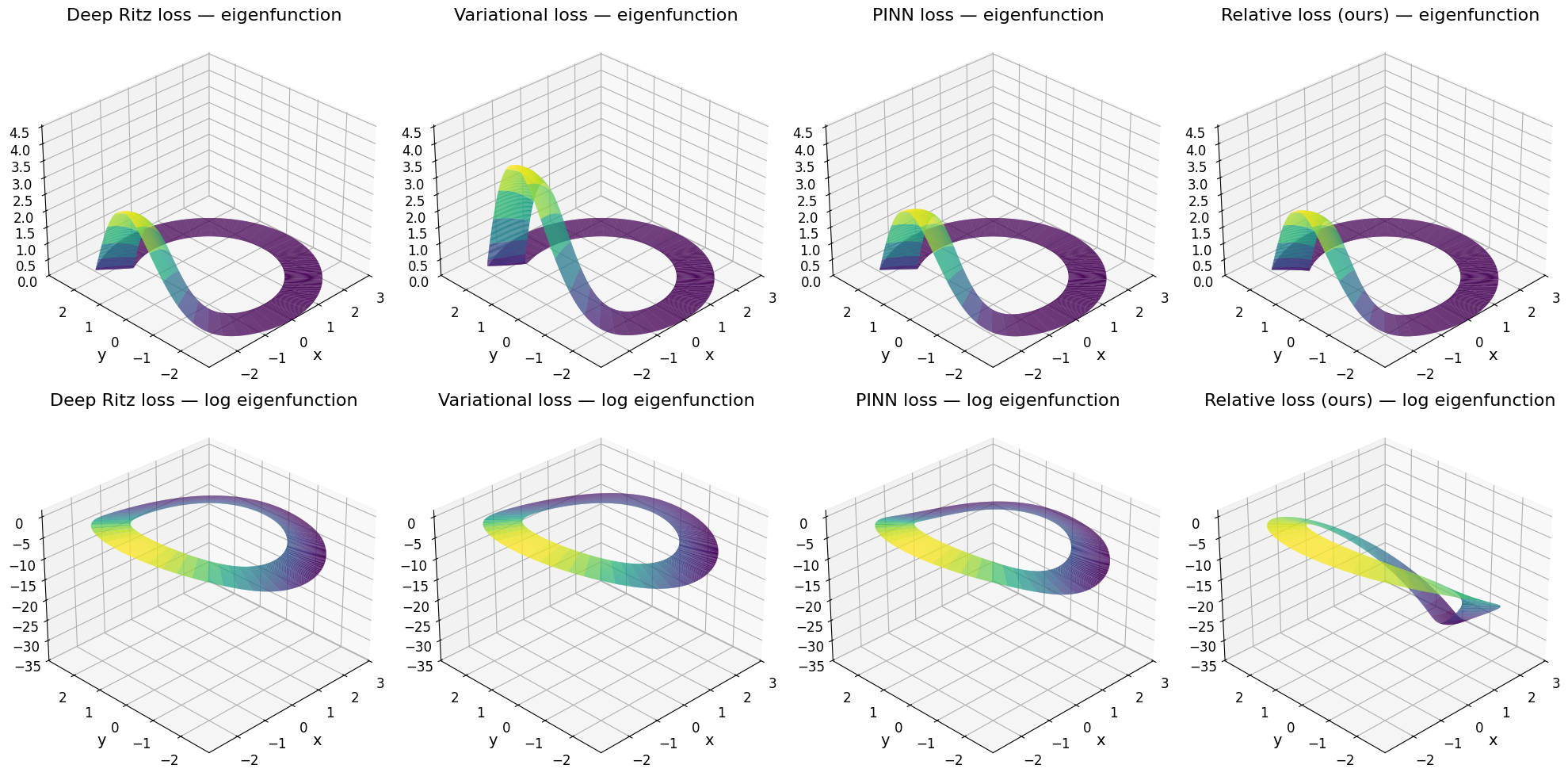}
    \caption{Learned eigenfunctions (top row) and their logarithms (bottom row) for the \textsc{ring} setting with different loss functions.}
    \label{fig:ring2d_eigf}
\end{figure}

\subsection{Details on \autoref{fig:timehorizon}}

We give here some details on the motivating plot shown in \autoref{fig:timehorizon}. The value shown is the control $L^2$ error of each of the different methods in the \textsc{quadratic (repulsive)} setting. Each algorithm was run for 30k iterations, and the values reported are the mean and 5\%-95\% quantiles over the last 1000 iterations. The \textsc{eigf+ido (ours)} label refers to the combined method, where we first train the eigenfunctions until convergence and then train using the relative entropy loss.
\subsection{Additional experiment: repulsive potential} 
\autoref{fig:lqr_hard} shows the same plots discussed in the main text for the \textsc{quadratic (repulsive)} setting, where we obtain similar results. The reported eigenfunction error is measured in $L^2(\overline\mu)$ instead of $L^2(\mu)$.

Additionally, for this experiment setting we have tested the performance of ``ergodic'' control estimator. It is based only on the first eigenfunction, i.e. $\beta^{-1} \nabla \log \phi_0^{\theta_0}$ is used for the whole time range. The performance of such an estimator is presented in \autoref{fig:lqr_hard_ergodic} by a curve labeled EIGF. It can be seen, that this approach reduces the error down with growth of time horizon $T$. However, using our proposed approach with control as in \eqref{eq:combcontrol}, which corresponds to EIGF+IDO curve, leads to a significant improvement.

\subsection{Control objectives}

As mentioned before, the control objective is the final metric for evaluating the performance of the different algorithms, but due to variance of the Monte Carlo estimator the difference between methods can be quite small. We report the control objectives for all experiments at convergence in \autoref{tab:objective_1} and \autoref{tab:objective_2}. The value reported is the mean value of the control objective over $N=65536$ simulations, and the error is the standard deviation of these estimates, divided by $\sqrt{N}$.

\begin{figure}
    \centering
    \includegraphics[width=0.32\linewidth]{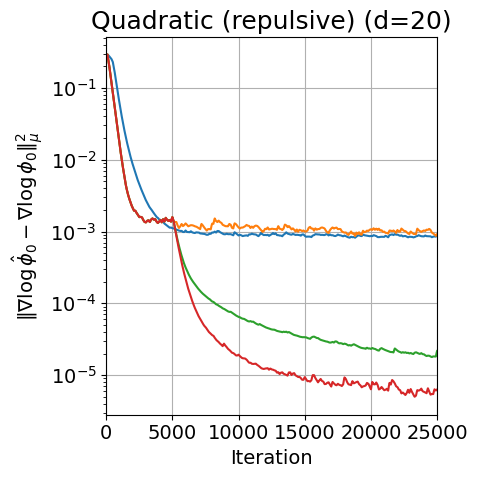}
    \includegraphics[width=0.32\linewidth]{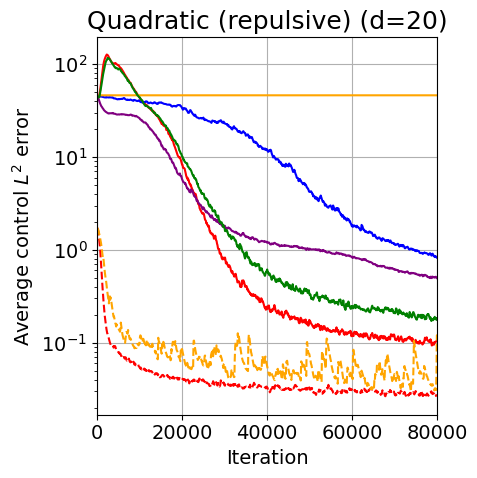}
    \includegraphics[width=0.32\linewidth]{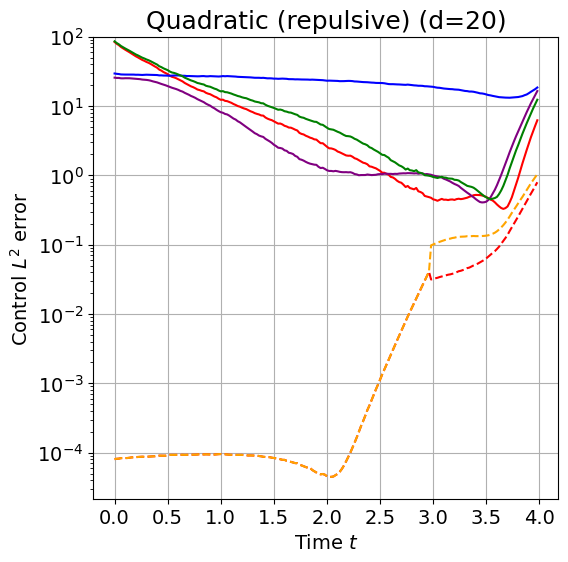}
    \caption{$L^2$ error of $\nabla \log \phi$ for different eigenfunctions (left), average control $L^2$ error for different methods (middle) and control $L^2$ error over time (right). Legend is the same as in \autoref{fig:eigfloss} and \autoref{fig:control_l2}.}
    \label{fig:lqr_hard}
\end{figure}

\begin{figure}
    \centering
    \includegraphics[width=0.5\linewidth]{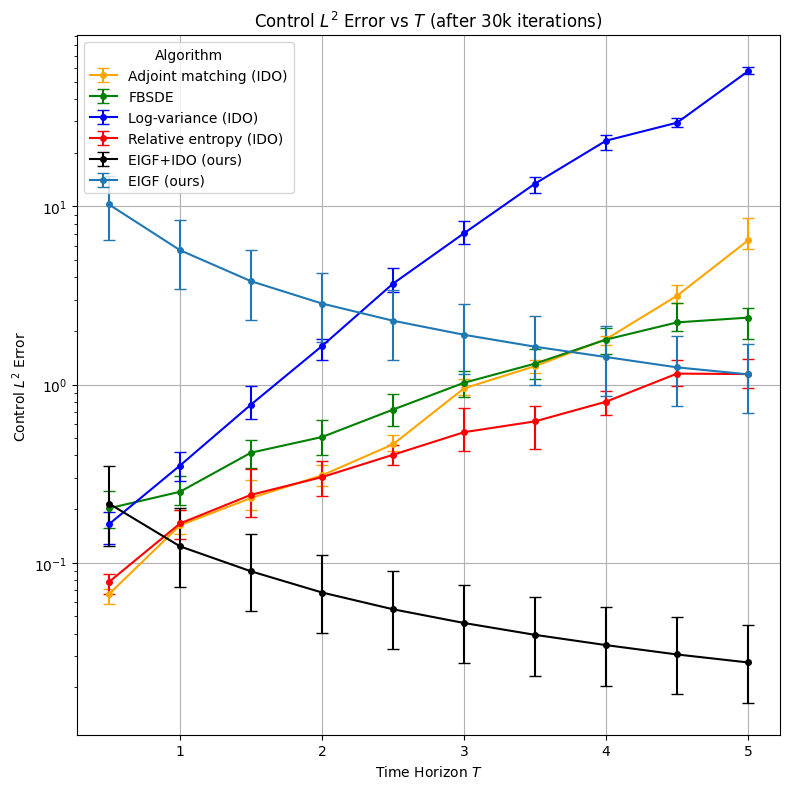}
    \caption{In this experiment EIGF method uses an ergodic estimator based only on the first eigenfunction. EIFG+IDO curve corresponds to the application of the proposed controller \eqref{eq:combcontrol} with Relative Entropy loss. The figure shows $L^2$ control error for different methods after 30000 iterations.}
    \label{fig:lqr_hard_ergodic}
\end{figure}

\begin{table}[h!]
\centering
\caption{Control objective for the different methods in the \textsc{quadratic (isotropic)} and \textsc{quadratic (repulsive)} settings. The SOCM method did not converge, and hence the dynamics diverge.}
\label{tab:objective_1}
\begin{tabular}{llrr}
\toprule
 &  & \textsc{quadratic (isotropic)} & \textsc{quadratic (repulsive)} \\
Method & Loss & & \\
\midrule
\multirow[t]{4}{*}{\textbf{IDO}}
 & \textbf{Relative entropy}   & 32.7870 $\pm$ 0.014 & 112.5172 $\pm$ 0.051 \\
 & \textbf{Log variance}       & 32.7850 $\pm$ 0.014 & 114.0552 $\pm$ 0.053 \\
 & \textbf{SOCM}               & 73.1062 $\pm$ 0.062 & nan $\pm$ nan \\
 & \textbf{Adjoint matching}   & 32.7754 $\pm$ 0.014 & 113.4554 $\pm$ 0.052 \\
\cline{1-4}
\multirow[t]{4}{*}{\textbf{COMBINED}}
 & \textbf{Relative entropy}   & 32.7763 $\pm$ 0.014 & \textbf{112.3444 $\pm$ 0.050} \\
 \textbf{(ours)}& \textbf{Log variance}       & 32.7740 $\pm$ 0.014 & 150.0721 $\pm$ 0.12 \\
 & \textbf{SOCM}               & \textbf{32.7717 $\pm$ 0.014} & 112.3960 $\pm$ 0.050 \\
 & \textbf{Adjoint matching}   & 32.7725 $\pm$ 0.014 & 114.7020 $\pm$ 0.055 \\
\cline{1-4}
\textbf{FBSDE}
 & \textbf{FBSDE}              & 32.7979 $\pm$ 0.014 & 112.6393 $\pm$ 0.051 \\
\cline{1-4}
\bottomrule
\end{tabular}
\end{table}

\begin{table}[h!]
\centering
\caption{Control objective for the different methods in the \textsc{quadratic (anisotropic)} and \textsc{double well} settings.}
\label{tab:objective_2}
\begin{tabular}{llrr}
\toprule
 &  & \textsc{quadratic (anisotropic)} & \textsc{double well} \\
Method & Loss & & \\
\midrule
\multirow[t]{4}{*}{\textbf{IDO}}
 & \textbf{Relative entropy}   & 38.9967 $\pm$ 0.022 & 35.2688 $\pm$ 0.010 \\
 & \textbf{Log variance}       & 31.3664 $\pm$ 0.016 & 32.8645 $\pm$ 0.0094 \\
 & \textbf{SOCM}               & 112.2728 $\pm$ 0.18  & 41.7215 $\pm$ 0.013 \\
 & \textbf{Adjoint matching}   & 31.3584 $\pm$ 0.016 & 34.8713 $\pm$ 0.010 \\
\cline{1-4}
\multirow[t]{4}{*}{\textbf{COMBINED}}
 & \textbf{Relative entropy}   & \textbf{31.3476 $\pm$ 0.016} & 32.6130 $\pm$ 0.0088 \\
 \textbf{(ours)}& \textbf{Log variance}       & 32.5115 $\pm$ 0.047 & 32.9080 $\pm$ 0.0088 \\
 & \textbf{SOCM}               & 31.3483 $\pm$ 0.016 & \textbf{32.4421 $\pm$ 0.0088} \\
 & \textbf{Adjoint matching}   & 31.3497 $\pm$ 0.016 & 32.5638 $\pm$ 0.0088 \\
\cline{1-4}
\textbf{FBSDE}
 & \textbf{FBSDE}              & 31.3854 $\pm$ 0.016 & 35.1890 $\pm$ 0.011 \\
\cline{1-4}
\bottomrule
\end{tabular}
\end{table}

\end{document}